%% file: main.tex
\documentclass{article}

\usepackage[numbers]{natbib}
\usepackage[top=20truemm,bottom=20truemm,left=25truemm,right=25truemm]{geometry}

\usepackage[utf8]{inputenc}
\usepackage{hyperref}
\usepackage{url}
\usepackage{amsfonts}
\usepackage{nicefrac}
\usepackage{microtype}
\usepackage{xcolor}

\title{Non-asymptotic implicit bias of logistic regression \\ at early-stage gradient descent dynamics}

\input{preamble}

\allowdisplaybreaks[4]

\author{%
  Han Bao \\
  The Institute of Statistical Mathematics \\ Tohoku University \\ RIKEN AIP \\
  \texttt{bao.han@ism.ac.jp}
}

\begin{document}

\maketitle

\begin{abstract}%
  Gradient descent has been of particular interest in modern machine learning beyond sole focus on optimization.
  Implicit bias emerging from optimization, though not being encoded by the learning objective, often prevents from overfitting to spurious patterns.
  A typical instance is the max-margin implicit bias of a linear classifier, widely established for exponentially tailed loss functions.
  Even after having a given dataset separated, the parameter vector continues to evolve towards the max-margin direction asymptotically along the gradient descent dynamics.
  This phenomenon corroborates a frequent empirical observation of ``train longer, generalize better.''
  However, the max-margin convergence is an asymptotic phenomenon, and what is worse, this asymptotic convergence rate is significantly slower than pure convex optimization.
  Even so, the parameter vector along gradient descent dynamics commonly correlates with the max-margin direction positively (though not exactly) within considerably fewer iterations than the asymptotic rate.
  By shedding another light on this classical problem, this work aims to understand the mechanism of this early-stage alignment phenomenon.
  Our theoretical results demonstrate that the parameter vector weakly aligns with the max-margin direction within $O(\exp(\exp(-\delta)))$ iterations, where $\delta>0$ is the permissible alignment error,
  which is shown to be tight.
  By tracking the radial and tangential flows, our proof operates on the alignment dynamics directly with dataset geometry and gets rid of the asymptotic expansion, which is a key insight to establishing faster weak alignment.
\end{abstract}

\section{Introduction}
We consider gradient descent with a binary linear classifier $f_\wbf\colon\xbf\mapsto\inpr{\wbf}{\xbf}$, defined by a parameter vector $\wbf\in\Rbb^d$.
Let $\ell:\Rbb\to\Rbb_{\ge0}$ be a convex nonincreasing loss function, typically the logistic or exponential losses, measuring the margin at a given input.
By introducing the training risk $L(\wbf)\defeq n^{-1}\sum_{i=1}^n\ell(\inpr{\wbf}{y_i\xbf_i})$ over a training dataset $\Scal\defeq\{(\xbf_i,y_i)\}_{i=1}^n$, gradient descent recursively operates by
\[
  \wbf({t+1}) \defeq \wbf(t) - \eta\nabla L(\wbf(t)), \quad \text{for $t=0,1,2,\dots$,}
\]
where $\eta>0$ is a fixed stepsize.
During the last decade, it has been shown that gradient descent asymptotically leads the parameter direction $\wbf(t)/\|\wbf(t)\|$ to the max-margin direction $\ubf_*\in\Sbb^{d-1}$ of the dataset $\Scal$.
This is known as the \emph{implicit bias} of gradient descent, which means that the max-margin convergence takes place even though the risk minimization problem does not apparently encode it.
The max-margin implicit bias has been proven by \citet{Soudry2018} under the linear separability condition, and extended to non-separable scenarios~\citep{Ji2019COLT}.
The max-margin convergence has implications on test performance through margin-based generalization bounds~\citep{Koltchinskii2002AOS,Shamir2021JMLR,Schliserman2022COLT} and robustness to input perturbation~\citep{Xu2009JMLR}, rendering it practically relevant.
However, the asymptotic alignment is notoriously slow, at a rate of $\tilde O(1/\log^2t)$, which is unsatisfactory when compared with the standard $O(1/t)$ optimization convergence rate of gradient descent on convex smooth functions.
This slow parameter alignment cannot be improved unless an aggressively large stepsize is used~\citep{Nacson2019AISTATS}, approaching normalized gradient descent.
Yet, can we expect faster parameter alignment for gradient descent in any sense?

\begin{figure}
  \centering
  \includegraphics[height=118pt]{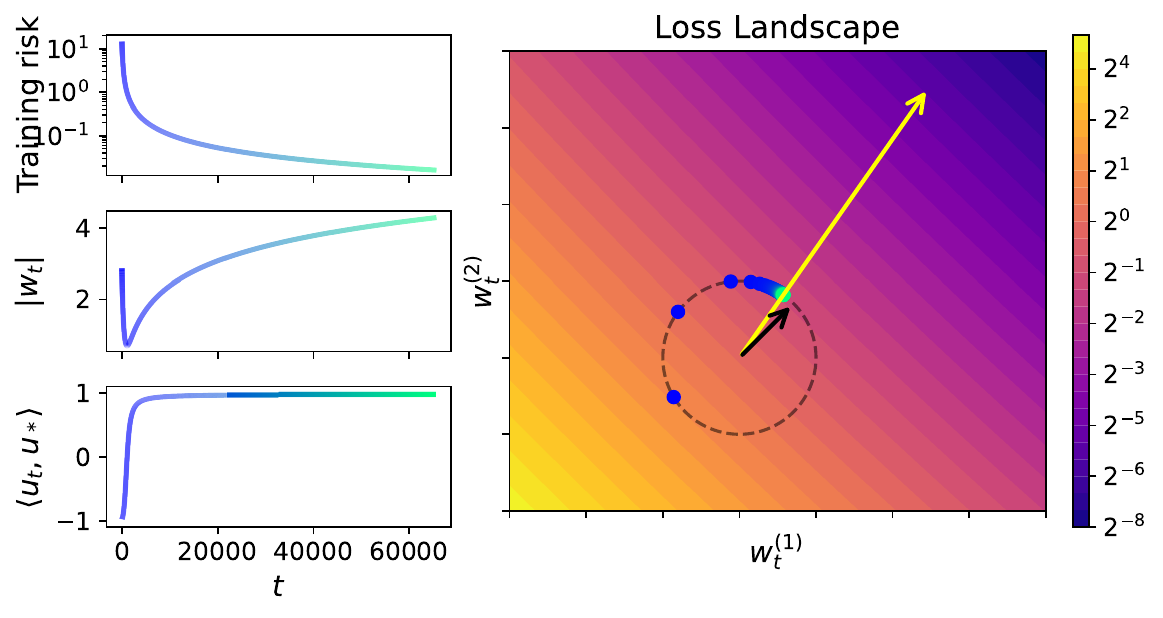}
  \includegraphics[height=120pt]{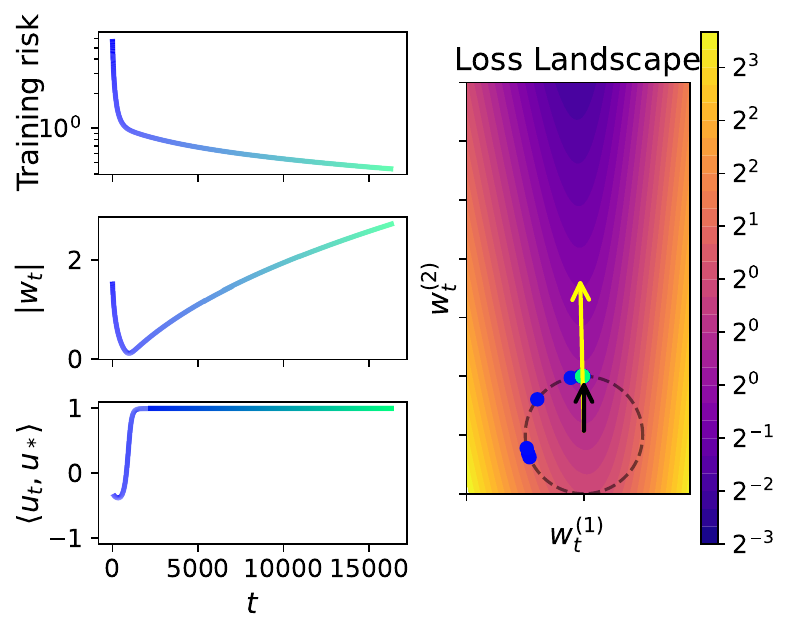}
  \caption{
    Simulation of gradient descent running with the exponential loss $\ell_\text{exp}(m)\defeq\exp(-m)$: \textbf{(Left)} a training dataset with larger margin; \textbf{(Right)} a training dataset with smaller margin.
    Each shows time evolution of the training risk $L(\wbf(t))$, the radial $\|\wbf(t)\|$, the parameter alignment $\inpr{\ubf(t)}{\ubf_*}$ (where $\ubf(t)\defeq\wbf(t)/\|\wbf(t)\|$), and the tangential trajectory (i.e., the trajectory of the normalized parameter $\ubf(t)$) in the loss landscape.
    In the loss landscapes, the dots are the tangential component $\ubf(t)$, and darker and lighter colors indicate earlier and later $t$, respectively;
    $\ubf(t)$ are plotted every $70$ and $50$ iteration for the left and right cases, respectively;
    the black and yellow arrows indicate the max-margin direction $\ubf_*$ and the last parameter $\wbf(T)$, respectively.
    In Appendix~\ref{section:simulation}, the unnormalized parameter trajectories $\wbf(t)$ are additionally shown, as well as the quantitative alignment time.
    The detailed simulation setup is shown there, too.
    \textbf{Key observation:} compared with the radial growth $\|\wbf(t)\|$, the parameter alignment $\inpr{\ubf(t)}{\ubf_*}$ saturates significantly faster, which can also be shown in the parameter trajectory since the tangential trajectory is swept within the early stage, depicted by the darker dots.
  }
  \label{fig:simulation}
\end{figure}

Consider the numerical simulation in Figure~\ref{fig:simulation}, where gradient descent is run for two distinct training datasets, one of which has a large margin (the left figure) and the other has a smaller margin (the right figure).
Denote $\ubf(t)\defeq\wbf(t)/\|\wbf(t)\|$ the tangential component of $\wbf(t)$.
In either case, the radial component $\|\wbf(t)\|$ exhibits slow growth, yet the parameter alignment $\inpr{\ubf(t)}{\ubf_*}$ grows much more rapidly.
This is additionally demonstrated by both parameter trajectories, where the tangential component $\ubf(t)$ is attracted to the max-margin direction $\ubf_*$ (the black arrow) within a remarkably short period, shown by the darker blue dots.
Readers may also refer to the unnormalized parameter trajectories shown in Appendix~\ref{section:simulation} to clearly see this transient behavior.
Whereas the theoretical alignment rate is $\inpr{\ubf(t)}{\ubf_*}=1-\tilde O(1/\log^2 t)$~\citep{Soudry2018}, this asymptotic analysis may not capture well the early-stage behavior.

The aim of this paper is to characterize this early-stage parameter dynamics, bringing forth a deeper understanding of the gradient descent path.
Specifically, our main result rigorously proves that it only takes $O(\exp(\exp(-\delta)))$ time to achieve weak alignment $\inpr{\ubf(t)}{\ubf_*}\ge1-\delta$.
This fast convergence is guaranteed only for moderately but not arbitrarily small $\delta=O(1)$---typically up to $\delta\sim1-\sqrt\gamma$ depending on the dataset margin $\gamma$, which is slightly and non-trivially better than the classical perceptron guarantee $\delta=1-\gamma$~\citep{Novikoff1962}.
In this sense, we call this \emph{weak} alignment, contrasted with \emph{perfect} alignment to guarantee the asymptotic closeness of $\ubf(t)$ to the max-margin direction $\ubf_*$.
We argue that weak alignment is the right notion and necessary compromise to characterize the fast early-stage dynamics observed in Figure~\ref{fig:simulation},
by showing the doubly-exponential time is tight.

\subsection{Setup}
\label{section:setup}
We write $a_t\lesssim b_t$ if there exists an absolute constant $C>0$ such that $a_t\leq Cb_t$ for all $t$.
In this paper, $\|\cdot\|$ denotes the $\ell_2$-norm unless otherwise noted.
The unit hypersphere embedded in $\Rbb^d$ is written as $\Sbb^{d-1}$.
The identity matrix is written as $I$.
The probability simplex is written as $\Delta^{n-1} \defeq \{\pbf\in\Rbb^n \mid \inpr{\pbf}{\onebf}=1\}$, where $\onebf\defeq[1,1,\dots,1]^\top\in\Rbb^n$.
Define $P_\vbf^\perp\in\Rbb^{d\times d}$ the projection operator onto the orthogonal complement to the subspace $\Span(\vbf)\subset\Rbb^d$,
namely, $P_\vbf^\perp\defeq I-\vbf\vbf^\top$.
Similarly, $P_\vbf \defeq \vbf\vbf^\top$ denotes the subspace projection onto $\Span(\vbf)$.
The training dataset $\Scal\defeq\{(\xbf_i,y_i)\}_{i=1}^n \subset \Rbb^d\times\{\pm1\}$ satisfies $\|\xbf_i\|\le1$ for each $i\in[n]$.
We consider linear classifiers trained with the unregularized empirical risk minimization:
\[
  \min_{\wbf\in\Rbb^d} L(\wbf) \defeq \frac1n\sum_{i=1}^n\ell(\inpr{\wbf}{\zbf_i}),
\]
where $\zbf_i\defeq y_i\xbf_i$ (satisfying $\|\zbf_i\|\le1$) and $\ell:\Rbb\to\Rbb_{\ge0}$ is a loss function.
The data points are represented by a matrix $Z\in\Rbb^{d\times n}$ storing each $\zbf_i$ in the column-wise order.
Throughout this paper, we focus on the exponential loss $\ell(m)\defeq e^{-m}$ to simplify the analysis;
yet we believe that the extension to exponentially-tailed loss functions is possible with more algebra.
The training risk is minimized via gradient descent $\wbf(t+1)=\wbf(t) - \eta\nabla L(\wbf(t))$ with a fixed stepsize $\eta>0$ throughout training.
If we fix $t$ and there is no confusion for a moment, we occasionally drop the $t$-dependency for notational simplicity.
For the dataset $\Scal$, let us consider the max-margin problem:
\begin{equation}
  \label{equation:max_margin}
  \min_{\wbf\in\Rbb^d} \frac12\|\wbf\|^2 \text{~~~s.t.~~~}
  \inpr{\wbf}{\zbf_i} \ge 1 \text{~~~for $i\in[n]$},
\end{equation}
and denote $\wbf_*\in\Rbb^d$ the max-margin solution to \eqref{equation:max_margin} and $\ubf_*\defeq\wbf_*/\|\wbf_*\|\in\Sbb^{d-1}$ the max-margin direction.
Correspondingly, we decompose the parameter $\wbf(t)=r(t)\cdot\ubf(t)$ into the radial component $r(t)\defeq\|\wbf(t)\|$ and tangential component $\ubf(t)\defeq\wbf(t)/r(t)\in\Sbb^{d-1}$.
We focus on the linearly separable case throughout the paper,
where the max-margin problem \eqref{equation:max_margin} has a feasible point and $\ubf_*$ gives its direction.
\begin{assumption}
  \label{assump:linearly_separable}
  The dataset $\Scal$ is linearly separable with a positive margin $\gamma>0$,
  namely, we have $\inpr{\ubf_*}{\zbf_i}\ge\gamma$ for all $i\in[n]$.
\end{assumption}
Under the above setup, we focus on the tangential alignment:
$V(t)\defeq1-\inpr{\ubf(t)}{\ubf_*}\in[0,2]$.

\subsection{Related work}
In the era of overparametrized models, classical statistical generalization is too pessimistic~\citep{Zhang2017ICLR},
and optimization and loss landscapes lead to better generalization by stabilizing learning trajectories in a long run~\citep{Hoffer2017NeurIPS,Lv2017ICML},
which inspires the study of implicit bias, questioning which solution an optimizer eventually converges to.
Classically, \citet{Soudry2018} reveals under the same setup as Section~\ref{section:setup} that gradient descent dynamics asymptotically leads the linear parameter to the max-margin direction induced by the $\ell_2$-norm, which is a prototypical expression of implicit bias.
Indeed, a similar max-margin implicit bias is classical for the regularization path~\citep{Rosset2003NeurIPS}.
Yet the convergence rate $V(t)=\tilde O(1/\log^2t)$ is shown to be tight, fundamentally limiting the max-margin implicit bias as an asymptotic behavior.
This slow behavior motivates our work, and we review it in detail in Section~\ref{section:review}.
The max-margin implicit bias has been extended to non-separable datasets~\citep{Ji2019COLT}, convolutional networks~\citep{Gunasekar2018NeurIPS}, general loss functions~\citep{Gunasekar2018ICML,Nacson2019AISTATS,Ji2020COLT,Ravi2024NeurIPS}, homogeneous networks~\citep{Ji2020ICLR,Lyu2020ICLR}, and transformers~\citep{Tarzanagh2023NeurIPS}.
In contrast, implicit bias has been generalized for other optimizers, including Adam~\citep{Zhang2024NeurIPS}, AdamW~\citep{Xie2024ICML}, steepest descent~\citep{Tsilivis2025ICLR}, mirror descent~\citep{Liang2025ICLR}, and spectral descent~\citep{Fan2025NeurIPS},
which typically leads to the max-margin direction with a different norm from $\|\cdot\|_2$.
In functional spaces, we can view implicit bias as linear spline interpolation from the variational perspective~\citep{Savarese2019COLT,Ongie2020ICLR,Ardeshir2023COLT}.
Moreover, the implicit bias can be used to describe feature learning~\citep{Woodworth2020COLT,Vasudeva2025NeurIPS}, where different initialization leads to implicit regularization enhancing feature selection.

The aforementioned studies on implicit bias are more or less ``static'' because their focus is an asymptotic solution of an optimizer.
To better control the training schedule, understanding of optimizer's dynamical behaviors matters.
In feature-learning scenarios, two-stage dynamics are often observed:
parameters seek signal directions in the first stage, and then the parameter norms grow to decrease the loss in the second stage~\citep{Cao2022NeurIPS,Boursier2022NeurIPS,Glasgow2024ICLR}.
This line of papers carefully characterizes the second-stage behavior to establish benign overfitting for two-layer neural networks; nevertheless, training points are already perfectly classified in the initial alignment stage.
Less is known for linear classifiers, while we empirically observe a clear early-stage behavior.
Recently, gradient descent dynamics of linear classifiers has been revisited with a large stepsize, where we observe a two-stage behavior even in the linear case~\citep{Wu2024COLT,Meng2024,Tyurin2025AAAI,Bao2025NeurIPS}.
Therein the dynamics exhibits a two-stage behavior, the initial oscillation stage seeking an approximate max-margin direction, and the late stabilization stage to converge asymptotically.
Our analysis is partially inspired by the two-stage behaviors of linear models with a large stepsize.
More discussion is deferred to Section~\ref{section:discussion}.

\subsection{Review of asymptotic gradient descent dynamics}
\label{section:review}
We review the implicit bias result by \citet{Soudry2018} in this section to describe how the gradient descent dynamics exhibits an asymptotically slow motion.
Consider the setup as in Section~\ref{section:setup} with the linear separability (Assumption~\ref{assump:linearly_separable}).
\begin{theorem}[{\citet{Soudry2018}}]
  \label{theorem:soudry}
  For any stepsize $\eta<1/L(\wbf(0))$ and initialization $\wbf(0)$, the tangential parameter component $\ubf(t)$ following the gradient descent dynamics converges to the max-margin direction $\ubf_*$ defined via \eqref{equation:max_margin} asymptotically.
  The asymptotic convergence rates are given as follows:
  \[
    \begin{cases}
      \text{Risk convergence:}       & L(\wbf(t))=O(1/t); \\
      \text{Radial divergence:}      & r(t)=\Theta(\log t); \\
      \text{Tangential convergence:} & V(t)=\tilde O(1/\log^2t).
    \end{cases}
  \]
\end{theorem}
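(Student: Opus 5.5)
The plan is to follow the classical route of \citet{Soudry2018}, specialized to the exponential loss $\ell(m)=e^{-m}$. It has three layers: risk convergence, radial growth, and then tangential convergence through a residual analysis.

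\textbf{Risk.} Since $\|\zbf_i\|\le1$, the Hessian satisfies $\nabla^2L(\wbf)=n^{-1}\sum_i e^{-\inpr{\wbf}{\zbf_i}}\zbf_i\zbf_i^\top\preceq L(\wbf)I$, so $L$ is locally $L(\wbf)$-smooth. Under $\eta<1/L(\wbf(0))$ a descent lemma gives
\[
L(\wbf(t+1))\le L(\wbf(t))-\tfrac{\eta}{2}\|\nabla L(\wbf(t))\|^2 .
\]
Hence $L$ is nonincreasing and the smoothness constant stays below $1/\eta$ along the whole trajectory. Separability gives
\[
\|\nabla L(\wbf)\|\ge\inpr{-\nabla L(\wbf)}{\ubf_*}\ge\gamma L(\wbf),
\]
so $L(t+1)\le L(t)-\tfrac{\eta\gamma^2}{2}L(t)^2$. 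Inverting this recursion yields $1/L(t)\ge 1/L(0)+\tfrac{\eta\gamma^2}{2}t$, that is, $L(\wbf(t))=O(1/t)$.

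\textbf{Radius.} For the lower bound, $L(\wbf)\ge n^{-1}e^{-\|\wbf\|}$ (because $\|\zbf_i\|\le1$) combined with $L=O(1/t)$ gives $r(t)\ge\log t-O(1)$. For the upper bound, the step size satisfies $\|\wbf(t+1)-\wbf(t)\|\le\eta\|\nabla L\|\le\eta L(\wbf(t))$, so summing $O(1/t)$ increments gives $r(t)\le O(\log t)$. Together these give $r(t)=\Theta(\log t)$.

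\textbf{Tangential convergence.} This is the main step and the main obstacle. Let $\wbf_*$ be the max-margin solution and $S$ its support set. By KKT, $\wbf_*=\sum_{i\in S}\alpha_i\zbf_i$ with $\alpha_i>0$. Choose $\tilde\wbf$ satisfying $\eta e^{-\inpr{\tilde\wbf}{\zbf_i}}/n=\alpha_i$ for $i\in S$; such a $\tilde\wbf$ exists generically, which is the non-degeneracy assumption used by Soudry et al. Write
\[
\wbf(t)=\wbf_*\log t+\tilde\wbf+\rbf(t).
\]
The goal is to show $\|\rbf(t)\|$ stays bounded, up to $\log\log t$ factors in degenerate cases, by proving $\|\rbf(t+1)\|^2-\|\rbf(t)\|^2\le C_t$ with $\sum_t C_t<\infty$. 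Expanding $\rbf(t+1)-\rbf(t)=-\eta\nabla L(\wbf(t))-\wbf_*\log(1+1/t)$, the cross term splits into two parts. The support-vector part equals
\[
\sum_{i\in S}\alpha_i t^{-1}\bigl(e^{-\inpr{\rbf}{\zbf_i}}-1\bigr)\inpr{\rbf}{\zbf_i},
\]
which is nonpositive because $(e^{-x}-1)x\le0$. The non-support part is $O(t^{-\gamma'/\gamma})$ with $\gamma'>\gamma$ the second-smallest margin, and is therefore summable. Once $\|\rbf(t)\|$ is controlled, it follows that $\ubf(t)=\ubf_*+O(1/\log t)$, which is the tangential convergence. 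Squaring this normal error, with the log factor coming from degenerate support sets, gives $V(t)=1-\inpr{\ubf(t)}{\ubf_*}=\tilde O(1/\log^2t)$. The delicate points are the summability of the second-order terms, which needs $\eta L<1$ uniformly (already secured by the risk step), and the bookkeeping for the degenerate case when $\tilde\wbf$ is not uniquely defined on $\Span\{\zbf_i\}_{i\in S}$.
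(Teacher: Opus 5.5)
Your proposal is correct in the generic, non-degenerate case, which is also all the paper addresses. It follows the same skeleton as the paper's sketch, but it fills in the step the sketch takes for granted.

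For risk and radius you use the same ingredients as Lemma~\ref{lemma:l_r_bounds}: self-bounding $\gamma L\le\|\nabla L\|\le L$, inverting the recursion for $1/L$, $L(\wbf)\ge e^{-\|\wbf\|}$, and summing $|\Delta r|\le\eta L$. You work in discrete time via a descent lemma, whereas the paper uses the flow identity $\dot L=-\|\nabla L\|^2$. Since the statement concerns gradient descent with $\eta<1/L(\wbf(0))$, your version is the one that actually matches it.

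For the tangential part, the paper assumes $\ubf(t)\to\ubf_*$, writes $\wbf(t)=\ubf_*\log t+\xibf(t)$ with only $\|\xibf(t)\|=o(\log t)$, and Taylor-expands the normalization. That expansion gives $V=O(\|P_{\ubf_*}^\perp\xibf\|^2/\log^2t)$, so the claimed $O(1/\log^2t)$ silently requires the remainder to be bounded. Also, the leading term should be $\wbf_*\log t$ with $\|\wbf_*\|=1/\gamma$, not $\ubf_*\log t$. Your potential argument for $\mathbf{r}(t)=\wbf(t)-\wbf_*\log t-\widetilde{\wbf}$ supplies exactly this boundedness:
the support-vector cross term is nonpositive since $(e^{-x}-1)x\le0$; the non-support term is $O(t^{-\gamma'/\gamma})$; and the second-order terms are summable because $\sum_t\|\nabla L(\wbf(t))\|^2<\infty$. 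The identity $V=\tfrac12\|\ubf-\ubf_*\|^2$ then replaces the expansion more cleanly. The paper's route buys brevity and shows where asymptotic analysis hides constants; yours is an actual proof of the rate.

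Three details need tightening.

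\textbf{Descent lemma.} A Hessian bound $\nabla^2L(\wbf)\preceq L(\wbf)I$ at the current iterate does not by itself give the descent lemma. You need it along the whole segment to $\wbf(t+1)$, which follows by continuity. On any initial piece of the segment where $L\le L(\wbf(t))$, the quadratic bound at step length $s\le\eta$ is at most $L(\wbf(t))-\tfrac{s}{2}\|\nabla L(\wbf(t))\|^2$, because $\eta L(\wbf(t))<1$. Hence that piece extends to the endpoint.

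\textbf{Genericity.} KKT gives only $\alpha_i\ge0$. Strict positivity on $S$ and solvability of $\inpr{\widetilde{\wbf}}{\zbf_i}=\log(\eta/(n\alpha_i))$ for $i\in S$ are precisely the genericity assumptions. Their failure is what produces the $\log\log t$ remainder behind the tilde, and both you and the paper defer that case.

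\textbf{Cross-term error.} Replacing $\log(1+1/t)$ by $1/t$ in the cross term leaves an $O(\|\mathbf{r}(t)\|/t^2)$ error. This is summable because your radius step already gives $\|\mathbf{r}(t)\|=O(\log t)$.
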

Whereas the risk converges reasonably in $O(1/t)$ as in typical convex smooth (but not strongly convex) optimization,
the tangential component $\ubf(t)$ converges to the max-margin direction $\ubf_*$ very slowly.
The polylogarithmic rate is translated into the asymptotic tangential convergence time $t=T_\infty=\tilde O(\exp(\delta^{-1/2}))$.

Here we briefly illustrate the proof sketch of the convergence rates to highlight obstacles that we need to overcome.
The risk convergence and radial divergence are easy to see from our structural results in Lemma~\ref{lemma:l_r_bounds}, which we show later in Section~\ref{section:structure}.
Suppose that we have proven them and focus on the tangential convergence.
Given the asymptotic convergence $\ubf(t)\to\ubf_*$ and the radial divergence rate,
we can write $\wbf(t)=\ubf_*\log t+\xibf(t)$ with a vanishing remainder $\|\xibf(t)\|=o(\log t)$.
Then the tangential component $\ubf=\wbf/\|\wbf\|$ admits the following asymptotic expansion:
\[
  \begin{aligned}
    \ubf = \frac{\wbf}{\|\wbf\|}
    &= \frac{\ubf_* + \xibf\cdot(\log t)^{-1}}{\sqrt{1 + 2\inpr{\ubf_*}{\xibf}\cdot(\log t)^{-1} + \|\xibf\|^2(\log t)^{-2}}} \\
    &= \left(\ubf_*+\frac{\xibf}{\log t}\right)\left[1 - \frac{\inpr{\ubf_*}{\xibf}}{\log t} + \frac{3\inpr{\ubf_*}{\xibf}^2-\|\xibf\|^2}{2\log^2t} - O\left(\frac{1}{\log^3t}\right)\right] \\
    &= \ubf_* + P_{\ubf_*}^\perp\xibf\cdot(\log t)^{-1} - O(1/\log^2t),
  \end{aligned}
\]
where we used the Taylor expansion $\frac{1}{\sqrt{1+x}}=1-\frac12x+\frac38x^2+O(x^3)$.
By noting $\langle{\ubf_*},{P_{\ubf_*}^\perp\xibf}\rangle=0$, we have $\inpr{\ubf}{\ubf_*}=1-O(1/\log^2t)$,
where an additional polylogarithmic factor may arise under a pathological case.
Thus, we verify the tangential convergence rate $V(t)=\tilde O(1/\log^2t)$.

While being tight in large $t$, a caveat of this analysis is the application of the asymptotic expansion,
and thus non-asymptotic factors are hidden in the remainder $O(1/\log^2t)$.
Our subsequent analysis directly operates on the tangential alignment $V(t)$, instead of asymptotically expanding it.

\section{Structural properties on radial and tangential flows}
\label{section:structure}

First, we consider the gradient flow $\dot\wbf(t) = -\nabla L(\wbf(t))$
to focus on the fundamental structure of the dynamics,
where we use the same $t$ to denote time instead of discrete steps with an abuse of notation.
The dynamics decomposition (Lemma~\ref{lemma:decomposition}) and radial bounds (Lemma~\ref{lemma:l_r_bounds}) are not technically novel, but we show their concise proofs to let readers better understand the preliminary.
The discrete-time dynamics is analyzed later in Section~\ref{section:discrete} by controlling an extra discretization error via the stepsize choice.
We initialize the parameter with $\|\wbf(0)\|=\rho$ for some $\rho>\gamma^{-2}\log L(0)$.%
\footnote{
  This condition on $\rho$ ensures that the polar decomposition exists:
  $\ubf(0)$ is well-defined, and $r(t)>0$ holds across $t>0$, which is indirectly shown in the proof of Theorem~\ref{lemma:lyapunov_lower_bound}---%
  the radial flow satisfies $r(t)>\underline{r}=r(T_0)$ therein.
}
Let us decompose the parameter dynamics into the radial and tangential components.
\begin{lemma}[Radial/tangential flows]
  \label{lemma:decomposition}
  For $t\ge 0$, the gradient flow $\dot\wbf(t) = -\nabla L(\wbf(t))$ can be decomposed as follows:
  \begin{equation}
    \label{equation:decomposition}
    \dot r(t) = -\inpr{\ubf(t)}{\nabla L(\wbf(t))}, \qquad
    \dot\ubf(t) = -\frac{1}{r(t)}P_{\ubf(t)}^\perp\nabla L(\wbf(t)).
  \end{equation}
\end{lemma}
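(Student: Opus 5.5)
The plan is a direct differentiation of the polar decomposition $\wbf(t)=r(t)\ubf(t)$, carried out on any interval where $r(t)>0$ so that $r=\|\wbf\|$ and $\ubf=\wbf/r$ are differentiable. The footnote preceding the lemma, together with the choice $\|\wbf(0)\|=\rho$, guarantees that such an interval contains all $t\ge0$. I treat this positivity as given and only perform the chain-rule computation.

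\textbf{Radial flow.} Since $r^2=\inpr{\wbf}{\wbf}$, differentiating gives $2r\dot r=2\inpr{\wbf}{\dot\wbf}$. Dividing by $r$ and substituting the flow $\dot\wbf=-\nabla L(\wbf)$ yields
\[
  \dot r=\inpr{\ubf}{\dot\wbf}=-\inpr{\ubf}{\nabla L(\wbf)},
\]
which is the first identity in \eqref{equation:decomposition}.

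\textbf{Tangential flow.} I differentiate $\ubf=\wbf/r$ with the quotient rule:
\[
  \dot\ubf=\frac{\dot\wbf}{r}-\frac{\wbf\,\dot r}{r^2}=\frac1r\bigl(\dot\wbf-\ubf\inpr{\ubf}{\dot\wbf}\bigr)=\frac1r(I-\ubf\ubf^\top)\dot\wbf .
\]
The second equality substitutes the expression for $\dot r$ obtained above. The last operator is exactly $P_{\ubf}^\perp$. Inserting $\dot\wbf=-\nabla L(\wbf)$ then gives the second identity.

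As a consistency check, $\inpr{\ubf}{\dot\ubf}=0$, as it must be for a curve on $\Sbb^{d-1}$. The only delicate point is the non-vanishing of $r$, which keeps the decomposition well-defined. This is not proved here; it is deferred, as the footnote states, to the lower bound $r(t)>\underline r$ established later. The computation itself presents no obstacle.
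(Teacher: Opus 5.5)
Your proof is correct and is essentially the paper's argument: both differentiate the polar decomposition $\wbf=r\ubf$, obtain $\dot r=\inpr{\ubf}{\dot\wbf}$, and then isolate $\dot\ubf=\frac1r(I-\ubf\ubf^\top)\dot\wbf$. The only difference is cosmetic: you get $\dot r$ by differentiating $r^2=\|\wbf\|^2$, while the paper takes the inner product of $\dot\wbf=\dot r\,\ubf+r\dot\ubf$ with $\ubf$ and uses $\inpr{\ubf}{\dot\ubf}=0$; like the paper, you rightly defer the positivity of $r(t)$ to the footnote's later argument.
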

\begin{proof}
  We have by the product rule that
  $\dot\wbf = \dot r \ubf + r \dot\ubf$.
  Taking the inner product with $\ubf$ on both sides and using $\|\ubf\|=1$ and $\inpr{\ubf}{\dot\ubf}=0$ (which follows from differentiating $\|\ubf\|^2=1$), we obtain
  \[
    \inpr{\ubf}{\dot\wbf} = \dot r \inpr{\ubf}{\ubf} + r \inpr{\ubf}{\dot\ubf} = \dot r.
  \]
  Since $\dot\wbf = -\nabla L(\wbf)$, this yields $\dot r = -\inpr{\ubf}{\nabla L(\wbf)}$.

  For the tangential flow, we rearrange the product rule to get
  \[
    \dot\ubf = \frac{1}{r}\bigl(\dot\wbf - \dot r \ubf\bigr)
    = \frac1r\bigl[-\nabla L(\wbf) + \inpr{\ubf}{\nabla L(\wbf)} \ubf\bigr]
    = -\frac1r\bigl(I-\ubf\ubf^\top\bigr)\nabla L(\wbf)
    = -\frac1rP_{\ubf}^\perp\nabla L(\wbf),
  \]
  where we used $\ubf\ubf^\top \nabla L(\wbf) = \inpr{\ubf}{\nabla L(\wbf)}\ubf$.
\end{proof}
Next, we provide upper and lower bounds on $L(\wbf(t))$ and $r(t)$.
Hereafter, we frequently use the exponential weighting $\alphabf(t)\in\Delta^{n-1}$,
where each data point $i\in[n]$ is weighted by $\alpha_i(t)\defeq e^{-\inpr{\wbf(t)}{\zbf_i}}/\sum_je^{-\inpr{\wbf(t)}{\zbf_j}}$.
\begin{lemma}[Risk/radial bounds]
  \label{lemma:l_r_bounds}
  For $t\ge0$, we have the following bounds:
  \[
    \begin{cases}
      \text{Risk estimates:}
      & \frac{1}{L(0)^{-1}+t} \le L(\wbf(t)) \le \frac{1}{L(0)^{-1}+\gamma^2 t}, \\
      \text{Radial estimates:}
      & \rho-\frac{1}{\gamma^2}\log(1+L(0)\gamma^2t) \le r(t) \le \rho+\frac{1}{\gamma^2}\log(1+L(0)\gamma^2t), \\
      \text{Radial estimates (refined):}
      & \log(L(0)^{-1}+\gamma^2 t) \le r(t).
    \end{cases}
  \]
\end{lemma}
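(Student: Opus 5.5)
We will obtain all three estimates from a differential inequality for $L(t)\defeq L(\wbf(t))$ along the gradient flow, and then transfer it to $r(t)$ through the radial flow of Lemma~\ref{lemma:decomposition}. For the exponential loss, $\nabla L(\wbf)=-\frac1n\sum_i e^{-\inpr{\wbf}{\zbf_i}}\zbf_i=-L(\wbf)\,Z\alphabf$, with $\alphabf\in\Delta^{n-1}$ the exponential weighting. Hence
\[
\dot L=-\|\nabla L\|^2=-L^2\|Z\alphabf\|^2 .
\]
Two bounds on $\|Z\alphabf\|$ will be used. For the upper bound, $\|Z\alphabf\|\le\sum_i\alpha_i\|\zbf_i\|\le1$. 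For the lower bound, $\|Z\alphabf\|\ge\inpr{\ubf_*}{Z\alphabf}=\sum_i\alpha_i\inpr{\ubf_*}{\zbf_i}\ge\gamma$ by Assumption~\ref{assump:linearly_separable}. Therefore $-L^2\le\dot L\le-\gamma^2L^2$, i.e.\ $\gamma^2\le\frac{d}{dt}(1/L)\le1$. Integrating from $0$ to $t$ gives $L(0)^{-1}+\gamma^2t\le L(t)^{-1}\le L(0)^{-1}+t$, which is exactly the risk estimate.

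For the first radial estimates, Lemma~\ref{lemma:decomposition} gives $|\dot r|=|\inpr{\ubf}{\nabla L}|\le\|\nabla L\|$, so $|r(t)-\rho|\le\int_0^t\|\nabla L(s)\|\,ds$. The step that needs care is controlling this integral so that it yields $\gamma^{-2}\log(1+L(0)\gamma^2t)$ rather than a cruder $\sqrt t$-type bound. We write
\[
\|\nabla L\|=\frac{\|\nabla L\|^2}{\|\nabla L\|}=\frac{-\dot L}{L\|Z\alphabf\|}\le\frac{-\dot L}{\gamma L},
\]
so that $\int_0^t\|\nabla L\|\le\gamma^{-1}\log\bigl(L(0)/L(t)\bigr)$. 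Inserting the lower bound $L(t)\ge(L(0)^{-1}+t)^{-1}$ gives $\gamma^{-1}\log(1+L(0)t)$. This has a different constant from the stated one, so we use an alternative splitting, $\|\nabla L\|=L\|Z\alphabf\|\le L\le(L(0)^{-1}+\gamma^2t)^{-1}$. Integrating then gives
\[
\int_0^t\|\nabla L(s)\|\,ds\le\gamma^{-2}\log(1+L(0)\gamma^2t),
\]
which matches the claim exactly. Hence $\rho-\gamma^{-2}\log(1+L(0)\gamma^2t)\le r(t)\le\rho+\gamma^{-2}\log(1+L(0)\gamma^2t)$.

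For the refined lower bound we avoid the flow entirely. For any $\wbf$, the minimum margin satisfies $\min_i\inpr{\wbf}{\zbf_i}\le\|\wbf\|\max_i\|\zbf_i\|\le r$, so
\[
L(\wbf)\ge\frac1n\sum_i e^{-r\|\zbf_i\|}\ge e^{-r}.
\]
Thus $r(t)\ge\log\bigl(1/L(t)\bigr)\ge\log(L(0)^{-1}+\gamma^2t)$ by the risk upper bound. The only subtle point throughout is well-definedness of $\ubf$, i.e.\ $r>0$. The refined bound shows $r(t)\ge\log L(0)^{-1}$; if that quantity could be nonpositive, we instead rely on the stated initialization condition on $\rho$ and the first radial lower bound on the relevant time window. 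Since the estimates for $L$ and for $\int_0^t\|\nabla L\|$ are derived directly from $\wbf$ and never use $\ubf$, this causes no circularity.
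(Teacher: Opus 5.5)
Your proof is correct and follows the paper's argument essentially step for step: you integrate the self-bounding sandwich $\gamma L\le\|\nabla L\|\le L$ through $\frac{d}{dt}(1/L)$ for the risk estimates, integrate $|\dot r|\le\|\nabla L\|\le L\le(L(0)^{-1}+\gamma^2t)^{-1}$ for the radial estimates, and combine $L\ge e^{-r}$ with the risk upper bound for the refined estimate. Your discarded detour, $|r(t)-\rho|\le\gamma^{-1}\log(1+L(0)t)$, is also valid and is actually tighter for large $t$ when $\gamma<1$, though the stated lemma does not need it.
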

\begin{proof}
  We use $L(t)$ as the shorthand of $L(\wbf(t))$.
  First, we derive the upper bound of $L(t)$.
  By expanding $\|\nabla L(t)\|$, we have
  \begin{equation}
    \label{equation:self_bounding}
    \|\nabla L(t)\|
    = \frac1n\left\|\sum_{i=1}^ne^{-\inpr{\wbf(t)}{\zbf_i}}\zbf_i\right\|
    = L(t)\left\|\sum_{i=1}^n\alpha_i(t)\zbf_i\right\|
    = L(t)\|Z\alphabf(t)\| \in [\gamma L(t), L(t)],
  \end{equation}
  where $\gamma\le|\inpr{\ubf_*}{Z\alphabf}|\le\|Z\alphabf\|\le1$ is used.
  We often call the inequality $\gamma L\le\|\nabla L\|\le L$ in~\eqref{equation:self_bounding} the \emph{self-bounding property} of a loss function.
  Readers should keep this in mind since we repeatedly use it.
  To derive the bounds of $L(t)$, we evaluate its time derivative
  \[
    \dot L(t)
    = -\frac1n\sum_{i=1}^ne^{-\inpr{\wbf(t)}{\zbf_i}}\inpr{\dot\wbf(t)}{\zbf_i}
    = -\|\nabla L(t)\|^2,
  \]
  which together with \eqref{equation:self_bounding} yields the differential inequality $-L(t)^2\le\dot L(t)\le-\gamma^2L(t)^2$.
  After dividing all by $L(t)^2$ and integrating from $0$ to $t$, we obtain the bounds of $L(t)$.

  To derive the bounds of $r(t)$, we expand the radial flow in \eqref{equation:decomposition} as follows:
  \[
    \dot r(t)
    = -\inpr{\ubf(t)}{\nabla L(t)}
    = \frac1n\sum_{i=1}^ne^{-\inpr{\wbf(t)}{\zbf_i}}\inpr{\ubf(t)}{\zbf_i}
    = L(t)\inpr{\ubf(t)}{Z\alphabf(t)}.
  \]
  Then we have the differential inequality $-L(t)\le\dot r(t)\le L(t)$,
  where the Cauchy--Schwarz inequality $|\inpr{\ubf}{Z\alphabf}|\le\|Z\alphabf\|\le1$ is used.
  By plugging $L(t)\le\frac{1}{L(0)^{-1}+\gamma^2t}$ and integrating from $0$ to $t$, we obtain the bounds of $r(t)$.

  Finally, we derive the last lower bound of $r(t)$.
  Noting $\inpr{\wbf(t)}{\zbf_i}=r(t)\inpr{\ubf(t)}{\zbf_i}\le r(t)$, we have $L(t) \ge e^{-r(t)}$.
  Combining with $L(t)\le\frac{1}{L(0)^{-1}+\gamma^2t}$, we have $r(t)\ge\log(L(0)^{-1}+\gamma^2t)$.
\end{proof}
The radial estimate $r(t)\ge \rho-\frac{1}{\gamma^2}\log(1+L(0)\gamma^2t)$ is vacuous with large $t$, and the refined estimate $r(t)\ge\log(L(0)^{-1}+\gamma^2t)$ is then effective.
This is particularly used in the lower bound (Theorem~\ref{lemma:lyapunov_lower_bound}).

\section{Main result}
\label{section:main}
To analyze the tangential alignment non-asymptotically, let us derive the differential inequality:
\begin{equation}
  \label{equation:dot_v}
  \dot V(t)
  = \frac{1}{r(t)}\inpr{\nabla L(\wbf(t))}{P_{\ubf(t)}^\perp\ubf_*}
  = -\frac{L(\wbf(t))}{r(t)}\inpr{Z\alphabf(t)}{P_{\ubf(t)}^\perp\ubf_*},
\end{equation}
where we applied the tangential flow in~\eqref{equation:decomposition} to $V(t)=1-\inpr{\ubf(t)}{\ubf_*}$.
On the right-hand side of~\eqref{equation:dot_v}, $L(\wbf(t))$ and $r(t)$ can be easily handled by using Lemma~\ref{lemma:l_r_bounds},
yet the term $\inpr{Z\alphabf}{P_{\ubf}^\perp\ubf_*}$ requires more convoluted treatment.
Nonetheless, \emph{we can make an ansatz $\inpr{Z\alphabf}{P_{\ubf}^\perp\ubf_*}>\kappa$ for some $\kappa>0$}, which we rigorously justify later in Lemma~\ref{lemma:uniform_bound}.
The geometric intuition behind this ansatz is as follows---%
the exponentially-weighted data $Z\alphabf$ should positively correlate with $\ubf_*$.
This correlation persists even with $P_{\ubf}^\perp\ubf_*$, the projected $\ubf_*$ onto $\Span(\ubf)^\perp$, unless $\ubf$ and $\ubf_*$ are nearly antipodal.
With this ansatz, the non-asymptotic alignment is relatively easy to establish.
\begin{lemma}[Weak alignment under ansatz]
  \label{lemma:alignment}
  For an absolute constant $\kappa>0$, assume that the following uniform bound is valid for $t\in[0,T_1]$:
  \begin{equation}
    \label{equation:lyapunov_uniform_bound}
    \inpr{Z\alphabf(t)}{P_{\ubf(t)}^\perp\ubf_*} \ge \kappa.
  \end{equation}
  Then, for $t\in[0,T_1]$, we have
  \begin{equation}
    \label{equation:v_ub}
    V(t) \le
    V(0) - \frac{\kappa\gamma^2}{(1+\gamma^2)(\rho\gamma^2+\log 2)} - \frac{\kappa\gamma^2}{2}\log\left(1+\frac{\log(L(0)\gamma^2 t)}{\rho\gamma^2+\log2}\right).
  \end{equation}
  In this case, $V(t)\le\delta$ is attained after $T_1=O(\exp(\exp(-\delta/\kappa\gamma^2)))$ time.
\end{lemma}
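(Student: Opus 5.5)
Under the ansatz~\eqref{equation:lyapunov_uniform_bound}, the differential identity~\eqref{equation:dot_v} gives
\[
  \dot V(t) \le -\kappa\frac{L(\wbf(t))}{r(t)}, \qquad t\in[0,T_1].
\]
The plan is to integrate this inequality. We need a lower bound on $L$ and an upper bound on $r$, both taken from Lemma~\ref{lemma:l_r_bounds}:
\[
  L(\wbf(t)) \ge \frac{1}{L(0)^{-1}+t}, \qquad
  r(t)\le \rho+\gamma^{-2}\log(1+L(0)\gamma^2 t).
\]
The lower bound on $L$ is loose by a factor $\gamma^2$ in $t$ relative to the upper bound. To match the two, I will use $L(0)^{-1}+t\le \gamma^{-2}(L(0)^{-1}+\gamma^2t)$ when $\gamma\le1$, or keep track of $(1+\gamma^2)$-type constants. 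Writing $s\defeq L(0)\gamma^2 t$, these bounds combine into
\[
  \frac{L}{r} \gtrsim \frac{\gamma^2\cdot \gamma^2 L(0)}{(1+s)\,(\rho\gamma^2+\log(1+s))}.
\]
This is exactly the derivative of $\log(\rho\gamma^2+\log(1+s))$ with respect to $t$, up to constants. Integrating from $0$ to $t$ therefore gives a decrement of order
\[
  \kappa\gamma^2\log\Bigl(1+\frac{\log(1+s)}{\rho\gamma^2}\Bigr).
\]

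To recover the precise form~\eqref{equation:v_ub}, I will split the time integral at the point where $s=1$, i.e.\ $t=(L(0)\gamma^2)^{-1}$. On the initial interval $[0,(L(0)\gamma^2)^{-1}]$ I bound $\log(1+s)\le\log2$ and $1+s\le 2$, together with the crude loss bound $L\ge 1/(L(0)^{-1}+t)$. The integral over this interval is then at least a constant, which produces the term
\[
  \frac{\kappa\gamma^2}{(1+\gamma^2)(\rho\gamma^2+\log2)}.
\]
Here the factor $1+\gamma^2$ comes from $L(0)^{-1}+t\le L(0)^{-1}(1+\gamma^{-2})$ at the endpoint. On the remaining interval I use $1+s\le 2s$ and $\log(1+s)\le \log 2+\log s$. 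Substituting $u=\log s$, the integrand becomes $\frac{\kappa\gamma^2}{2}\cdot\frac{du}{\rho\gamma^2+\log2+u}$, which integrates to the term
\[
  \frac{\kappa\gamma^2}{2}\log\Bigl(1+\frac{\log(L(0)\gamma^2t)}{\rho\gamma^2+\log2}\Bigr).
\]
The factor $\tfrac12$ comes from $1+s\le 2s$ combined with the $\gamma^2$ mismatch in the loss bounds.

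For the time complexity, set the right-hand side of~\eqref{equation:v_ub} to at most $\delta$ and solve for $t$. Since $V(0)\le2$ is a constant, the condition reads
\[
  \log\Bigl(1+\frac{\log(L(0)\gamma^2t)}{c}\Bigr)\ge \frac{2(V(0)-\delta)}{\kappa\gamma^2},
\]
with $c=\rho\gamma^2+\log2$. Exponentiating twice gives
\[
  t\approx (L(0)\gamma^2)^{-1}\exp\bigl(c\,(e^{2(V(0)-\delta)/\kappa\gamma^2}-1)\bigr),
\]
which is $O(\exp(\exp(-\delta/\kappa\gamma^2)))$ after absorbing $V(0)$, $\rho$ and $L(0)$ into constants.

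The main obstacle I expect is the bookkeeping of constants in the split integral. In particular, the mismatch between the two loss bounds, $t$ versus $\gamma^2 t$, must be reconciled so that the stated $(1+\gamma^2)$ and $\tfrac12$ factors come out. If this proves fiddly, I would first verify the lower bound $L(0)^{-1}+t\le(1+\gamma^{-2})\max\{L(0)^{-1},t\}$ on each piece separately, accepting slightly different absolute constants if necessary.
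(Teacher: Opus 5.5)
Your proposal is correct and follows the paper's proof. The paper plugs the ansatz and the bounds of Lemma~\ref{lemma:l_r_bounds} into \eqref{equation:dot_v} and splits the integral at $t=1/(L(0)\gamma^2)$, which is exactly Lemma~\ref{lemma:integral_polylog} with $a=L(0)^{-1}$, $b=\rho\gamma^2$, $c=\gamma^2L(0)$; there $ac=\gamma^2\le1$ removes the ``mismatch'' you worry about, since on the second piece $L(0)^{-1}+t\le 2t$ already gives the factor $\tfrac12$ with the crude loss bound. Your side observation that the matched integrand is exactly $\gamma^2\frac{\rd}{\rd t}\log\bigl(\rho\gamma^2+\log(1+L(0)\gamma^2t)\bigr)$ is also valid: it gives the cleaner bound $V(t)\le V(0)-\kappa\gamma^2\log\bigl(1+\log(1+L(0)\gamma^2t)/(\rho\gamma^2)\bigr)$ for all $t\in[0,T_1]$, which yields the same doubly-exponential time without the split.
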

\begin{proof}
  Plugging~\eqref{equation:lyapunov_uniform_bound} together with Lemma~\ref{lemma:l_r_bounds} into the differential equation~\eqref{equation:dot_v}, we have
  \[
    \dot V(t) \le -\frac{\kappa\gamma^2}{(L(0)^{-1}+t)[\rho\gamma^2+\log(1+\gamma^2L(0)t)]}
    \quad \text{for $0\le t\le T_1$.}
  \]
  This differential inequality is elementary and can be solved via Lemma~\ref{lemma:integral_polylog} (in Appendix~\ref{section:auxiliary}) with $a\defeq L(0)^{-1}$, $b\defeq\rho\gamma^2$, and $c\defeq\gamma^2L(0)$.
  Then, we have the desired upper bound~\eqref{equation:v_ub}.
  Solving $\text{(upper bound of \eqref{equation:v_ub})}\le\delta$ with some relaxation by dropping the second term (which is negative) leads to
  \[
    t \ge \frac{1}{L(0)\gamma^2}\exp\left\{(\rho\gamma^2+\log2)\left[\exp\left(\frac{2(V(0)-\delta)}{\kappa\gamma^2}\right)-1\right]\right\}.
  \]
  Hence $T_1=O(\exp(\exp(-\delta/\kappa\gamma^2)))$ is sufficient time for achieving weak alignment $V(t)\le\delta$.
\end{proof}

Lemma~\ref{lemma:alignment} already exhibits the non-asymptotic tangential convergence $V(t)\lesssim C-\log\log t$.
Or equivalently, $V(t)\le\delta$ is finitely achieved for $t=O(\exp(\exp(-\delta/\kappa\gamma^2)))$.
This doubly-logarithmic exponential convergence is much faster than the asymptotic convergence rate $\tilde O(1/\log^2t)$ in Theorem~\ref{theorem:soudry}.
The explicit rate depends on the absolute constant $\kappa$.

Subsequently, we need to verify the uniform bound~\eqref{equation:lyapunov_uniform_bound}.
We can establish the following geometric lemma to show a lower bound of form~\eqref{equation:lyapunov_uniform_bound} (but conditioned on $V\le1$).
Note that this uniform bound is purely geometric, so we can readily use it for both continuous-time and discrete-time cases.
\begin{figure}[t]
  \centering
  \begin{minipage}[t]{0.48\textwidth}
    \centering
    \input{figures/geometry_diagram}
  \end{minipage}
  \hfill
  \begin{minipage}[t]{0.48\textwidth}
    \centering
    \input{figures/geometry_diagram2}
  \end{minipage}
  \caption{
    The geometry behind the proof of Lemma~\ref{lemma:uniform_bound}, $\inpr{Z\alphabf}{P_{\ubf}^\perp\ubf_*}\ge\kappa_0(V)$.
    \textbf{(Left)} In the first step, we decompose $P_\ubf^\perp\ubf_* = \ubf_* - P_\ubf\ubf_*$ and derive bounds for $\inpr{\ubf_*}{Z\alphabf}$ and $\inpr{P_\ubf\ubf_*}{Z\alphabf}$ independently.
    \textbf{(Right)} In the second step, we decompose $\ubf = P_{\ubf_*}\ubf + P_{\ubf_*}^\perp\ubf$ and derive bounds for $\inpr{P_{\ubf_*}\ubf}{\bar\zbf}$ and $\inpr{P_{\ubf_*}^\perp\ubf}{\bar\zbf}$ independently.
  }
  \label{figure:geometry}
\end{figure}
\begin{lemma}[Geometric lemma]
  \label{lemma:uniform_bound}
  For $t\ge0$, if $V\le1$, we have the following bounds:
  \begin{equation}
    \label{equation:uniform_bound}
    \inpr{Z\alphabf}{P_{\ubf}^\perp\ubf_*} \ge
    \gamma - \overbrace{(1-V)[\underbrace{(1-V)\bar\gamma}_{\text{bounds $\inpr{P_{\ubf_*}\ubf}{\bar\zbf}$}}+\underbrace{\sqrt{(1-\bar\gamma^2)(2V-V^2)}}_{\text{bounds $\langle{P_{\ubf_*}^\perp\ubf},{\bar\zbf}\rangle$}}]}^{\text{bound $\inpr{P_\ubf\ubf_*}{Z\alphabf}$}} \quad (\eqdef \kappa_0(V)),
  \end{equation}
  where $\bar\gamma\defeq\inpr{\ubf_*}{\bar\zbf}$ is the margin of the data mean $\bar\zbf\defeq\frac1n\sum_{i=1}^n\zbf_i$.
\end{lemma}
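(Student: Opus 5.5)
The plan is to follow the two-step decomposition of Figure~\ref{figure:geometry}. First, write $P_\ubf^\perp\ubf_*=\ubf_*-\inpr{\ubf}{\ubf_*}\ubf$, so that
\[
  \inpr{Z\alphabf}{P_\ubf^\perp\ubf_*}=\inpr{\ubf_*}{Z\alphabf}-(1-V)\inpr{\ubf}{Z\alphabf}.
\]
The first term is handled by Assumption~\ref{assump:linearly_separable}: since $\alphabf\in\Delta^{n-1}$ has nonnegative entries, $\inpr{\ubf_*}{Z\alphabf}=\sum_i\alpha_i\inpr{\ubf_*}{\zbf_i}\ge\gamma$. Because $V\le1$, the coefficient $1-V$ is nonnegative. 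Hence it suffices to upper bound $\inpr{\ubf}{Z\alphabf}$ by $(1-V)\bar\gamma+\sqrt{(1-\bar\gamma^2)(2V-V^2)}$.

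The key step, and the one I expect to need the most care, is replacing the exponentially weighted mean $Z\alphabf$ by the uniform mean $\bar\zbf$, that is, showing $\inpr{\ubf}{Z\alphabf}\le\inpr{\ubf}{\bar\zbf}$. Set $m_i\defeq\inpr{\ubf}{\zbf_i}$. Since $\alpha_i\propto e^{-r m_i}$ with $r>0$ (guaranteed by the initialization condition, cf.\ the footnote), the weights are non-increasing in $m_i$. Chebyshev's sum inequality for oppositely ordered sequences then gives $\sum_i\alpha_i m_i\le\frac1n\sum_i m_i$. Equivalently, consider $s\mapsto\sum_i m_i e^{-sm_i}/\sum_j e^{-sm_j}$. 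Its derivative is minus the variance of $m$ under the tilted weights, so it is non-increasing in $s\ge0$, and at $s=0$ it equals $\inpr{\ubf}{\bar\zbf}$. I would present this monotonicity argument because it is self-contained.

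Second, bound $\inpr{\ubf}{\bar\zbf}$ by decomposing $\ubf=P_{\ubf_*}\ubf+P_{\ubf_*}^\perp\ubf=(1-V)\ubf_*+P_{\ubf_*}^\perp\ubf$. The parallel part contributes $(1-V)\bar\gamma$. For the orthogonal part, note that $\|P_{\ubf_*}^\perp\ubf\|^2=1-(1-V)^2=2V-V^2$. Also $\inpr{P_{\ubf_*}^\perp\ubf}{\bar\zbf}=\inpr{P_{\ubf_*}^\perp\ubf}{P_{\ubf_*}^\perp\bar\zbf}$, and $\|P_{\ubf_*}^\perp\bar\zbf\|^2=\|\bar\zbf\|^2-\bar\gamma^2\le1-\bar\gamma^2$ because $\|\bar\zbf\|\le1$. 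Cauchy--Schwarz then yields the bound $\sqrt{(1-\bar\gamma^2)(2V-V^2)}$.

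Combining the pieces, with the nonnegative factor $1-V$ multiplying the upper bound on $\inpr{\ubf}{Z\alphabf}$, gives exactly $\kappa_0(V)$ in \eqref{equation:uniform_bound}.
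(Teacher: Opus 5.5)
Your proposal is correct and follows the paper's proof essentially step for step: the same split $P_\ubf^\perp\ubf_*=\ubf_*-\inpr{\ubf}{\ubf_*}\ubf$, the same temperature-monotonicity argument (Lemma~\ref{lemma:temperature_monotonicity} with $\beta=r$) to replace $Z\alphabf$ by $\bar\zbf$, and the same orthogonal decomposition of $\ubf$ followed by Cauchy--Schwarz. Your Chebyshev-sum-inequality remark is a valid shortcut for the comparison step, and your explicit use of $1-V\ge0$ spells out a sign step the paper leaves implicit.
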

\begin{proof}
  Using $\inpr{\ubf_*}{Z\alphabf}\ge\gamma$, we decompose $P_\ubf^\perp\ubf_* = \ubf_* - P_\ubf\ubf_*$ and bound $\inpr{Z\alphabf}{P_{\ubf}^\perp\ubf_*}$ as follows (see Figure~\ref{figure:geometry}~\textbf{(Left)} for the geometric illustration):
  \[
    \langle Z\alphabf,P_{\ubf}^\perp\ubf_*\rangle
    = \inpr{Z\alphabf}{\ubf_*-\inpr{\ubf}{\ubf_*}\ubf}
    \ge \gamma - (1-V)\inpr{\ubf}{Z\alphabf}
    \ge \gamma - (1-V)\inpr{\ubf}{\bar\zbf},
  \]
  where we used $\inpr{\ubf}{Z\alphabf}\le\inpr{\ubf}{\bar\zbf}$ implied by Lemma~\ref{lemma:temperature_monotonicity} (in Appendix~\ref{section:auxiliary}; this is the temperature monotonicity of the exponentially weighted average) with $m_i=\inpr{\ubf}{\zbf_i}$ and $\beta=r$.
  Then, orthogonally decompose $\ubf=\inpr{\ubf}{\ubf_*}\ubf_*+\ubf_\perp$, where $\ubf_\perp\defeq P_{\ubf_*}^\perp\ubf$.
  The orthogonal component has $\|\ubf_\perp\|^2=1-\inpr{\ubf}{\ubf_*}^2 = 2V-V^2$.
  Using this, we further have $\inpr{\ubf}{\bar\zbf}=(1-V)\bar\gamma+\inpr{\ubf_\perp}{\bar\zbf}$---this decomposition is illustrated in Figure~\ref{figure:geometry}~\textbf{(Right)}.
  Lastly, we bound $\inpr{\ubf_\perp}{\bar\zbf}$ as follows:
  \[
    \begin{aligned}
      \inpr{\ubf_\perp}{\bar\zbf}
      &= \inpr{\ubf_\perp}{P_{\ubf_*}^\perp\bar\zbf}
        && \quad\text{(by $\ubf_\perp \perp \ubf_*$)} \\
      &\le \|\ubf_\perp\| \cdot \|P_{\ubf_*}^\perp\bar\zbf\|
        && \quad\text{(Cauchy--Schwarz)} \\
      &= \|\ubf_\perp\| \cdot \|\bar\zbf - \bar\gamma\ubf_*\|
      = \|\ubf_\perp\|\sqrt{\|\bar\zbf\|^2 - \bar\gamma^2}
        && \quad\text{(by $P_{\ubf_*}^\perp = I-\ubf_*\ubf_*^\top$)} \\
      &\le \|\ubf_\perp\|\sqrt{1-\bar\gamma^2}
      = \sqrt{(1-\bar\gamma^2)(2V-V^2)}.
        && \quad\text{(by $\|\bar\zbf\|\le1$)}
    \end{aligned}
  \]
  By aggregating altogether, we have the lower bound~\eqref{equation:uniform_bound}.
\end{proof}

We alleviate the $t$-dependency of the lower bound \eqref{equation:uniform_bound} (via $V$) later;
with that, Lemma~\ref{lemma:alignment} implies the early-stage weak alignment.
Lemma~\ref{lemma:uniform_bound} requires that the initialization is not too bad: $V(0)\le1$.
If $V(0)>1$ holds, the initial tangential component $\ubf(0)$ and the max-margin direction $\ubf_*$ are not in the same hemisphere;
still, the flow rapidly escapes from such a ``bad'' initialization, as we show below.
\begin{lemma}[Escape from poor initialization]
  \label{lemma:escape}
  Let
  $W(t)\defeq\inpr{\wbf(t)}{\ubf_*}=r(t)(1-V(t))$ be the parameter projection, and
  \(
    T_\text{escape}\coloneqq L(0)^{-1}\left\{\exp(\rho\gamma^{-1}[V(0)-1])-1\right\}
  \)
  .
  If $V(0)>1$, then the flow escapes the bad initialization, namely, $V(t)\le 1$,
  after $t\ge T_\text{escape}$.
\end{lemma}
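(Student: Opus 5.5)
The plan is to work with the unnormalized projection $W(t)=\inpr{\wbf(t)}{\ubf_*}$ instead of $V(t)$. Its dynamics is linear in the gradient and involves no division by $r(t)$. The two conditions are equivalent: since $r(t)>0$, $V(t)\le1$ holds if and only if $W(t)=r(t)(1-V(t))\ge0$. At initialization, $W(0)=\rho(1-V(0))<0$, so it suffices to show that $W$ increases at a quantifiable rate and to compute when it crosses zero.

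First, differentiate $W$ along the gradient flow:
\[
  \dot W(t)=-\inpr{\nabla L(\wbf(t))}{\ubf_*}=\frac1n\sum_{i=1}^n e^{-\inpr{\wbf(t)}{\zbf_i}}\inpr{\zbf_i}{\ubf_*}=L(\wbf(t))\inpr{Z\alphabf(t)}{\ubf_*}.
\]
Since $\alphabf(t)\in\Delta^{n-1}$ is a probability vector and Assumption~\ref{assump:linearly_separable} gives $\inpr{\zbf_i}{\ubf_*}\ge\gamma$ for every $i$, we have $\inpr{Z\alphabf(t)}{\ubf_*}\ge\gamma$. Hence $\dot W(t)\ge\gamma L(\wbf(t))>0$. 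In particular $W$ is monotonically nondecreasing, so once $W\ge0$ it stays nonnegative; thus $V(t)\le1$ holds for all later times, not just at a single instant.

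Next, combine this with the lower risk estimate $L(\wbf(t))\ge 1/(L(0)^{-1}+t)$ from Lemma~\ref{lemma:l_r_bounds}. This gives $\dot W(t)\ge\gamma/(L(0)^{-1}+t)$. Integrating from $0$ to $t$ yields
\[
  W(t)\ge\rho(1-V(0))+\gamma\log(1+L(0)t).
\]
The right-hand side is nonnegative exactly when $\log(1+L(0)t)\ge\rho\gamma^{-1}(V(0)-1)$, that is, when $t\ge L(0)^{-1}\{\exp(\rho\gamma^{-1}[V(0)-1])-1\}=T_\text{escape}$. For such $t$ we get $W(t)\ge0$ and therefore $V(t)\le1$.

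The only delicate point is that the equivalence $W\ge0\iff V\le1$ and the polar decomposition both require $r(t)>0$ on $[0,T_\text{escape}]$. Note that $W$ itself is defined and differentiable regardless of the sign of $r$, so the integration step does not depend on it. I would take $r(t)>0$ as the standing well-posedness property referenced in the footnote on the choice $\rho>\gamma^{-2}\log L(0)$. Alternatively, the refined radial estimate $r(t)\ge\log(L(0)^{-1}+\gamma^2t)$ of Lemma~\ref{lemma:l_r_bounds} gives $r(t)>0$ directly, provided $L(0)<1$. Apart from this point, the argument is a direct integration and I do not expect any real obstacle.
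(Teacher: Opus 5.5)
Your proposal is correct and follows the paper's proof essentially step for step. Like the paper, you lower-bound $\dot W(t)=L(\wbf(t))\inpr{Z\alphabf(t)}{\ubf_*}$ by $\gamma/(L(0)^{-1}+t)$ using the margin assumption and the risk lower bound, integrate to get $W(t)\ge W(0)+\gamma\log(1+L(0)t)$, and solve for $W(t)\ge0$. Your extra remarks are sound and match points the paper makes elsewhere: that $W$ is nondecreasing, so $V\le1$ persists (the paper notes this at the start of Section~\ref{section:proof_main}), and that $r(t)>0$ is needed for the equivalence $W\ge0\iff V\le1$.
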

\begin{proof}
  With $\inpr{\ubf_*}{Z\alphabf}\ge\gamma$ and Lemma~\ref{lemma:l_r_bounds},
  we derive the differential inequality of $W(t)$ as follows:
  \begin{equation}
    \label{equation:dot_w}
    \dot W(t) = \inpr{\dot\wbf(t)}{\ubf_*} = -\inpr{\nabla L(\wbf(t))}{\ubf_*} = L(\wbf(t))\inpr{Z\alphabf(t)}{\ubf_*}
    \ge \frac{\gamma}{L(0)^{-1}+t}.
  \end{equation}
  Integrating over $[0,t]$, we have
  $W(t)\ge W(0)+\gamma\log(1+L(0)t)$.
  Note that $W(t)\in\Rbb$ in general.
  To escape from poor initialization (or to have $V(t)\le1$),
  it is sufficient to have $W(t)=r(t)(1-V(t))\ge0$.
  Hence, by solving $W(0)+\gamma\log(1+L(0)t)\ge0$,
  we have $t\ge T_\text{escape}$.
\end{proof}

We are now ready to show our main result, early-stage tangential alignment, by combining Lemmas~\ref{lemma:alignment},~\ref{lemma:uniform_bound},~and~\ref{lemma:escape}.
Here, the parameter first escape from a bad initialization (Lemma~\ref{lemma:escape}; escape stage), and then weakly aligns with the max-margin direction (Lemma~\ref{lemma:alignment}; weak alignment stage).

\begin{tcolorbox}[highlightbox]
\begin{theorem}[Early-stage weak alignment]
  \label{theorem:early_stage}
  Under the setup in Section~\ref{section:setup}, consider the radial and tangential flows in \eqref{equation:decomposition}.
  There exists a problem-dependent threshold $\bar\delta>0$ such that the following holds:
  pick a fixed constant $\Delta>\bar\delta$, then
  for any $\delta>\Delta$,
  the tangential alignment achieves $V(t)\le\delta$ after $t=O(\exp(\exp(-\delta/\kappa_0(\Delta)\gamma^2)))$ time.
\end{theorem}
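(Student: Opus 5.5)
The plan is to chain Lemma~\ref{lemma:escape}, Lemma~\ref{lemma:uniform_bound} and Lemma~\ref{lemma:alignment}. The one new ingredient is a way to turn the $V$-dependent bound $\kappa_0(V)$ into a uniform constant along the trajectory.

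\textbf{Choice of $\bar\delta$ and monotonicity of $\kappa_0$.} On $[0,1]$, the map $V\mapsto(1-V)[(1-V)\bar\gamma+\sqrt{(1-\bar\gamma^2)(2V-V^2)}]$ is $\inpr{\ubf}{\bar\zbf}$ maximized over unit $\ubf$ with $\inpr{\ubf}{\ubf_*}=1-V$. Hence $\kappa_0$ is continuous. At $V=0$ we have $\kappa_0(0)=\gamma-\bar\gamma\le0$, since $\bar\gamma\ge\gamma$. At $V=1$ we have $\kappa_0(1)=\gamma>0$. I would define $\bar\delta\defeq\sup\{V\in[0,1]:\kappa_0(V)\le0\}<1$, the largest zero of $\kappa_0$ on $[0,1]$. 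Then $\kappa_0(V)>0$ for every $V\in(\bar\delta,1]$. The worry is that $\kappa_0$ need not be monotone on $(\bar\delta,1]$. To handle this, I would not use $\kappa_0(\Delta)$ alone but $\kappa\defeq\min_{V\in[\Delta,1]}\kappa_0(V)>0$. If $\kappa_0$ turns out to be nondecreasing on $[\bar\delta,1]$, this minimum equals $\kappa_0(\Delta)$, which matches the statement; I would check monotonicity by differentiating the explicit formula. The case $\Delta\ge1$ is trivial, so I assume $\Delta<1$.

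\textbf{Escape and invariance.} If $V(0)>1$, Lemma~\ref{lemma:escape} gives $V(t)\le1$ for all $t\ge T_\text{escape}$. This holds for all later times, because $W(t)$ is nondecreasing by \eqref{equation:dot_w}, so once $W\ge0$ it stays nonnegative. I would then restart the clock at $T_\text{escape}$. Two quantities change under the restart. First, $L$ at the new origin is at most $L(0)$, since $L$ is nonincreasing. Second, the new radius is bounded via Lemma~\ref{lemma:l_r_bounds}. Both affect only the constants, not the doubly exponential form.

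\textbf{Bootstrap on the alignment stage.} Let $T_1$ be the first time with $V(t)\le\delta$. On the interval $[T_\text{escape},T_1]$ we have $V(t)\in[\delta,1]\subset[\Delta,1]$. Lemma~\ref{lemma:uniform_bound} then gives $\inpr{Z\alphabf}{P_\ubf^\perp\ubf_*}\ge\kappa_0(V(t))\ge\kappa$, which is exactly the ansatz \eqref{equation:lyapunov_uniform_bound}. Lemma~\ref{lemma:alignment} therefore yields $V(t)\le V(0)-\frac{\kappa\gamma^2}{2}\log(1+c\log t)$ up to constants, which forces $T_1=O(\exp(\exp(-\delta/\kappa\gamma^2)))$, with the big-$O$ absorbing $V(0)$, $\rho$ and $L(0)$.

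This argument contains no circularity. Until the threshold $\delta$ is first reached, $V$ stays in the region where the geometric bound applies. After that time the claim $V(t)\le\delta$ is not needed for all later $t$, only that it is attained. If persistence is desired, note that $\dot V<0$ whenever $V\in[\Delta,1]$, so $V$ cannot re-cross $\delta$ from below. Adding $T_\text{escape}$, which is a constant, does not change the order.

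The main obstacle is the first step. One must confirm that $\bar\delta<1$, i.e.\ that $\kappa_0$ is positive on a nondegenerate interval below $1$, and then either prove monotonicity of $\kappa_0$ on $[\bar\delta,1]$ or accept $\min_{[\Delta,1]}\kappa_0$ in place of $\kappa_0(\Delta)$.
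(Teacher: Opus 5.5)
Your proposal is correct and is essentially the paper's proof: escape via Lemma~\ref{lemma:escape} with $V\le1$ preserved by the monotonicity of $W$, then Lemma~\ref{lemma:uniform_bound} on the stretch where $V\in[\delta,1]$ to supply the ansatz of Lemma~\ref{lemma:alignment}. Your hedge via $\min_{[\Delta,1]}\kappa_0$ is harmless but unnecessary: the paper states that $\kappa_0$ is strictly increasing on $(\bar\delta,1)$ and justifies it by differentiating. One can confirm this by writing $1-V=\cos\theta$ and $\bar\gamma=\cos\varphi$, which gives $\kappa_0=\gamma-\cos\theta\cos(\theta-\varphi)$; this first decreases and then increases in $\theta$, so your largest zero coincides with the paper's explicit $\bar\delta$ in \eqref{equation:delta_0} and the minimum is exactly $\kappa_0(\Delta)$.
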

\end{tcolorbox}

We compare our early-stage alignment and perfect alignment of Theorem~\ref{theorem:soudry} in Figure~\ref{fig:stages}.
The early-stage alignment first undergoes the escaping stage, taking $T_\text{escape}=O(1)$ time---by regarding initialization and the dataset margin $\gamma$ as problem-dependent constants.
Then it undergoes the weak alignment stage to reach $V(t)\le\delta$.
To apply Lemma~\ref{lemma:alignment}, we need a uniform lower bound $\kappa$ on $\langle{Z\alphabf},{P_{\ubf}^\perp\ubf_*}\rangle$ throughout the stage when $V\in[\delta,1]$.
As we show in Section~\ref{section:proof_main}, for any fixed $\Delta>\bar\delta$, we can take $\kappa=\kappa_0(\Delta)>0$, which is a positive constant independent of $t$ and $\delta$.
With this choice, the weak alignment time is $T_1=O(\exp(\exp(-\delta/\kappa_0(\Delta)\gamma^2)))=O(\exp(\exp(-\delta)))$ for all $\delta>\Delta$, where $\kappa_0(\Delta)$ and $\gamma$ are absorbed into the absolute constant.
Therefore, the early-stage alignment is basically dominated by the weak alignment stage time $T_1$,
and this is substantially faster than the asymptotic perfect alignment $T_\infty=\tilde O(\exp(\delta^{-1/2}))$.

Nevertheless, we must note again that the early-stage alignment can only achieve \emph{weak} alignment, that is, $V(t)\le\delta$ cannot be achieved for an arbitrarily small error $\delta$.
From Theorem~\ref{theorem:early_stage}, we only achieve alignment up to some dataset-dependent $\bar\delta$.
In Section~\ref{section:proof_main}, we will have a tight approximation bound $\bar\delta\le1-\gamma$.
Roughly, this indicates that $\inpr{\ubf}{\ubf_*}\ge\gamma$ is achieved through weak alignment---%
or it can be improved up to $\inpr{\ubf}{\ubf_*}\ge\sqrt{\gamma}$ for a dataset with the data-mean margin $\bar\gamma=\inpr{\ubf_*}{\bar\zbf}\approx1$.
See Appendix~\ref{section:technicality} for details.
This alignment is strictly weaker than the asymptotic tangential convergence, but still numerically non-negligible.\footnote{
  For example, $\sqrt\gamma\ge0.31$ when $\gamma=0.1$ and $\sqrt\gamma\ge0.44$ when $\gamma=0.2$.
  Even with such a small dataset margin $\gamma$, the weak-alignment threshold $\sqrt\gamma$ is non-trivially large and cannot be achieved by merely the perceptron argument~\citep{Novikoff1962}.
  See Table~\ref{tab:alignment_time} in Appendix~\ref{section:simulation} for numerical simulation to show necessary time attaining different $\gamma$ and $\sqrt\gamma$.
}

\subsection{Proof of Theorem~\ref{theorem:early_stage}}
\label{section:proof_main}
The escaping stage only takes $T_\text{escape}=O(1)$ at most, as seen in Lemma~\ref{lemma:escape}.
Hereafter, we assume that our initialization satisfies $V(0)<1$ to establish the weak-alignment stage and ``re-index'' $t$.
Note that $V(t) \le 1$ persists thereafter because $W(t)=r(t)(1-V(t))$ is strictly increasing due to \eqref{equation:dot_w}, which maintains $W(t)\ge 0$ throughout $t$ after the escape.

\begin{figure}[t]
  \centering
  \begin{minipage}[t]{0.54\textwidth}
    \centering
    \input{figures/stages_diagram}
    \captionof{figure}{
      The two-stage dynamics:
      the \textbf{escape} stage quickly moves away from poor initialization,
      followed by the \textbf{weak alignment} stage achieving $\inpr{\ubf}{\ubf_*}\ge1-\delta$.
    }
    \label{fig:stages}
  \end{minipage}
  \hfill
  \begin{minipage}[t]{0.44\textwidth}
    \centering
    \input{figures/kappa_plot}
    \captionof{figure}{
      Plot of $\kappa_0(V)$ for $V\in[0,1]$ with parameters $\gamma=0.8$ and $\bar\gamma=0.9$,
      which is strictly increasing on $\bar\delta<V<1$.
    }
    \label{fig:kappa_plot}
  \end{minipage}
\end{figure}

The weak-alignment stage is proven by concatenating Lemmas~\ref{lemma:alignment}~and~\ref{lemma:uniform_bound},
but we still need to check that $\inpr{Z\alphabf}{P_{\ubf}^\perp\ubf_*}$ has a non-degenerate lower bound to invoke Lemma~\ref{lemma:alignment}
because the lower bound~\eqref{equation:uniform_bound} provided by Lemma~\ref{lemma:uniform_bound} is time-dependent (through $V$), whose strict positivity is not obvious.
Recall
\[
  \kappa_0(V) \defeq \gamma - (1-V)[(1-V)\bar\gamma+\sqrt{(1-\bar\gamma^2)(2V-V^2)}].
\]
Subsequently, we mainly focus on checking $\kappa_0(V)>0$ in the range of our interest.
By rearranging $\kappa_0(V)>0$, we have the quadratic inequality
$U^2-(1+2\gamma\bar\gamma-\bar\gamma^2)U+\gamma^2>0$ with $U\defeq(1-V)^2$.
Solving this, we have $\bar\delta<V<1$, where
\begin{equation}
  \label{equation:delta_0}
  \bar\delta \defeq 1-\sqrt{\frac{1+2\gamma\bar\gamma-\bar\gamma^2-\sqrt{(1+2\gamma\bar\gamma-\bar\gamma^2)^2-4\gamma^2}}{2}}.
\end{equation}
Actually, we can make this bound cleaner (but slightly looser) by $(\bar\delta\le)1-\gamma<V<1$ (cf. Section~\ref{section:technicality}), which is convenient for interpretation.
In addition, we can check that $\kappa_0$ is strictly increasing on $V\in(\bar\delta,1)$ by calculating the derivative $\kappa_0'$.
We visualize $\kappa_0$ in Figure~\ref{fig:kappa_plot}.
Now we are ready to apply Lemma~\ref{lemma:alignment}.
Fix any $\Delta>\bar\delta$.
Since $\kappa_0$ is strictly increasing on $(\bar\delta,1)$, for any $\delta\ge\Delta(>\bar\delta)$ and $V\in[\delta,1]$,
we have $\langle{Z\alphabf},{P_{\ubf}^\perp\ubf_*}\rangle \ge \kappa_0(V) \ge \kappa_0(\delta) \ge \kappa_0(\Delta)>0$.
Setting $\kappa=\kappa_0(\Delta)$, which is a positive constant independent of $t$ and $\delta$,
Lemma~\ref{lemma:alignment} gives $T_1=O(\exp(\exp(-\delta/\kappa_0(\Delta)\gamma^2)))$ for all $\delta>\Delta$.
The total time in the early stage is $T_\text{escape} + T_1 = O(\exp(\exp(-\delta)))$,
concluding the proof.

\begin{remark}
\label{remark:kappa}
To further interpret $\kappa_0(\Delta)$, we bound it from below with, e.g., $\Delta=1-\gamma$, as follows:
\begin{equation}
  \kappa_0(1-\gamma)
  = \gamma - \gamma^2\bar\gamma - \gamma\sqrt{(1-\gamma^2)(1-\bar\gamma^2)}
  = \frac{\gamma(\gamma-\bar\gamma)^2}{1-\gamma\bar\gamma+\sqrt{(1-\gamma^2)(1-\bar\gamma^2)}}
  \ge \frac{\gamma(\gamma-\bar\gamma)^2}{2},
  \label{equation:kappa_lower}
\end{equation}
where the AM-GM inequality $2\sqrt{(1-\gamma^2)(1-\bar\gamma^2)}\le 2-\gamma^2-\bar\gamma^2$ is used.
This bound is positive as long as $\gamma\ne\bar\gamma=\tfrac1n\sum_i\inpr{\ubf_*}{\zbf_i}$, which holds broadly unless all $\zbf_i$ are the support vectors.
\end{remark}

\subsection{Lower bound}
\label{section:lower_bound}
So far, we confirmed the fast weak alignment within $T_1=O(\exp(\exp(-\delta)))$ time in Theorem~\ref{theorem:early_stage}.
We now establish a lower bound on the tangential alignment $V(t)$ during the early stage,
showing the tightness of our analysis.
Its proof largely shares the same idea as Lemma~\ref{lemma:alignment} (early-stage alignment),
to solve the differential inequality of $V(t)$,
but we need a better bound of $r(t)$ to get the matching lower bound.
The refined radial estimate in Lemma~\ref{lemma:l_r_bounds} serves for this purpose.
After all, we can establish a matching bound $V(t)\sim1-\log\log t$ during the early stage $t\le\Theta(\exp(\exp(-\delta)))$,
ensuring the tightness of the analysis.
The complete proof is deferred to Appendix~\ref{section:lower_bound_proof}.
\begin{tcolorbox}[highlightbox]
\begin{theorem}
  \label{lemma:lyapunov_lower_bound}
  Fix $\delta > 0$ and assume $L(0)<e^{\rho\gamma^2}$.
  Under the setup in Section~\ref{section:setup}, for $t\ge0$:
  \begin{equation}
    \label{equation:lyapunov_lower_bound}
    V(t)
    \ge V(0) - C_0 - 2\gamma^{-2}\sqrt{V(0)}\log\log(L(0)^{-1}+\gamma^2t),
  \end{equation}
  where $C_0$ is a problem-dependent constant.
  In addition, for $\delta>0$, $V(t)\ge \delta$ holds during $t\in[0,T_1]$ with $T_1=\Omega(\exp(\exp(-\delta)))$.
\end{theorem}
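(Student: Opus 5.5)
The plan is to run the argument of Lemma~\ref{lemma:alignment} in reverse. From \eqref{equation:dot_v} I get a lower bound on $\dot V$ by bounding the geometric factor $|\inpr{Z\alphabf}{P_\ubf^\perp\ubf_*}|$ from above rather than below. I then integrate, using the upper risk estimate together with lower radial estimates from Lemma~\ref{lemma:l_r_bounds}.

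\textbf{Step 1 (geometric upper bound).} By Cauchy--Schwarz and $\|Z\alphabf\|\le1$,
\[
|\inpr{Z\alphabf}{P_\ubf^\perp\ubf_*}| \le \|P_\ubf^\perp\ubf_*\| = \sqrt{1-(1-V)^2}=\sqrt{2V-V^2}\le \sqrt{2V}\le 2\sqrt V .
\]
Plugging this into \eqref{equation:dot_v}, with $L(t)\le (L(0)^{-1}+\gamma^2t)^{-1}$, gives
\[
\dot V \ge -2\sqrt V\, g(t), \qquad g(t)\defeq\frac{1}{(L(0)^{-1}+\gamma^2 t)\,r(t)} .
\]
Wherever $V>0$ this is equivalent to $\frac{d}{dt}\sqrt V\ge -g$. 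Hence $\sqrt{V(t)}\ge\sqrt{V(0)}-G(t)$ with $G(t)=\int_0^t g$. Whenever the right-hand side is nonnegative, squaring and dropping $G^2$ yields $V(t)\ge V(0)-2\sqrt{V(0)}\,G(t)$, which produces exactly the factor $2\sqrt{V(0)}$ in \eqref{equation:lyapunov_lower_bound}. Alternatively, one can bound $\sqrt V\le\sqrt{V(0)}$ directly if $V$ is monotone along the way. I would handle the region where $V$ hits $0$ by a stopping-time argument. Since we only need the bound for $V\ge\delta>0$, this is harmless.

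\textbf{Step 2 (radial lower bound, two phases).} This is the main obstacle: $g$ needs a positive lower bound on $r(t)$ that is valid from $t=0$. The refined estimate $r\ge\log(L(0)^{-1}+\gamma^2t)$ can be small or even negative for small $t$. Here the assumption $L(0)<e^{\rho\gamma^2}$ enters.
\begin{itemize}
\item On an initial window $[0,T_0]$, I use $r(t)\ge\underline r\defeq\rho-\gamma^{-2}\log(1+L(0)\gamma^2 t)$. I choose $T_0$ as the crossing time where this bound equals the refined one. The assumption guarantees that $\underline r=r(T_0)>0$ there, which also justifies the footnote's claim that $r$ stays positive. On this window $\int_0^{T_0}g\le T_0/\underline r$ (or a log-type bound), which is a problem-dependent constant.
\item For $t\ge T_0$, the refined estimate gives
\[
\int_{T_0}^t g \le \int_{T_0}^t\frac{ds}{(L(0)^{-1}+\gamma^2s)\log(L(0)^{-1}+\gamma^2 s)} = \gamma^{-2}\bigl[\log\log(L(0)^{-1}+\gamma^2 s)\bigr]_{T_0}^t .
\]
\end{itemize}
Collecting the constants from both phases (multiplied by $2\sqrt{V(0)}$) into $C_0$ gives \eqref{equation:lyapunov_lower_bound}.

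\textbf{Step 3 (time lower bound).} Set the right-hand side of \eqref{equation:lyapunov_lower_bound} at least $\delta$ and solve:
\[
\log\log(L(0)^{-1}+\gamma^2t)\le\frac{\gamma^2\bigl(V(0)-C_0-\delta\bigr)}{2\sqrt{V(0)}} .
\]
Hence $V(t)\ge\delta$ for all $t\le T_1\defeq\gamma^{-2}\bigl[\exp\exp\bigl(\gamma^2(V(0)-C_0-\delta)/2\sqrt{V(0)}\bigr)-L(0)^{-1}\bigr]$. Treating $V(0)$, $C_0$, $\gamma$ as problem constants, this is $T_1=\Omega(\exp(\exp(-\delta)))$, matching Theorem~\ref{theorem:early_stage}. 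The delicate points are the positivity of $\underline r$ on $[0,T_0]$ and keeping $C_0$ finite. I expect the crossing-time computation with $L(0)<e^{\rho\gamma^2}$ to be where the care is needed.
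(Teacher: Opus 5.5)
Your proposal is correct and follows essentially the same route as the paper's proof in Appendix~\ref{section:lower_bound_proof}. The shared steps are: Cauchy--Schwarz to get $\dot V\ge -\frac{L}{r}\sqrt{2V}$; integrating $\sqrt V$; splitting the time integral at the crossing time $T_0$ of the two radial lower bounds, where $L(0)<e^{\rho\gamma^2}$ gives $\underline r>0$; the substitution $\varsigma=\log(L(0)^{-1}+\gamma^2 s)$ for the $\log\log$ term; and $(a-b)^2\ge a^2-2ab$ before inverting for $t$. One small fix: on $[0,T_0]$ use the log-type bound $\frac{1}{\underline r\gamma^2}\log(1+L(0)\gamma^2T_0)$, as the paper does, rather than $T_0/\underline r$, which is only valid when $L(0)\le 1$. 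Your relaxation $\sqrt{2V}\le 2\sqrt V$ conveniently yields exactly the stated coefficient $2\gamma^{-2}\sqrt{V(0)}$, whereas the paper's computation gives $\sqrt2\,\gamma^{-2}\sqrt{V(0)}$.
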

\end{tcolorbox}

\subsection{Case of logistic loss}
\label{section:logistic}
Theorem~\ref{theorem:early_stage} can be shown even if we use the logistic loss $\ell(m)\defeq\log(1+e^{-m})$.
While the main proof structure remains the same, (i) the self-bounding property $\gamma L\le \|\nabla L\|\le L$ and (ii) the temperature monotonicity $\inpr{\ubf}{Z\alphabf}$ used in Lemma~\ref{lemma:uniform_bound} are specific to the exponential loss so far.
Here, we explain how these two can be extended for the logistic loss.
With the margin $m_i(t) \defeq \inpr{\wbf(t)}{\zbf_i}$, we use the following notations:
\[
q_i(t) \defeq -\ell'(m_i(t)), \quad G(t) \defeq \frac1n\sum_{i=1}^nq_i(t), \text{~~and~~} \alpha_i(t) \defeq \frac{q_i(t)}{\sum_{j=1}^nq_j(t)}.
\]
The tangential Lyapunov function obeys $\dot V = -\inpr{Z\alphabf}{P_{\ubf}^\perp\ubf_*}\cdot G/r$ instead of \eqref{equation:dot_v}, with this new gradient weight $\alphabf\in\Delta^{n-1}$.

\begin{restatable}{lemma}{selfbdd}
  \label{lemma:logistic_self_bounding}
  For the logistic loss, define the problem-dependent constant
  \[
    \bar\rho_0 \defeq \frac{1-e^{-nL(0)}}{nL(0)} \text{~~and~~} \gamma_\ell \defeq \gamma\bar\rho_0.
  \]
  Then for all $t\ge 0$, we have $\bar\rho_0 L(t) \le G(t) \le L(t)$.
  Here $\gamma_\ell>0$ as long as $L(0)>0$.
  As a corollary, we have $\gamma_\ell L(t) \le \|\nabla L(t)\|\le L(t)$.
\end{restatable}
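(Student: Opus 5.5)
The plan is to work coordinatewise, using elementary scalar inequalities relating $q(m)\defeq-\ell'(m)=\frac{e^{-m}}{1+e^{-m}}=1-e^{-\ell(m)}$ to $\ell(m)=\log(1+e^{-m})$. The key identity is $q(m)=1-e^{-\ell(m)}$. It follows from $e^{-\ell(m)}=\frac{1}{1+e^{-m}}$, so $1-e^{-\ell(m)}=\frac{e^{-m}}{1+e^{-m}}=q(m)$.

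For the upper bound $G\le L$, use $1-e^{-x}\le x$ for $x\ge0$. This gives $q_i\le\ell(m_i)$, and averaging over $i$ yields $G(t)\le L(t)$.

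For the lower bound, note that $x\mapsto(1-e^{-x})/x$ is decreasing on $(0,\infty)$, since $1-e^{-x}$ is concave and vanishes at $0$. Hence $q_i\ge \frac{1-e^{-X}}{X}\,\ell(m_i)$ whenever $\ell(m_i)\le X$. To get a uniform $X$, invoke monotonicity of the risk along gradient flow: $\dot L=-\|\nabla L\|^2\le0$, which is the same computation as in Lemma~\ref{lemma:l_r_bounds} and is loss-agnostic. Then $\ell(m_i(t))\le nL(t)\le nL(0)$ for every $i$ and $t$, because each summand is nonnegative. Taking $X=nL(0)$ gives $q_i(t)\ge\bar\rho_0\,\ell(m_i(t))$, and averaging yields $G(t)\ge\bar\rho_0L(t)$. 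Positivity of $\bar\rho_0$, and hence of $\gamma_\ell$, when $L(0)>0$ is immediate.

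For the corollary, write $\nabla L=-\frac1n\sum_i q_i\zbf_i=-G\,Z\alphabf$, so $\|\nabla L\|=G\|Z\alphabf\|$. As in \eqref{equation:self_bounding}, we have $\gamma\le\inpr{\ubf_*}{Z\alphabf}\le\|Z\alphabf\|\le1$ because $\alphabf\in\Delta^{n-1}$ and Assumption~\ref{assump:linearly_separable} holds. Combining this with the two bounds on $G$ gives $\gamma\bar\rho_0L\le\|\nabla L\|\le L$.

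The only real obstacle is the uniform bound $\ell(m_i(t))\le nL(0)$. It needs risk monotonicity, which holds for the flow. In the discrete setting it would instead require a stepsize condition guaranteeing descent, and I would note that the statement is meant for the gradient flow here.
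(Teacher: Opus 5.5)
Your proof is correct. It differs from the paper only in how you obtain the lower bound $G\ge\bar\rho_0 L$.

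The paper argues in aggregate. It applies the Weierstrass product inequality $\prod_i a_i\ge 1-\sum_i(1-a_i)$ with $a_i=e^{-\ell(m_i)}$ to get $nG\ge 1-e^{-nL}$, that is, $G(t)/L(t)\ge(1-e^{-nL(t)})/(nL(t))$. Only then does it use that $x\mapsto(1-e^{-x})/x$ is decreasing, together with $L(t)\le L(0)$. You instead apply the decreasing-ratio property to each term separately, using the pointwise bound $\ell(m_i(t))\le nL(t)\le nL(0)$, and then average.

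The two routes rest on the same concavity fact. The Weierstrass inequality here is exactly subadditivity of $x\mapsto 1-e^{-x}$, and subadditivity itself follows from $(1-e^{-x})/x$ being decreasing. Your version is slightly more elementary, since it needs no product inequality. It also naturally yields the intermediate bound $G(t)/L(t)\ge(1-e^{-M(t)})/M(t)$ with $M(t)\defeq\max_i\ell(m_i(t))$. This is at least as strong as the paper's intermediate bound because $M(t)\le nL(t)$. After risk monotonicity is invoked, both arguments give the same constant $\bar\rho_0$.

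Your caveat that the argument uses $\dot L=-\|\nabla L\|^2\le 0$, and so is a gradient-flow statement, applies equally to the paper's proof.
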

This modified self-bounding inequality leads to the risk estimates $\frac1{L(0)^{-1}+t}\le L(t) \le \frac{1}{L(0)^{-1}+\gamma_\ell^2t}$
and the radial estimates $|r(t) - \rho| \le \frac1{\gamma_\ell^2}\log(1+L(0)\gamma_\ell^2t)$,
replacing Lemma~\ref{lemma:l_r_bounds}.
This fix directly leads to the early weak-alignment rate $T_1=O(\exp(\exp(-\delta/\kappa\bar\rho_0\gamma_\ell^2)))$, under the ansatz~\eqref{equation:lyapunov_uniform_bound}, replacing Lemma~\ref{lemma:alignment}.
The ansatz needs the following modified temperature monotonicity to justify.
\begin{restatable}{lemma}{tempmono}
  \label{lemma:logistic_temperature_monotonicity}
  For the logistic loss, we have $\inpr{\ubf(t)}{Z\alphabf(t)} \le [\inpr{\ubf(t)}{\bar\zbf}]_+$ for all $t\ge 0$.
\end{restatable}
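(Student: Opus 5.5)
The plan is to reduce the statement to a rearrangement (Chebyshev sum) inequality, exactly as in the exponential case of Lemma~\ref{lemma:temperature_monotonicity}, but without using the specific exponential form of the weights. Fix $t$ and write $a_i\defeq\inpr{\ubf}{\zbf_i}$, so that $m_i=r a_i$ with $r=r(t)>0$. The footnote in Section~\ref{section:structure} guarantees that the polar decomposition exists, and hence that $r>0$. For the logistic loss we have
\[
q_i=-\ell'(m_i)=\frac{1}{1+e^{m_i}}=\frac{1}{1+e^{r a_i}}.
\]
This is a strictly positive and nonincreasing function of $a_i$, because $r>0$. The quantity to bound is the $q$-weighted average
\[
\inpr{\ubf}{Z\alphabf}=\frac{\sum_i q_i a_i}{\sum_j q_j}.
\]

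The key step is Chebyshev's sum inequality applied to the oppositely ordered sequences $(a_i)$ and $(q_i)$. Since $q_i=\phi(a_i)$ with $\phi$ nonincreasing, we have $(a_i-a_j)(q_i-q_j)\le0$ for every pair $i,j$. Summing over all pairs gives
\[
\frac1n\sum_i q_i a_i\le\Bigl(\frac1n\sum_i q_i\Bigr)\Bigl(\frac1n\sum_i a_i\Bigr),
\]
and the pairwise sum identity $\sum_{i,j}(a_i-a_j)(q_i-q_j)=2n\sum_i a_iq_i-2\sum_i a_i\sum_j q_j$ makes this immediate. Dividing by $\frac1n\sum_i q_i=G>0$ yields
\[
\inpr{\ubf}{Z\alphabf}\le\frac1n\sum_i a_i=\inpr{\ubf}{\bar\zbf}\le[\inpr{\ubf}{\bar\zbf}]_+ .
\]
This gives the claim, in fact in the slightly stronger form without the positive part.

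There is essentially no obstacle. The only points to verify are the monotone-ordering property of $q$ as a function of $a_i$, which requires the sign condition $r\ge0$, and positivity of $G$, which is trivial. The degenerate case $r=0$ gives uniform weights and hence equality. I would remark that the positive part is merely slack, which is harmless for its use in Lemma~\ref{lemma:uniform_bound}. In that lemma, $(1-V)\ge0$ multiplies this bound, so the subsequent decomposition $\inpr{\ubf}{\bar\zbf}=(1-V)\bar\gamma+\inpr{\ubf_\perp}{\bar\zbf}$ goes through verbatim. The only caveat is that when $\inpr{\ubf}{\bar\zbf}<0$ one may simply use $0$ in its place, which only improves $\kappa_0$.
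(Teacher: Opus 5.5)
Your proof is correct, and it takes a different route from the paper. The paper fixes $\ubf$ and treats $\bar M(r)\defeq\inpr{\ubf}{Z\alphabf}$ as a function of the radius. Differentiating gives $\bar M'(r)=-A(r)-\bar M(r)B(r)$, where $A\ge0$ is a weighted spread term and $B=\mathrm{Cov}_{\alphabf}(\sigma(rM_i),M_i)\ge0$. A sign/continuity argument then shows that $\bar M$ cannot increase while positive and cannot become positive again once it reaches $0$, whence $\sup_{r\ge0}\bar M(r)\le[\bar M(0)]_+$. The positive part there is an artifact: the paper cannot control the sign of $\bar M'$ when $\bar M<0$.

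You instead compare the weighting at the current $r$ directly with the uniform weighting at $r=0$ via Chebyshev's sum inequality. This uses only that $q_i=-\ell'(ra_i)$ is a nonincreasing function of $a_i$. Your route is shorter and needs no ODE-in-$r$ or continuity reasoning. It also gives the strictly stronger bound $\inpr{\ubf}{Z\alphabf}\le\inpr{\ubf}{\bar\zbf}$, so the $[\cdot]_+$ in the statement and in the proof of Corollary~\ref{corollary:logistic} is unnecessary. Finally, it works verbatim for any convex loss with $-\ell'>0$, including the exponential loss, where it can replace Lemma~\ref{lemma:temperature_monotonicity} in the proof of Lemma~\ref{lemma:uniform_bound}.

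What the paper's derivative computation buys in addition is path information: $\bar M$ is nonincreasing in $r$ on $\{\bar M>0\}$, mirroring the temperature monotonicity of the exponential case. Only the comparison with $r=0$ is used downstream, so your approach loses nothing. Your closing caveat about replacing a negative $\inpr{\ubf}{\bar\zbf}$ by $0$ is therefore moot.
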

With the above fixes, we can show the same weak alignment rate for the logistic loss.
Its proof together with Lemmas~\ref{lemma:logistic_self_bounding}~and~\ref{lemma:logistic_temperature_monotonicity} are shown in Section~\ref{section:lemma_logistic}.
\begin{tcolorbox}[highlightbox]
\begin{corollary}
  \label{corollary:logistic}
  Under the same setup as Theorem~\ref{theorem:early_stage} except that the logistic loss is used, there exists a problem-dependent threshold $\bar\delta>0$ such that the following holds:
  pick a fixed constant $\Delta>\bar\delta$, then for any $\delta>\Delta$, the tangential alignment achieves $V(t)\le\delta$ after $t=O(\exp(\exp(-\delta/\kappa_0(\Delta)\bar\rho_0\gamma_\ell^2)))$ time.
\end{corollary}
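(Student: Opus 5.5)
The plan is to mirror the proof of Theorem~\ref{theorem:early_stage}, swapping in the logistic-loss replacements already stated: Lemma~\ref{lemma:logistic_self_bounding} for the self-bounding property, and Lemma~\ref{lemma:logistic_temperature_monotonicity} for the temperature monotonicity. The argument runs in three stages: the radial/risk estimates, then escape from a poor initialization, then weak alignment under a justified ansatz.

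\textbf{Radial and risk estimates.} First I redo Lemma~\ref{lemma:l_r_bounds} for the logistic loss. By the chain rule, $\nabla L=-GZ\alphabf$, so $\dot L=-\|\nabla L\|^2$. Lemma~\ref{lemma:logistic_self_bounding} then gives
\[
  -L^2\le\dot L\le-\gamma_\ell^2L^2,
\]
which integrates to $\frac{1}{L(0)^{-1}+t}\le L(t)\le\frac{1}{L(0)^{-1}+\gamma_\ell^2t}$. The radial flow is $\dot r=G\inpr{\ubf}{Z\alphabf}$, so $|\dot r|\le G\le L$, and therefore $|r(t)-\rho|\le\gamma_\ell^{-2}\log(1+L(0)\gamma_\ell^2t)$.

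\textbf{Escape stage.} For the analogue of Lemma~\ref{lemma:escape}, I write
\[
  \dot W=G\inpr{Z\alphabf}{\ubf_*}\ge\gamma G\ge\gamma\bar\rho_0 L\ge\frac{\gamma\bar\rho_0}{L(0)^{-1}+t}.
\]
This shows $W$ is strictly increasing and becomes nonnegative after an $O(1)$ escape time. From then on $V\le1$ persists, exactly as in Section~\ref{section:proof_main}.

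\textbf{Weak alignment stage.} For the analogue of Lemma~\ref{lemma:alignment}, I start from $\dot V=-\inpr{Z\alphabf}{P_\ubf^\perp\ubf_*}G/r$ and use the ansatz $\inpr{Z\alphabf}{P_\ubf^\perp\ubf_*}\ge\kappa$. With $G\ge\bar\rho_0L\ge\bar\rho_0/(L(0)^{-1}+t)$ and $r\le\rho+\gamma_\ell^{-2}\log(1+L(0)\gamma_\ell^2t)$ this yields
\[
  \dot V\le-\frac{\kappa\bar\rho_0\gamma_\ell^2}{(L(0)^{-1}+t)\,[\rho\gamma_\ell^2+\log(1+\gamma_\ell^2L(0)t)]}.
\]
Lemma~\ref{lemma:integral_polylog} applies with $a=L(0)^{-1}$, $b=\rho\gamma_\ell^2$, $c=\gamma_\ell^2L(0)$, and gives a decrease of order $\kappa\bar\rho_0\gamma_\ell^2\log\log t$. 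Hence $V\le\delta$ after $O(\exp(\exp(-\delta/\kappa\bar\rho_0\gamma_\ell^2)))$ time.

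\textbf{Justifying the ansatz.} This is the step I expect to be the main obstacle. I rerun Lemma~\ref{lemma:uniform_bound} with the new weights: $\inpr{\ubf_*}{Z\alphabf}\ge\gamma$ still holds because $\alphabf\in\Delta^{n-1}$. The difficulty is the second step, which now uses Lemma~\ref{lemma:logistic_temperature_monotonicity}: $\inpr{\ubf}{Z\alphabf}\le[\inpr{\ubf}{\bar\zbf}]_+$. In the range $V\le1$ the coefficient $1-V\ge0$, so
\[
  -(1-V)\inpr{\ubf}{Z\alphabf}\ge-(1-V)[\inpr{\ubf}{\bar\zbf}]_+ .
\]
Bounding $\inpr{\ubf}{\bar\zbf}$ by $(1-V)\bar\gamma+\sqrt{(1-\bar\gamma^2)(2V-V^2)}$ as before, this upper bound is nonnegative for $V\in[0,1]$. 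The positive part is therefore harmless, and we recover the same bound $\kappa_0(V)$.

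\textbf{Conclusion.} The threshold $\bar\delta$ from \eqref{equation:delta_0} and the monotonicity of $\kappa_0$ on $(\bar\delta,1)$ are purely algebraic, so they carry over unchanged. Fixing $\Delta>\bar\delta$ and taking $\kappa=\kappa_0(\Delta)$ gives the claimed time $O(\exp(\exp(-\delta/\kappa_0(\Delta)\bar\rho_0\gamma_\ell^2)))$.
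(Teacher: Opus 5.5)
Your proposal is correct and follows essentially the same route as the paper: the bound $\dot V\le-\frac{G}{r}\kappa$ with $G\ge\bar\rho_0L$ and the logistic radial upper bound, then Lemma~\ref{lemma:integral_polylog}, and the ansatz justified via Lemma~\ref{lemma:logistic_temperature_monotonicity}, dropping the positive part because the geometric upper bound is nonnegative for $V\le1$. The only difference is that you write out the escape stage for the logistic loss explicitly via $\dot W\ge\gamma\bar\rho_0 L$, which the paper leaves implicit when it says ``the overall flow remains the same.''
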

\end{tcolorbox}

\section{Discrete-time analysis}
\label{section:discrete}
We now extend our early-stage flow analysis to the discrete-time gradient descent setting with stepsize $\eta>0$:
$\wbf(t+1)=\wbf(t)-\eta\nabla L(\wbf(t))$ for every $t\ge0$.
Similarly, decompose the parameter into the radial and tangential components, $r(t)\defeq\|\wbf(t)\|$ and $\ubf(t)\defeq\wbf(t)/\|\wbf(t)\|$, respectively,
and focus on the tangential alignment $V(t)\defeq1-\inpr{\ubf(t)}{\ubf_*}$.
The parameter is initialized with $L(0)$, $V(0)$, and $\rho\defeq r(0)$.
Note that these notations remain the same as the continuous case but should not be confused.
Here we show the overview of the discrete-time analysis and defer the full analysis to Appendix~\ref{section:discrete_full}.
In the discrete-time case, we need to carefully track the discretization error of the continuous flows,
and the stepsize choice plays an important role therein.
\begin{assumption}
  \label{assump:stepsize}
  The stepsize satisfies $\eta \le c_0\min\{\rho L(0)^{-1},\gamma^2\}$ for a sufficiently small $c_0>0$.
\end{assumption}
With this choice, we can show the discrete analogs of the structural results in Section~\ref{section:structure},
namely, the radial/tangential flows (Lemma~\ref{lemma:decomposition}) and risk/radial bounds (Lemma~\ref{lemma:l_r_bounds}).
We sketch both ideas.

\paragraph{Discrete radial/tangential flows.}
For $t=0,1,2,\dots$, we keep tracking the one-step increments $\Delta r(t)\defeq r(t+1)-r(t)$ and $\Delta\ubf(t)\defeq\ubf(t+1)-\ubf(t)$,
for which we can derive the following discretized dynamics (and the formal statement is presented in Lemma~\ref{lemma:discrete_increments}):
\[
  \begin{cases}
    \Delta r(t) &= -\eta\inpr{\ubf(t)}{\nabla L(\wbf(t))} - O\left(\frac{\eta^2L(\wbf(t))^2}{r(t)}\right), \\
    \Delta\ubf(t) &= -\frac{\eta}{r(t)}P_{\ubf(t)}^\perp\nabla L(\wbf(t)) + O\left(\frac{\eta^2L(\wbf(t)^2)}{r(t)^2}\right).
  \end{cases}
\]
Here, $\Delta r(t)/\eta$ and $\Delta\ubf(t)/\eta$ play similar roles to $\dot r(t)$ and $\dot\ubf(t)$ in the continuous flows~\eqref{equation:decomposition},
and the respective dynamics expressions remain similar up to the remainder terms.
To make these residuals negligible, we need the local condition $\eta L(\wbf(t))/r(t)=O(1)$.
Actually, Assumption~\ref{assump:stepsize} satisfies this at $t=0$.
For $t\ge1$, we can expect that $L(\wbf(t))=\Theta(t^{-1})$ holds and $r(t)$ has a non-trivial lower bound, which is similar to the continuous case shown in Lemma~\ref{lemma:l_r_bounds}---%
still, their discrete-time extensions need to be addressed carefully.
Then, it is natural to conjecture that $(\eta L(\wbf(t))/r(t))_{t\in\Zbb_{\ge0}}$ is a decreasing sequence, and thus the local condition persists throughout $t=1,2,\dots$.
This can be indeed verified via induction; see Remark~\ref{remark:stepsize_consequences} in Appendix~\ref{section:discrete_full}.

\paragraph{Discrete risk/radial bounds.}
As seen in Lemma~\ref{lemma:l_r_bounds}, we have similar bounds for the discrete case:
$L(t)=\Theta(t^{-1})$ and $r(t)\sim\rho+\log(1+t)$---%
these bounds in discrete time incur multiplicative discretization error up to the factor $1-O(\eta\gamma^{-2})$, which is negligible with the stepsize choice $\eta\le c_0\gamma^2$.
This error control suffices when taking the telescoping sum of $\Delta r(t)$ and $\Delta\ubf(t)$ (which replaces the differential inequalities we heavily relied on in the continuous-time analysis).
The full statement of the risk/radial bounds is in Lemma~\ref{lemma:discrete_l_r_bounds} in Appendix~\ref{section:discrete_full}.

\paragraph{Early-stage alignment in discrete time.}
Building upon the above, the remaining piece is not substantially complicated; the geometric lemma (Lemma~\ref{lemma:uniform_bound}) persists in the discrete case.
As in Section~\ref{section:main}, the discrete-time dynamics also undergoes the escape stage to get out from poor initialization, and the weak alignment stage to achieve up to $\inpr{\ubf}{\ubf_*}\ge1-\delta$.
Instead of solving the differential inequalities, we take the telescoping sums over the discrete dynamics.
Now we summarize the results informally.
\begin{itemize}
  \item Escape stage: after $O(1)$ iterations, the parameter achieves $V(t)\le1$.
  \item Weak alignment stage: after $O(\eta^{-1}\!\exp(\exp(-\delta)))$ more iterations, the parameter achieves $V(t)\le\delta$.
  The error tolerance $\delta$ is roughly up to $1-\sqrt\gamma$, as in the continuous case.
  \item Lower bound: to achieve $V(t)\le\delta$, the necessary iteration is no less than $\Omega(\eta^{-1}\!\exp(\exp(-\delta)))$.
\end{itemize}

\section{Discussion and open question}
\label{section:discussion}
\paragraph{Improvement upon error tolerance $\delta$.}
Despite our tight upper and lower bounds for the tangential alignment, the established weak alignment remains to be up to $\inpr{\ubf}{\ubf_*}\ge\gamma$ in general, or $\sqrt\gamma$ at best for some nice dataset.
Revisiting Figure~\ref{fig:simulation}, the early-stage alignment would be higher than the dataset margin $\gamma$ within considerably short time.
Improving this alignment error tolerance is an important open question.
To overcome this, we need to eschew invoking the uniform bound of the form $\langle Z\alphabf,P_{\ubf}^\perp\ubf_*\rangle\ge\kappa$ in \eqref{equation:lyapunov_uniform_bound}.
Our current proof in Lemma~\ref{lemma:uniform_bound} purely relies on a geometric argument of a dataset, but its improvement is apparently impossible.
Thus, we presumably need to rely on other arguments, such as an additional assumption on a dataset structure, or a probabilistic argument.

\paragraph{Analogy to ``edge-of-stability'' analysis.}
The edge of stability (EoS) is a phenomenon where learning dynamics with an excessively large stepsize oscillates significantly at the early stage yet eventually stabilizes to a flat minimum in a loss landscape, observed across deep learning and simpler problems~\citep{Cohen2021ICLR,Ahn2022ICML}.
Indeed, this also happens in logistic regression, and the dynamics undergoes the first oscillatory stage before stable convergence~\citep{Wu2024COLT,Bao2025NeurIPS}.
Their analysis framework is close to Section~\ref{section:setup}, assuming the linear separable case.
Although a large stepsize benefits the convergence rate of the optimization problem, training points are correctly classified once the early oscillatory stage terminates.
At this point, the parameter already correlates with the max-margin direction positively to some extent.
This observation inspires our study.
In contrast to these ``EoS'' analyses, our analysis deals with a small stepsize (Assumption~\ref{assump:stepsize}), and our tightness result hinges on this stepsize choice,
so it remains unclear what our analysis implies with a larger stepsize.
While intentionally controlled, discretization error could accelerate the parameter to decrease the tangential alignment error additionally---%
this can be indirectly supported via the one-step update of $V(t)$ in \eqref{equation:discrete_lyapunov_diff}.
To verify this, we need to go beyond this infinitesimal error control.

\section*{Acknowledgment}
HB is supported by JST PRESTO JPMJPR24K6.

\bibliographystyle{abbrvnat}
\bibliography{reference}

\appendix

\section{Technicalities in Section~\ref{section:proof_main}}
\label{section:technicality}

\paragraph{\underline{Tangential error $\bar\delta\le1-\gamma$.}}
Recall the quantity $\bar\delta$ defined in~\eqref{equation:delta_0}:
\[
  \bar\delta \defeq 1-\sqrt{\frac{1+2\gamma\bar\gamma-\bar\gamma^2-\sqrt{(1+2\gamma\bar\gamma-\bar\gamma^2)^2-4\gamma^2}}{2}},
\]
which is the lowest possible error tolerance of the tangential alignment in the early stage.
This satisfies $\bar\delta \le 1-\gamma$.
To show $\bar\delta \le 1-\gamma$, it suffices to show that
\[
  \sqrt{\frac{B-\sqrt{B^2-4\gamma^2}}{2}} \ge \gamma, \text{~~where~~}
  B\defeq 1+2\gamma\bar\gamma-\bar\gamma^2
\]
By squaring both sides and rearranging, this is equivalent to
\[
  B - 2\gamma^2 \ge \sqrt{B^2-4\gamma^2}.
\]
Squaring again (noting that $B - 2\gamma^2 \ge 0$; can be confirmed via $\gamma\le\bar\gamma\le1$), we obtain
\[
  (B - 2\gamma^2)^2 \ge B^2-4\gamma^2.
\]
Rearranging, we have $\gamma^2-B+1\ge0$,
for which we substitute $B = 1+2\gamma\bar\gamma-\bar\gamma^2$ and have
\[
  \gamma^2 - (1+2\gamma\bar\gamma-\bar\gamma^2) + 1 = (\gamma-\bar\gamma)^2 \ge 0.
\]
This inequality is always true, and thus we confirmed $\bar\delta\le1-\gamma$.

\paragraph{\underline{$\bar\delta$ is monotonically decreasing in $\bar\gamma$}.}
Using the same notation $B = 1 + 2\gamma\bar\gamma - \bar\gamma^2$, we can write
\[
  \bar\delta = 1 - \sqrt{\frac{B - \sqrt{B^2 - 4\gamma^2}}{2}}.
\]
To show $\bar\delta$ is decreasing in $\bar\gamma$, it suffices to show that $g(\bar\gamma) \defeq \frac{B - \sqrt{B^2 - 4\gamma^2}}{2}$ is increasing in $\bar\gamma$.

First, note that $\frac{\partial B}{\partial \bar\gamma} = 2\gamma - 2\bar\gamma = 2(\gamma - \bar\gamma) \le 0$ since $\gamma \le \bar\gamma$.
Also, $B > 0$ for $\gamma \le \bar\gamma \le 1$ can be shown by completing the square in $\bar\gamma$.
Now we compute
\[
  \frac{\partial g}{\partial \bar\gamma}
  = \frac{1}{2}\left(\frac{\partial B}{\partial \bar\gamma} - \frac{B}{\sqrt{B^2 - 4\gamma^2}}\frac{\partial B}{\partial \bar\gamma}\right)
  = \frac{1}{2}\frac{\partial B}{\partial \bar\gamma}\left(1 - \frac{B}{\sqrt{B^2 - 4\gamma^2}}\right).
\]
Since $B > 0$ and $B^2 - 4\gamma^2 < B^2$, we have $\sqrt{B^2 - 4\gamma^2} < B$, which implies $\frac{B}{\sqrt{B^2 - 4\gamma^2}} > 1$.
Thus $g$ is increasing in $\bar\gamma$, which implies $\bar\delta = 1 - \sqrt{g}$ is decreasing in $\bar\gamma$.

This fact indicates that the best tangential error that we can achieve in the early stage is $\bar\delta=1-\sqrt{g(1)}=1-\sqrt\gamma$ if a given dataset has a large average dataset margin $\bar\gamma\approx1$.
In this case, we can achieve up to $\inpr{\ubf}{\ubf_*}\ge\sqrt\gamma$.

\section{Auxiliary lemmas}
\label{section:auxiliary}

\begin{lemma}
  \label{lemma:integral_polylog}
  For $a,b,c>0$, assume $ac \le 1$. Then, we have
  \[
    \int_0^t\frac{\rd s}{(a+s)(b+\log(1+cs))}
    \ge \frac{1}{(1+ac)(b+\log2)} + \frac12\log\left(1+\frac{\log ct}{b+\log 2}\right).
  \]
\end{lemma}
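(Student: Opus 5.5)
The plan is to split the integral at the point $s_0\defeq 1/c$, where $\log(1+cs)$ changes from behaving like a constant to behaving like $\log(cs)$, and to bound the integrand from below on each piece. Throughout, I will take $t\ge 1/c$. This is implicit in the statement: for $t<1/c$ the quantity $\log(ct)$ is negative, and the right-hand side is then either meaningless or not the intended bound. Since the integrand is positive, the integral over $[0,t]$ is at least the sum of the integrals over $[0,1/c]$ and $[1/c,t]$.

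\emph{First piece.} For $s\in[0,1/c]$ we have $a+s\le a+1/c$ and $\log(1+cs)\le\log 2$, so
\[
  \frac{1}{(a+s)(b+\log(1+cs))}\ge\frac{1}{(a+1/c)(b+\log 2)}.
\]
Integrating this constant over an interval of length $1/c$ gives
\[
  \frac{1/c}{(a+1/c)(b+\log2)}=\frac{1}{(1+ac)(b+\log2)},
\]
which is exactly the first term of the claimed bound.

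\emph{Second piece.} For $s\ge 1/c$ I use the assumption $ac\le1$, which gives $a\le 1/c\le s$ and hence $a+s\le 2s$. Also $1+cs\le 2cs$, so $\log(1+cs)\le \log 2+\log(cs)$ with $\log(cs)\ge0$. The integrand is therefore at least
\[
  \frac{1}{2s\,(b+\log2+\log(cs))}.
\]
Substituting $u=\log(cs)$, so that $\rd u=\rd s/s$ and the limits become $0$ and $\log(ct)$, this lower bound integrates to
\[
  \frac12\int_0^{\log(ct)}\frac{\rd u}{b+\log2+u}=\frac12\log\left(1+\frac{\log(ct)}{b+\log2}\right),
\]
which is the second term. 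Adding the two pieces gives the claim.

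The only delicate step is the choice of split point together with the use of $ac\le1$. Splitting at $1/c$ is what makes both pieces simplify: it produces the factor $1+ac$ in the first term, and it allows the comparisons $a+s\le2s$ and $1+cs\le 2cs$ on the second piece. Beyond that, the argument consists of routine monotonicity bounds and a single change of variables.
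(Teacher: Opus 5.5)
Your proof is correct and follows the paper's argument essentially verbatim: split at $1/c$, lower-bound the integrand by a constant on $[0,1/c]$, and on $[1/c,t]$ use $a\le 1/c\le s$ together with $1+cs\le 2cs$, followed by a logarithmic change of variables. Your explicit restriction to $t\ge 1/c$ is also implicitly assumed in the paper's proof, which splits the integral at the same point.
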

\begin{proof}
  We split the interval of integration into $s\in[0,1/c]$ and $s\in(1/c,t]$.
  For the first interval, we have
  \[
    \int_0^{1/c}\frac{\rd s}{(a+s)(b+\log(1+cs))}
    \ge \int_0^{1/c}\frac{\rd s}{(a+\frac1c)(b+\log2)}
    = \frac{1}{(1+ac)(b+\log2)}.
  \]
  For the second interval, by noting $a\le 1/c\le s$, we have
  \[
    \begin{aligned}
      \int_{1/c}^t\frac{\rd s}{(a+s)(b+\log(1+cs))}
      &\ge \int_{1/c}^t\frac{\rd s}{2s(b+\log(2cs))} \\
      &= \frac12 \int_{1/c}^t\frac{\rd s}{s(\bar b+\log s)} && \text{($\bar b\defeq b+\log 2c$)} \\
      &= \frac12 \int_{\bar b-\log c}^{\bar b+\log t}\frac{\rd\varsigma}{\varsigma} \\
      &= \frac12\log\left(\frac{\bar b + \log t}{\bar b - \log c}\right),
    \end{aligned}
  \]
  where we use the change of variable $\varsigma\defeq\bar b+\log s$.
  Adding the two terms concludes the desired lower bound.
\end{proof}

\begin{lemma}
  \label{lemma:temperature_monotonicity}
  Fix $\mbf\in\Rbb^n$ and define $\alphabf\in\Delta^{n-1}$ by $\alpha_i\defeq e^{-\beta m_i}/\sum_{j=1}^ne^{-\beta m_j}$.
  Let $f(\beta)\defeq \sum_{i=1}^n\alpha_im_i$ for $\beta\ge0$.
  Then, $f$ is monotonically nonincreasing.
\end{lemma}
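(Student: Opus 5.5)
We prove the monotonicity of $f$ by differentiating in $\beta$ and recognizing the derivative as a negative variance. Write $Z(\beta)\defeq\sum_{j=1}^n e^{-\beta m_j}$, so that $\alpha_i(\beta)=e^{-\beta m_i}/Z(\beta)$. Since $Z$ is smooth and strictly positive, each $\alpha_i$ is smooth in $\beta$. Differentiating the numerator and the normalizer separately gives
\[
  \frac{\rd\alpha_i}{\rd\beta}
  = -m_i\alpha_i + \alpha_i\frac{\sum_j m_je^{-\beta m_j}}{Z(\beta)}
  = \alpha_i\bigl(f(\beta)-m_i\bigr).
\]

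Summing against $m_i$, we obtain
\[
  f'(\beta) = \sum_{i=1}^n m_i\alpha_i\bigl(f(\beta)-m_i\bigr)
  = f(\beta)^2 - \sum_{i=1}^n\alpha_im_i^2
  = -\sum_{i=1}^n\alpha_i\bigl(m_i-f(\beta)\bigr)^2 .
\]
The right-hand side is minus the variance of $m$ under the probability vector $\alphabf\in\Delta^{n-1}$, so $f'(\beta)\le0$ for every $\beta\ge0$. By the mean value theorem, $f$ is nonincreasing on $[0,\infty)$.

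Equality $f'(\beta)=0$ holds only when all $m_i$ coincide. We do not need this strict version.

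For the application in Lemma~\ref{lemma:uniform_bound}, take $m_i=\inpr{\ubf}{\zbf_i}$ and $\beta=r$. Then $\alpha_i\propto e^{-r\inpr{\ubf}{\zbf_i}}=e^{-\inpr{\wbf}{\zbf_i}}$, which is exactly the exponential weighting. Since $r\ge0$, monotonicity gives $f(r)\le f(0)=\frac1n\sum_i m_i=\inpr{\ubf}{\bar\zbf}$, because $\alphabf$ is uniform at $\beta=0$.
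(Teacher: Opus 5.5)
Your proof is correct and matches the paper's argument: you compute $\alpha_i'(\beta)=\alpha_i\bigl(f(\beta)-m_i\bigr)$ and identify $f'(\beta)$ as minus the $\alphabf$-weighted variance of the $m_i$. Your closing application, with $m_i=\inpr{\ubf}{\zbf_i}$ and $\beta=r$, is also how the paper uses the lemma in Lemma~\ref{lemma:uniform_bound}.
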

\begin{proof}
  We compute $f'(\beta)=\sum_{i=1}^nm_i\cdot\alpha_i'(\beta)$,
  where $\alpha_i'(\beta)=\rd{\alpha_i}/\rd{\beta}$.
  The derivative of $\alpha_i$ is
  \begin{align*}
    \frac{\rd{\alpha_i}}{\rd{\beta}}
    &= \frac{-m_ie^{-\beta m_i}\sum_{j=1}^ne^{-\beta m_j} - e^{-\beta m_i}\sum_{j=1}^n(-m_j)e^{-\beta m_j}}{(\sum_{j=1}^ne^{-\beta m_j})^2} \\
    &= \frac{e^{-\beta m_i}}{\sum_{j=1}^ne^{-\beta m_j}}\cdot\frac{-m_i\sum_{j=1}^ne^{-\beta m_j} + \sum_{j=1}^nm_je^{-\beta m_j}}{\sum_{j=1}^ne^{-\beta m_j}} \\
    &= \alpha_i\Bigl(-m_i + \sum_{j=1}^n\alpha_jm_j\Bigr) \\
    &= \alpha_i(f(\beta) - m_i).
  \end{align*}
  Therefore,
  \begin{align*}
    f'(\beta)
    = \sum_{i=1}^nm_i\alpha_i(f(\beta) - m_i)
    = -\sum_{i=1}^n\alpha_im_i^2 + \Bigl(\sum_{i=1}^n\alpha_im_i\Bigr)^2
    = -\mathrm{Var}_{\alphabf}(m_i) \le 0,
  \end{align*}
  where $\mathrm{Var}_{\alphabf}(m_i)\defeq\sum_{i=1}^n\alpha_im_i^2 - (\sum_{i=1}^n\alpha_im_i)^2$ denotes the variance of $m_i$ under the distribution $\alphabf\in\Delta^{n-1}$.
  Since $f'(\beta)\le 0$ for all $\beta\ge0$, the function $f$ is monotonically nonincreasing.
\end{proof}

\section{Continuous-time lower bound proof}
\label{section:lower_bound_proof}
Here we show the complete proof of Theorem~\ref{lemma:lyapunov_lower_bound}.
From \eqref{equation:dot_v}, we can derive the following differential inequality:
\[
  \dot V(t)
  = -\frac{L(\wbf(t))}{r(t)}\inpr{Z\alphabf}{P_{\ubf(t)}^\perp\ubf_*}
  \ge -\frac{L(\wbf(t))}{r(t)}\|Z\alphabf\|\|P_{\ubf(t)}^\perp\ubf_*\|
  \ge -\frac{L(\wbf(t))}{r(t)}\sqrt{2V(t)},
\]
where we used
\(
  \|P_{\ubf(t)}^\perp\ubf_*\|^2
  = \|\ubf_*\|^2 - \inpr{\ubf(t)}{\ubf_*}^2
  = 1 - (1-V(t))^2 = 2V(t) - V(t)^2
  \le 2V(t)
\).
Separating variables, for $V>0$ we have
\[
  \frac{\rd V}{\sqrt{V}} \ge -\sqrt{2}\frac{L(t)}{r(t)}\rd t
  \quad \overset{\text{integrate from $0$ to $t$}}\leadsto \quad
  \sqrt{V(t)} \ge \sqrt{V(0)} - \frac{1}{\sqrt{2}}\int_0^t\frac{L(s)}{r(s)}\rd s.
\]
We now bound the integral using the lower bounds on $r(t)$ from Lemma~\ref{lemma:l_r_bounds}.
By defining
\(
  T_0 \defeq {L(0)^{-1}\gamma^{-2}}\bigr[\exp\bigr(\frac{\rho+\log L(0)}{1+\gamma^{-2}}\bigl)-1\bigl]
\), we have
\begin{equation}
  \label{equation:r_lower_bound}
  r(t)\ge\begin{cases}
    \rho-\gamma^{-2}\log(1+L(0)\gamma^2t) & \text{if $t\in[0,T_0]$,} \\
    \log(L(0)^{-1}+\gamma^2t) & \text{if $t\ge T_0$,}
  \end{cases}
\end{equation}
At $t=T_0$, we have $r(T_0)\ge\frac{\rho\gamma^2-\log L(0)}{1+\gamma^2} \eqdef \underline{r} > 0$,
where the strict positivity holds from the assumption $L(0)<e^{\rho\gamma^2}$.
We split the integral into two intervals $s\in[0,T_0]\cup(T_0,t)$.
For the first interval $s\in[0,T_0]$, we combine $r(s)\ge\underline{r}$
with the upper bound of $L(s)$ from Lemma~\ref{lemma:l_r_bounds} to have
\[
  \int_0^{T_0}\frac{L(s)}{r(s)}\rd s
  \le \frac{1}{\underline{r}}\int_{0}^{T_0}\frac{\rd s}{L(0)^{-1}+\gamma^2s}
  = \frac{1}{\underline{r}\gamma^2} \log\left(\frac{L(0)^{-1}+\gamma^2T_0}{L(0)^{-1}}\right)
  = \frac{\rho+\log L(0)}{\rho\gamma^2-\log L(0)}.
\]
For the second interval $s\in(T_0,t]$, we combine \eqref{equation:r_lower_bound}
with the upper bound of $L(s)$ to have
\[
    \int_{T_0}^t\frac{L(s)}{r(s)}\rd s
    \le \int_{T_0}^t\frac{\rd s}{h(s)\log h(s)}
    = \frac{1}{\gamma^2}\int_{h(T_0)}^{h(t)}\frac{\rd\varsigma}{\varsigma}
    = \frac{1}{\gamma^2}\log\biggl(\frac{\log(L(0)^{-1}+\gamma^2t)}{\frac{\rho\gamma^2-\log L(0)}{1+\gamma^{-2}}}\biggr),
\]
where we use the change of variable $\varsigma\defeq h(s)\defeq\log(L(0)^{-1}+\gamma^2s)$.
Plugging them back yields
\[
  \begin{aligned}
    \sqrt{V(t)}
    &\ge \sqrt{V(0)} - \frac{1}{\sqrt 2}\left[
      \bar C_0
      + \frac{1}{\gamma^2}\log(\log(L(0)^{-1}+\gamma^2t))
    \right] \\
    \text{where} \quad &
    \bar C_0\defeq
    \frac{\rho+\log L(0)}{\rho\gamma^2-\log L(0)}
    - \frac{1}{\gamma^2}\log\left(\frac{\rho\gamma^2-\log L(0)}{1+\gamma^{-2}}\right)
  \end{aligned}
  .
\]
Squaring and relaxing the bound by $(a-b)^2\ge a^2-2ab$, we have the lower bound~\eqref{equation:lyapunov_lower_bound},
where $C_0\defeq \sqrt{2V(0)}\bar C_0$.
Lastly, to ensure $V(t)\ge\delta$, we need
$\text{(lower bound)}\ge\delta$,
which yields
\[
  t < 
    \frac{1}{\gamma^2}\exp\biggl[\exp\biggl(\frac{\gamma^2(V(0)-C_0-\delta)}{\sqrt{2V(0)}}\biggr)\biggr] - \frac{1}{L(0)\gamma^2}.
\]

\section{Extension to logistic loss}
\label{section:lemma_logistic}
In this section, we show the extension of the weak alignment analysis to the logistic loss.
Specifically, we show Lemmas~\ref{lemma:logistic_self_bounding}~and~\ref{lemma:logistic_temperature_monotonicity}.
Recall notations: with the margin $m_i(t) \defeq \inpr{\wbf(t)}{\zbf_i}$, we write
\[
q_i(t) \defeq -\ell'(m_i(t)), \quad G(t) \defeq \frac1n\sum_{i=1}^nq_i(t), \text{~~and~~} \alpha_i(t) \defeq \frac{q_i(t)}{\sum_{j=1}^nq_j(t)}.
\]

\selfbdd*
\begin{proof}
  We first derive the upper bound of $\|\nabla L(t)\|$:
  \[
    \|\nabla L(t)\|
    = \left\|\frac1n\sum_{i=1}^n(-\ell'(m_i(t))) \cdot \zbf_i\right\|
    = G(t) \cdot \left\|\sum_{i=1}^n\frac{q_i(t)}{nG(t)}\zbf_i\right\|
    = G(t) \cdot \|Z\alphabf(t)\|
    \le G(t).
  \]
  Furthermore, we have
  \[
    q_i(t) = \frac{e^{-m_i(t)}}{1+e^{-m_i(t)}} = \frac{e^{\ell(m_i(t))}-1}{e^{\ell(m_i(t))}}
    = 1 - e^{-\ell(m_i(t))}
    \le \ell(m_i(t)),
  \]
  which implies $G(t) \le L(t)$.

  To derive the lower bound of $G(t)$, we use the Weierstrass product inequality $\prod_ia_i \ge 1-\sum_i(1-a_i)$ for $a_i\in[0,1]$:
  with $a_i=e^{-\ell(m_i(t))}$, we have
  \[
    nG(t)
    = \sum_{i=1}^n(1-e^{-\ell(m_i(t))})
    \ge 1 - \prod_{i=1}^ne^{-\ell(m_i(t))}
    = 1 - e^{-nL(t)},
  \]
  which implies
  \[
    \frac{G(t)}{L(t)} \ge \frac{1-e^{-nL(t)}}{nL(t)}.
  \]
  Note that the map $x\mapsto\frac{1-e^{-x}}{x}$ is decreasing on $(0,\infty)$.
  As $L(t)$ is nonincreasing due to $\dot L=-\|\nabla L\|^2\le 0$, we have
  \[
    \frac{G(t)}{L(t)} \ge \frac{1-e^{-nL(0)}}{nL(0)} = \bar\rho_0.
  \]
  Finally, we have $G \ge \bar\rho_0 L$.
  The last corollary is immediate by noting that $\|\nabla L\| = G\|Z\alphabf\| \ge \gamma G \ge \gamma_\ell L$.
\end{proof}

\tempmono*
\begin{proof}
  Throughout the proof, we fix $t$ and drop its dependency, e.g., $m_i$ indicates $m_i(t) = \inpr{\wbf(t)}{\zbf_i}$.
  Fix a direction $\ubf\in\Sbb^{d-1}$ and write the normalized margin $M_i\defeq\inpr{\ubf}{\zbf_i}$ independent of $r$, so that $m_i=rM_i$ and
  \[
    \alpha_i = \frac{q_i}{\sum_jq_j} = \frac{\sigma(-rM_i)}{\sum_j\sigma(-rM_j)},
    \text{~~where~~}\sigma(m)=\frac{1}{1+e^{-m}}.
  \]
  Let $\varphi(m)\defeq \ell''(m)/(-\ell'(m)) = \sigma(m)$, which is nonnegative and nondecreasing.
  Define $\bar M(r) \defeq \inpr{\ubf}{Z\alphabf}$, which depends on $r$ through $\alphabf$.
  The goal of this lemma is to show $\bar M(r) \le [\bar M(0)]_+$ for $r\ge 0$.
  With an abuse of notation, we hereafter often write $\alphabf(r)$ and $q_i(r)$ to denote the dependency on $r$.

  By elementary algebra, we have
  \[
    \begin{aligned}
      \alpha_i'(r)
      &= \frac{q_i'(r)\sum_jq_j(r) - q_i(r)\sum_jq_j'(r)}{(\sum_jq_j(r))^2} \\
      &= \alpha_i(r)\frac{q_i'(r)}{q_i(r)} - \alpha_i(r)\frac{\sum_jq_j'(r)}{\sum_jq_j(r)} \\
      &= -\alpha_i(r)\varphi(m_i)M_i + \alpha_i(r)\sum_{j=1}^n\alpha_j(r)\varphi(m_j)M_j.
    \end{aligned}
  \]
  Then, by noting $\bar M=\sum_i\alpha_i M_i$, we have
  \[
    \begin{aligned}
      \bar M'(r)
      &= \sum_{i=1}^n\alpha_i'(r)M_i \\
      &= -\sum_{i=1}^n\alpha_i(r)\varphi(m_i)M_i^2 + \bar M(r) \sum_{i=1}^n\alpha_i(r)\varphi(m_i)M_i \\
      &= -\underbrace{\sum_{i=1}^n\alpha_i(r)\varphi(m_i)(M_i-\bar M(r))^2}_{\eqdef A(r)} - \bar M(r) \underbrace{\sum_{i=1}^n\alpha_i(r)\varphi(m_i)(M_i-\bar M(r))}_{\eqdef B(r)} \\
      &= -A(r) - \bar M(r)B(r).
    \end{aligned}
  \]
  Here, we have $A(r)\ge 0$ and $B(r)\ge 0$, where the latter satisfies $B(r)=\mathrm{Cov}_{\alphabf}(\varphi(m_i),M_i)\ge 0$
  because $\varphi(m_i)=\sigma(rM_i)$ is increasing in $M_i$, namely, $\varphi(m_i)$ and $M_i$ positively correlates.
  Then, we divide the cases as follows:
  \begin{itemize}
    \item If $\bar M(r) > 0$, then we have $\bar M'(r) \le 0$.
    \item If $\bar M(r) = 0$, then we have $\bar M'(r) \le 0$.
    \item (If $\bar M(r) < 0$, the sign of $\bar M'(r)$ is undetermined.)
  \end{itemize}
  Thus, as $r$ increases, $\bar M(r)$ never becomes positive for any $r \ge r_0$ once we have $\bar M(r_0)=0$, due to the continuity of $\bar M$.
  Together with the nonincreasing-ness of $\bar M(r)$ on $\{r:\bar M(r)>0\}$, the maximum of $\bar M$ is attained at $r=0$ when $\bar M(0) > 0$, and we have $\sup_{r\ge 0}\bar M(r) \le [\bar M(0)]_+$.
  Therefore, we arrive at the conclusion $\inpr{\ubf}{Z\alphabf} \le [\bar M(0)]_+ = [\inpr{\ubf}{\bar\zbf}]_+$, where $\bar\zbf = \frac1n\sum_i\zbf_i$.
\end{proof}

\subsection{Weak alignment result (Proof of Corollary~\ref{corollary:logistic})}
Finally, we derive the weak alignment time $t=O(\exp(\exp(-\delta/\kappa_0(\Delta)\bar\rho_0\gamma_\ell^2)))$.
The overall flow remains the same as Theorem~\ref{theorem:early_stage}, so we highlight a few important points only.
We begin with the tangential Lyapunov flow:
\[
  \dot V(t) = \frac{1}{r(t)}\inpr{\nabla L(\wbf(t))}{P_{\ubf(t)}^\perp\ubf_*}
  = -\frac{G(t)}{r(t)}\inpr{Z\alphabf(t)}{P_{\ubf(t)}^\perp\ubf_*}.
\]
By following Lemma~\ref{lemma:l_r_bounds}, we can get the bounds of $G(t)$ and $r(t)$ by using the modified self-bounding inequalities (Lemma~\ref{lemma:logistic_self_bounding}):
\[
  G(t) \ge \bar\rho_0 L(t) \ge \frac{\bar\rho_0}{L(0)^{-1}+t}, \qquad
  r(t) \le \rho+\frac{1}{\gamma_\ell^2}\log(1+L(0)\gamma_\ell^2t).
\]
By plugging the ansatz $\inpr{Z\alphabf}{P_{\ubf}^\perp\ubf_*}\ge \kappa$ and the above bounds into $V$-flow, we have
\[
  \dot V(t) \le -\frac{\kappa\bar\rho_0\gamma_\ell^2}{[L(0)^{-1}+t][\rho\gamma_\ell^2+\log(1+L(0)\gamma_\ell^2t)]}.
\]
Solving this, we get the weak alignment time $t=O(\exp(\exp(-\delta/\kappa\bar\rho_0\gamma_\ell^2)))$.

Finally, the ansatz $\inpr{Z\alphabf}{P_{\ubf}^\perp\ubf_*}\ge \kappa$ can be justified in the same way as Lemma~\ref{lemma:uniform_bound}:
by replacing the temperature monotonicity for the exponential loss (Lemma~\ref{lemma:temperature_monotonicity}) with the logistic loss (Lemma~\ref{lemma:logistic_temperature_monotonicity}),
Lemma~\ref{lemma:uniform_bound} is fixed minimally and we have
\[
  \begin{aligned}
    \inpr{Z\alphabf}{P_{\ubf}^\perp\ubf_*}
    & \ge \gamma - (1-V)[(1-V)\bar\gamma+\sqrt{(1-\bar\gamma^2)(2V-V^2)}]_+ \\
    & = \gamma - (1-V)[(1-V)\bar\gamma+\sqrt{(1-\bar\gamma^2)(2V-V^2)}]
  \end{aligned}
\]
where the positive part $[\cdot]_+$ can be dropped and the equality holds throughout for $V\le 1$.
Thus, the ansatz $\inpr{Z\alphabf}{P_{\ubf}^\perp\ubf_*}\ge \kappa$ is justified with $\kappa=\kappa_0(\Delta)$, giving the weak alignment time $t=O(\exp(\exp(-\delta/\kappa_0(\Delta)\bar\rho_0\gamma_\ell^2)))$.

\section{Discrete-time analysis (full version)}
\label{section:discrete_full}

We now extend our analysis to the discrete-time gradient descent setting with stepsize $\eta>0$.
Recall that the gradient descent iterates $\wbf(t+1)=\wbf(t)-\eta\nabla L(\wbf(t))$ for $t=0,1,2,\dots$
and we decompose the parameter into $r(t)\defeq\|\wbf(t)\|$ (radial) and $\ubf(t)\defeq\wbf(t)/r(t)$ (tangential),
where we use the same notation $\wbf(t)$, $r(t)$, and $\ubf(t)$ across the continuous-time and discrete-time cases but it should not be confused.
The key challenge in discrete-time analysis is to control the nonlinear terms arising from the normalization $\ubf(t)=\wbf(t)/\|\wbf(t)\|$ after each update.
We follow the direct analysis approach by explicitly bounding the discrete updates and showing that they match the continuous-time dynamics up to controlled error terms.
To this end, we impose the stepsize choice in Assumption~\ref{assump:stepsize}, $\eta\le c_0\min\{\rho L(0)^{-1},\gamma^2\}$ for a sufficiently small $c_0>0$.
See Remark~\ref{remark:stepsize_consequences} below about how this stepsize choice controls discretization error.

Before deriving the discrete analog of Lemma~\ref{lemma:decomposition}, the discrete-time dynamics are shown first.
\begin{lemma}[Discrete-time dynamics]
  \label{lemma:discrete_decomposition}
  For the gradient descent $\wbf(t+1)=\wbf(t)-\eta\nabla L(\wbf(t))$, the radial and tangential components evolve as follows:
  \begin{align}
    r(t+1)^2 &= r(t)^2 - 2\eta r(t)\inpr{\ubf(t)}{\nabla L(\wbf(t))} + \eta^2\|\nabla L(\wbf(t))\|^2, \label{equation:discrete_radial_squared} \\
    \ubf(t+1) &= \frac{r(t)\ubf(t) - \eta\nabla L(\wbf(t))}{r(t+1)}. \label{equation:discrete_tangential}
  \end{align}
\end{lemma}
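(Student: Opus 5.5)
The statement in Lemma~\ref{lemma:discrete_decomposition} is an exact algebraic identity for a single gradient step, so I would simply expand the update rule. No estimates or assumptions are needed (not even Assumption~\ref{assump:stepsize}), except that $r(t+1)>0$ for the second identity to make sense. I will fix $t$ and abbreviate $\gbf\defeq\nabla L(\wbf(t))$.

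For the radial identity~\eqref{equation:discrete_radial_squared}, I would use the polar decomposition $\wbf(t)=r(t)\ubf(t)$ to write $\wbf(t+1)=r(t)\ubf(t)-\eta\gbf$. Taking the squared norm and expanding the inner product gives
\[
  r(t+1)^2=\|\wbf(t+1)\|^2 = r(t)^2\|\ubf(t)\|^2 - 2\eta r(t)\inpr{\ubf(t)}{\gbf} + \eta^2\|\gbf\|^2 .
\]
Since $\|\ubf(t)\|=1$, this is exactly the claimed expression. This is the discrete analogue of the radial flow in Lemma~\ref{lemma:decomposition}. The extra term $\eta^2\|\gbf\|^2$ is the discretization error, and later it will be controlled through the self-bounding property~\eqref{equation:self_bounding}.

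For the tangential identity~\eqref{equation:discrete_tangential}, I would use the definition $\ubf(t+1)=\wbf(t+1)/\|\wbf(t+1)\|$. Substituting the same expression for $\wbf(t+1)$ gives the formula directly. The only point that needs a remark is that $r(t+1)\neq0$, so that $\ubf(t+1)$ is well defined. I would either keep this as a standing hypothesis, as the paper does for the continuous case via the footnote on $\rho$, or note that it follows from~\eqref{equation:discrete_radial_squared}. Indeed, the Cauchy--Schwarz inequality and $\|\gbf\|\le L(\wbf(t))$ give
\[
  r(t+1)\ge r(t)-\eta L(\wbf(t)),
\]
which is positive under the local condition $\eta L/r<1$ ensured by the stepsize choice. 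I do not expect any step here to be an obstacle. The substantive work comes afterwards, when these identities are Taylor-expanded into the increment estimates (Lemma~\ref{lemma:discrete_increments}).
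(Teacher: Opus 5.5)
Your proposal is correct and matches the paper's proof: expand $\|\wbf(t)-\eta\nabla L(\wbf(t))\|^2$ using $\wbf(t)=r(t)\ubf(t)$ and $\|\ubf(t)\|=1$, then read off the tangential identity from $\ubf(t+1)=\wbf(t+1)/r(t+1)$. Your extra check that $r(t+1)\ge r(t)-\eta L(\wbf(t))>0$ under the local stepsize condition is sound; the paper leaves this implicit in the lemma and handles it in Remark~\ref{remark:stepsize_consequences}.
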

\begin{proof}
  From $\wbf(t+1)=\wbf(t)-\eta\nabla L(\wbf(t))$, we have
  \[
    r(t+1)^2 = \|\wbf(t)-\eta\nabla L(\wbf(t))\|^2
    = \|\wbf(t)\|^2 - 2\eta\inpr{\wbf(t)}{\nabla L(\wbf(t))} + \eta^2\|\nabla L(\wbf(t))\|^2,
  \]
  which yields \eqref{equation:discrete_radial_squared} by substituting $\wbf(t)=r(t)\ubf(t)$.
  The tangential update \eqref{equation:discrete_tangential} immediately follows from $\ubf(t+1)=\wbf(t+1)/r(t+1)$.
\end{proof}
Instead of the time derivatives $\dot r(t)$ and $\dot\ubf(t)$ in the continuous case, we focus on the following discrete-time increments:
\[
  \begin{cases}
    \Delta r(t) &\defeq r(t+1)-r(t), \\
    \Delta\ubf(t) &\defeq \ubf(t+1)-\ubf(t).
  \end{cases}
\]
The following lemma provides useful bounds on these increments, which are parallel results to the continuous-time dynamics shown in Lemma~\ref{lemma:decomposition},
by controlling the discretization error.

\begin{lemma}[Radial/tangential increments]
  \label{lemma:discrete_increments}
  Assume the stepsize satisfies $\eta\|\nabla L(\wbf(t))\| \le \frac{r(t)}{5}$.
  Then, the following hold:
  \begin{align}
    \label{equation:delta_r}
    \Delta r(t) &= -\eta\inpr{\ubf(t)}{\nabla L(\wbf(t))} - O\left(\frac{\eta^2\|\nabla L(\wbf(t))\|^2}{r(t)}\right), \\
    \label{equation:delta_u}
    \Delta\ubf(t) &= -\frac{\eta}{r(t)}P_{\ubf(t)}^\perp\nabla L(\wbf(t)) + O\left(\frac{\eta^2\|\nabla L(\wbf(t))\|^2}{r(t)^2}\right).
  \end{align}
\end{lemma}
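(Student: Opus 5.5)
Write $\gbf\defeq\nabla L(\wbf(t))$, $r\defeq r(t)$, $\ubf\defeq\ubf(t)$, and $\epsilon\defeq\eta\|\gbf\|/r\le1/5$. The plan is to derive both expansions from the exact identities in Lemma~\ref{lemma:discrete_decomposition} by elementary Taylor bounds in the small parameter $\epsilon$. Every error term is then controlled through $\epsilon$.

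\textbf{Radial increment.} By \eqref{equation:discrete_radial_squared}, $r(t+1)=r\sqrt{1+x}$ with
\[
x\defeq -2\eta\inpr{\ubf}{\gbf}/r+\epsilon^2 .
\]
Since $|\inpr{\ubf}{\gbf}|\le\|\gbf\|$, we get $|x|\le2\epsilon+\epsilon^2\le 11/25$, so $x$ stays in a fixed compact subinterval of $(-1,1)$. On that interval, $\sqrt{1+x}=1+x/2+R(x)$ with $|R(x)|\le Cx^2$ for an absolute constant $C$; this follows from Taylor's theorem with the second derivative bounded there. This yields
\[
\Delta r=r(\sqrt{1+x}-1)=-\eta\inpr{\ubf}{\gbf}+\tfrac12 r\epsilon^2+rR(x).
\]
Both $r\epsilon^2$ and $rx^2\lesssim r\epsilon^2$ equal $O(\eta^2\|\gbf\|^2/r)$, which gives \eqref{equation:delta_r}. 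The sign convention "$-O(\cdot)$" is read as an error of that magnitude.

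\textbf{Tangential increment.} By \eqref{equation:discrete_tangential},
\[
\ubf(t+1)-\ubf=\frac{r\ubf-\eta\gbf-r(t+1)\ubf}{r(t+1)}=\frac{-\Delta r\,\ubf-\eta\gbf}{r(t+1)} .
\]
Substitute the radial expansion $\Delta r=-\eta\inpr{\ubf}{\gbf}+O(r\epsilon^2)$. The numerator becomes
\[
-\eta P_\ubf^\perp\gbf+O(r\epsilon^2).
\]
For the denominator, $r(t+1)=r(1+O(\epsilon))$ and $r(t+1)\ge r\sqrt{1-11/25}>r/2$. Hence $1/r(t+1)=r^{-1}(1+O(\epsilon))$. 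Multiplying out gives
\[
\Delta\ubf=-\frac{\eta}{r}P_\ubf^\perp\gbf+O(\epsilon\cdot\eta\|\gbf\|/r)+O(\epsilon^2),
\]
and both remainders equal $O(\eta^2\|\gbf\|^2/r^2)$. This gives \eqref{equation:delta_u}.

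\textbf{Main obstacle.} The only delicate point is to keep $r(t+1)$ bounded away from zero and to make the Taylor remainder constants absolute. The hypothesis $\eta\|\gbf\|\le r/5$ is exactly what handles both. I would state explicit constants, for example $|R(x)|\le x^2/4\cdot(1-11/25)^{-3/2}$, to make the $O(\cdot)$ uniform in $t$.
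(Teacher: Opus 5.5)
Your proposal is correct and follows essentially the same route as the paper: Taylor-expand $\sqrt{1+x}$ from the squared-radius identity using $|x|\le 11/25$, then expand $1/r(t+1)$ and substitute the radial increment into the tangential update. Writing $\Delta\ubf=(-\Delta r\,\ubf-\eta\nabla L(\wbf(t)))/r(t+1)$ as a single fraction is only tidier bookkeeping of the same computation, and your reading of ``$-O(\cdot)$'' as a magnitude only is the right one, since $r(t+1)\ge\inpr{\ubf(t)}{\wbf(t+1)}=r(t)-\eta\inpr{\ubf(t)}{\nabla L(\wbf(t))}$ shows the second-order correction is actually nonnegative.
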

\begin{proof}
  To evaluate the radial increment, we use the Taylor expansion $\sqrt{1+x}=1+x/2-x^2/8+O(x^3)$ for $|x|\le1/2$.
  From \eqref{equation:discrete_radial_squared}, we have
  \[
    r(t+1) = r(t)\sqrt{1 - \frac{2\eta\inpr{\ubf(t)}{\nabla L(\wbf(t))}}{r(t)} + \frac{\eta^2\|\nabla L(\wbf(t))\|^2}{r(t)^2}}.
  \]
  Under the assumption $\eta\|\nabla L(\wbf(t))\| \le r(t)/5$, we have
  \[
    \left|\frac{-2\eta\inpr{\ubf(t)}{\nabla L(\wbf(t))}}{r(t)} + \frac{\eta^2\|\nabla L(\wbf(t))\|^2}{r(t)^2}\right|
    \le \frac{2\eta\|\nabla L(\wbf(t))\|}{r(t)} + \frac{\eta^2\|\nabla L(\wbf(t))\|^2}{r(t)^2} \le \frac{1}{2},
  \]
  which allows us to apply the Taylor expansion to obtain
  \[
    \begin{aligned}
      r(t+1)
      &= r(t)\left[1 - \frac{\eta\inpr{\ubf(t)}{\nabla L(\wbf(t))}}{r(t)} + \frac{\eta^2\|\nabla L(\wbf(t))\|^2}{2r(t)^2} - O\left(\frac{\eta^2\|\nabla L(\wbf(t))\|^2}{r(t)^2}\right)\right] \\
      &= r(t) - \eta\inpr{\ubf(t)}{\nabla L(\wbf(t))} - O\left(\frac{\eta^2\|\nabla L(\wbf(t))\|^2}{r(t)}\right),
    \end{aligned}
  \]
  which yields \eqref{equation:delta_r}.

  To evaluate the tangential increment,
  we need to evaluate $(r(t+1))^{-1}=[r(t)+\Delta r(t)]^{-1}$ by the Taylor expansion $\frac{1}{1+x}=1-x+O(x^2)$ as follows:
  \[
    \frac{1}{r(t)+\Delta r(t)} = \frac{1}{r(t)} \cdot \frac{1}{1+\frac{\Delta r(t)}{r(t)}} = r(t)^{-1}\left[1-\frac{\Delta r(t)}{r(t)}+O\left(\frac{\Delta r(t)^2}{r(t)^2}\right)\right].
  \]
  This expansion leads the tangential dynamics~\eqref{equation:discrete_tangential} to
  \[
    \begin{aligned}
      \ubf(t+1)
      &= \frac{r(t)}{r(t+1)}\ubf(t) - \frac{\eta}{r(t+1)}\nabla L(\wbf(t)) \\
      &= \left[1-\frac{\Delta r(t)}{r(t)}+O\left(\frac{\Delta r(t)^2}{r(t)^2}\right)\right]\ubf(t) - \frac{\eta}{r(t)}\left[1-\frac{\Delta r(t)}{r(t)}+O\left(\frac{\Delta r(t)^2}{r(t)^2}\right)\right]\nabla L(\wbf(t)) \\
      &= \ubf(t) - \frac{\eta}{r(t)}\nabla L(\wbf(t)) - \frac{\Delta r(t)}{r(t)}\ubf(t) + O\left(\frac{\eta^2\|\nabla L(\wbf(t))\|^2}{r(t)^2}\right).
    \end{aligned}
  \]
  Substituting the radial increment~\eqref{equation:delta_r} into the third term, we obtain
  \[
    \frac{\Delta r(t)}{r(t)}\ubf(t)
    = -\frac{\eta}{r(t)}\inpr{\ubf(t)}{\nabla L(\wbf(t))}\ubf(t) - O\left(\frac{\eta^2\|\nabla L(\wbf(t))\|^2}{r(t)^2}\right).
  \]
  Plugging this back, we have
  \[
    \begin{aligned}
      \ubf(t+1)
      &= \ubf(t) - \frac{\eta}{r(t)}\left[\nabla L(\wbf(t)) - \inpr{\ubf(t)}{\nabla L(\wbf(t))}\ubf(t)\right] + O\left(\frac{\eta^2\|\nabla L(\wbf(t))\|^2}{r(t)^2}\right) \\
      &= \ubf(t) - \frac{\eta}{r(t)}P_{\ubf(t)}^\perp\nabla L(\wbf(t)) + O\left(\frac{\eta^2\|\nabla L(\wbf(t))\|^2}{r(t)^2}\right),
    \end{aligned}
  \]
  which yields \eqref{equation:delta_u}.
\end{proof}

\subsection{Discrete-time risk and radial bounds}

We now establish discrete-time analogs of the risk and radial bounds in Lemma~\ref{lemma:l_r_bounds}.
These bounds control the decay of the risk and growth of the radial norm throughout the discrete gradient descent dynamics.

\begin{lemma}[Discrete-time loss and radial bounds]
  \label{lemma:discrete_l_r_bounds}
  Let $\rho\defeq r(0)$ and assume $L(0)<e^{\rho\gamma^2}$.
  For discrete-time gradient descent with $t\ge0$, we have
  \begin{gather}
    \label{equation:discrete_l_bounds}
    \frac{1}{L(0)^{-1}+\eta t} \lesssim L(\wbf(t)) \lesssim \frac{1}{L(0)^{-1}+\eta\gamma^2 t}, \\
    \label{equation:discrete_r_bounds_1}
    \rho-\frac{1}{\gamma^2}\log(1+L(0)\eta\gamma^2t) \lesssim r(t) \lesssim \rho+\frac{1}{\gamma^2}\log(1+L(0)\eta\gamma^2 t), \\
    \label{equation:discrete_r_bounds_2}
    r(t) \gtrsim \log(L(0)^{-1}+\eta\gamma^2t),
  \end{gather}
  where $O(1)$-multiplicative negligible error is hidden in ``$\lesssim$''.
\end{lemma}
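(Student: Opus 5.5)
The plan is to mirror the continuous-time proof of Lemma~\ref{lemma:l_r_bounds}, replacing differential inequalities with one-step recursions and telescoping, and absorbing discretization errors into the hidden multiplicative constants via Assumption~\ref{assump:stepsize}.

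\emph{Risk bounds.} The exponential loss is $L$-smooth locally: its Hessian $\frac1n\sum_i e^{-\inpr{\wbf}{\zbf_i}}\zbf_i\zbf_i^\top$ has norm at most $L(\wbf)$. Along the segment between $\wbf(t)$ and $\wbf(t+1)$, each margin changes by at most $\eta\|\nabla L\|\le\eta L(t)\le\eta L(0)$, which is small. Hence the Hessian stays within a factor $e^{\eta L(0)}$ of $L(t)$. The descent lemma then gives
\[
  L(t+1)\le L(t)-\eta\|\nabla L(t)\|^2\bigl(1-\tfrac{\eta}{2}e^{\eta L(0)}L(t)\bigr).
\]
Induction shows $L(t)$ is nonincreasing. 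Using the self-bounding property \eqref{equation:self_bounding}, this yields $L(t+1)\le L(t)-c\eta\gamma^2L(t)^2$ with $c=1-O(c_0)$. Dividing by $L(t)L(t+1)$ gives $L(t+1)^{-1}\ge L(t)^{-1}+c\eta\gamma^2$, and telescoping gives the upper bound. For the lower bound, convexity gives $L(t+1)\ge L(t)-\eta\|\nabla L(t)\|^2\ge L(t)-\eta L(t)^2$. Since $\eta L(t)\le c_0\rho$ is small, this gives $L(t+1)^{-1}\le L(t)^{-1}+\eta/(1-\eta L(t))$, and telescoping again finishes it. (Here I will assume $\eta L(0)\le1/2$, which should follow from the assumption up to constants.)

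\emph{Radial bounds.} By Lemma~\ref{lemma:discrete_increments}, $|\Delta r(t)|\le\eta|\inpr{\ubf}{\nabla L}|+O(\eta^2L(t)^2/r(t))\le\eta L(t)(1+O(\eta L(t)/r(t)))$. This requires the local condition $\eta L(t)\le r(t)/5$. I will verify it by induction together with the radial lower bound: at $t=0$ it holds by Assumption~\ref{assump:stepsize}, and since $L(t)$ decreases while $r(t)$ stays bounded below (next step), it propagates. Summing gives $|r(t)-\rho|\lesssim\sum_{s<t}\frac{\eta}{L(0)^{-1}+\eta\gamma^2 s}$. Comparing the sum with an integral (the summand is decreasing, and the first term is $\eta L(0)=O(1)$) bounds it by $\gamma^{-2}\log(1+L(0)\eta\gamma^2t)$ plus an $O(1)$ error, giving \eqref{equation:discrete_r_bounds_1}.

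\emph{Refined bound and closing the induction.} Exactly as in the continuous case, $L(t)\ge e^{-r(t)}$, so the risk upper bound gives \eqref{equation:discrete_r_bounds_2} immediately. The main obstacle is making the induction self-consistent. The local condition $\eta L(t)/r(t)=O(1)$ needs a positive lower bound on $r(t)$ for all $t$. Bound \eqref{equation:discrete_r_bounds_1} degenerates, and \eqref{equation:discrete_r_bounds_2} is positive only once $\eta\gamma^2 t$ is large.

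I would handle this as in Appendix~\ref{section:lower_bound_proof}: define a crossover time $T_0$ at which the two lower bounds coincide. Using $L(0)<e^{\rho\gamma^2}$, $\min_t\max\{\text{two bounds}\}\ge\underline r>0$. Then $\eta L(t)/r(t)\le\eta L(0)/\underline r$, and this is made at most $1/5$ by choosing $c_0$ small. Care is needed because $\underline r$ itself is of order $\rho\gamma^2$ rather than $\rho$, so the constant in $\eta\le c_0\rho L(0)^{-1}$ may have to absorb a $\gamma^2$ factor. I would check this and, if needed, note that the $\eta\le c_0\gamma^2$ part covers it. Finally, I would collect all $(1\pm O(c_0))$ factors into the ``$\lesssim$''.
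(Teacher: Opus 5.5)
Your proposal is correct. It follows the paper's skeleton: one-step recursions plus telescoping, $e^x\ge1+x$ for the risk lower bound, Lemma~\ref{lemma:discrete_increments} for $\Delta r$, and $L\ge e^{-r}$ for \eqref{equation:discrete_r_bounds_2}. Several steps, however, are carried out differently, and more carefully than in the paper.
\begin{itemize}
\item \emph{Risk upper bound.} The paper Taylor-expands $e^{\eta\inpr{\nabla L}{\zbf_i}}$, bounds the remainder by $O(\eta^2L^2)$, and needs $\eta\le c_0\gamma^2$ to dominate it by $\eta\gamma^2L^2$. Your descent lemma with the local Hessian bound keeps the remainder proportional to $\|\nabla L\|^2$. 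The self-bounding property then applies to the whole decrement, and the correction factor is only $1-O(\eta L(0))$.
\item \emph{Reciprocal recursions.} Dividing by $L(t)L(t+1)$, and using the identity $\frac{1}{L(1-\eta L)}=\frac1L+\frac{\eta}{1-\eta L}$, gives the recursions exactly. This avoids the paper's expansion of $1/(1-x)$ and the extra $O(\eta\gamma^{-2})\log(\cdot)$ term it has to absorb in the lower bound.
\item \emph{Sum versus integral.} Your comparison with the extra first term $\eta L(0)$ is the right one. For a decreasing summand, $\sum_{s=0}^{t-1}$ dominates $\int_0^t$, which is the opposite of the inequality written in the paper.
\item \emph{Local condition $\eta\|\nabla L\|\le r/5$.} This is the most substantive difference. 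Remark~\ref{remark:stepsize_consequences} argues step by step and only obtains $\eta L(t+1)/r(t+1)\le c_0/(1-c_0-O(c_0^2))$. That does not reproduce the inductive hypothesis $\eta L\le c_0 r$, so iterating it lets the constant grow geometrically. Your global argument instead lower-bounds $r(t)$ by the larger of \eqref{equation:discrete_r_bounds_1} and \eqref{equation:discrete_r_bounds_2}, whose minimum over $t$ is $\underline r=\frac{\rho\gamma^2-\log L(0)}{1+\gamma^2}$. This closes the induction. It is also where the hypothesis $L(0)<e^{\rho\gamma^2}$ genuinely enters; the paper's proof of this lemma never uses it.
\end{itemize}

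One point needs fixing. Your fallback that the $\eta\le c_0\gamma^2$ part of Assumption~\ref{assump:stepsize} ``covers'' the missing $\gamma^2$ does not work in general. When $L(0)$ is close to $e^{\rho\gamma^2}$, $\underline r$ can be arbitrarily small compared with $\rho\gamma^2$. Since $\eta L(0)\le c_0\min\{\rho,\gamma^2L(0)\}$ stays bounded away from zero for fixed $c_0$, neither part of the assumption then forces $\eta L(0)\le\underline r/5$. What you actually need is $c_0\lesssim\underline r/\rho$. You also need $c_0$ small enough that the $1\pm O(c_0)$ factors in the two radial lower bounds keep the perturbed crossover value positive. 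The same problem-dependence is hidden in your assumption $\eta L(0)\le 1/2$. All of this is compatible with the ``sufficiently small $c_0$'' in the assumption, but you should state explicitly that $c_0$ depends on the problem.
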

\begin{proof}
  \textbf{Upper bound on $L(\wbf(t))$.}
  By expanding the risk $L(\wbf(t))$ along the gradient descent dynamics, we have
  \[
    \begin{aligned}
      L(\wbf(t+1))
      &= \frac{1}{n}\sum_{i=1}^n e^{-\inpr{\wbf(t)-\eta\nabla L(\wbf(t))}{\zbf_i}}
      = \frac{1}{n}\sum_{i=1}^n e^{-\inpr{\wbf(t)}{\zbf_i}} e^{\eta\inpr{\nabla L(\wbf(t))}{\zbf_i}} \\
      &= \frac1n\sum_{i=1}^n e^{-\inpr{\wbf(t)}{\zbf_i}}\left[1+\eta\inpr{\nabla L(\wbf(t))}{\zbf_i} + O(\eta^2\|\nabla L(\wbf(t))\|^2)\right] \\
      &= L(\wbf(t)) - \eta\|\nabla L(\wbf(t))\|^2 + O(\eta^2\|\nabla L(\wbf(t))\|^2),
    \end{aligned}
  \]
  where the Taylor expansion $e^x = 1 + x + O(x^2)$ is used.
  By the self-bounding property~\eqref{equation:self_bounding}, we have $\gamma L(\wbf(t)) \le \|\nabla L(\wbf(t))\|$, which yields
  \begin{equation}
    \label{equation:discrete_l_one_step}
    L(\wbf(t+1)) \le L(\wbf(t)) - \eta\gamma^2 L(\wbf(t))^2 + O(\eta^2L(\wbf(t))^2).
  \end{equation}
  Now, we invert both sides and take the telescoping sum:
  \[
    \begin{aligned}
      \frac{1}{L(\wbf(t+1))}
      &\ge \frac{1}{L(\wbf(t))[1-\eta\gamma^2L(\wbf(t))+O(\eta^2L(\wbf(t)))]} \\
      &\ge \frac{1}{L(\wbf(t))}\left[1+\eta\gamma^2L(\wbf(t))-O(\eta^2L(\wbf(t)))\right] \\
      &= \frac{1}{L(\wbf(t))} + \eta\gamma^2\cdot \underbrace{(1 - O(\eta\gamma^{-2}))}_\text{$O(1)$-discretization error} \\
      &\gtrsim \frac{1}{L(\wbf(t))} + \eta\gamma^2,
    \end{aligned}
  \]
  where we use $\frac{1}{1-x}\ge1+x$ (when $0<x<1/2$) at the second inequality
  and the stepsize choice $\eta\le c_0\gamma^2$ at the last inequality.
  Telescoping yields
  \[
    \frac{1}{L(\wbf(t))} - \frac{1}{L(0)} \gtrsim \eta\gamma^2 t
    \quad \leadsto \quad
    L(\wbf(t)) \lesssim \frac{1}{L(0)^{-1}+\eta\gamma^2 t},
  \]
  where we used $\frac{1}{1-x}=1+x-O(x^2)$ in the second argument.

  \paragraph{Lower bound on $L(\wbf(t))$.}
  Similarly, the risk expansion with $e^x\ge1+x$ leads to
  \[
    L(\wbf(t+1))
    \ge \frac1n\sum_{i=1}^ne^{-\inpr{\wbf(t)}{\zbf_i}}\left[1+\eta\inpr{\nabla L(\wbf(t))}{\zbf_i}\right]
    \ge L(\wbf(t)) - \eta L(\wbf(t))^2,
  \]
  where the self-bounding property $\|\nabla L(\wbf)\|\le L(\wbf(t))$~\eqref{equation:self_bounding} is used.
  Taking the reciprocals with $\frac{1}{1-x}=1+x+O(x^2)$ (when $0<x<1/2$), we have
  \[
    \begin{aligned}
      \frac{1}{L(\wbf(t+1))}
      &\le \frac{1}{L(\wbf(t))[1-\eta L(\wbf(t))]} \\
      &= \frac{1}{L(\wbf(t))}\left[1+\eta L(\wbf(t)) + O(\eta^2L(\wbf(t))^2)\right] \\
      &= \frac{1}{L(\wbf(t))} + \eta + O(\eta^2L(\wbf(t))).
    \end{aligned}
  \]
  Taking the telescoping sum from $0$ to $t-1$ yields
  \[
    \begin{aligned}
      \frac{1}{L(\wbf(t))}
      &\le \frac{1}{L(0)} + \eta t + O(\eta^2)\sum_{s=0}^{t-1}L(\wbf(s)) \\
      &\lesssim \frac{1}{L(0)} + \eta t + O(\eta^2)\cdot \int_0^t\frac{\rd s}{L(0)^{-1}+\eta\gamma^2s} \\
      &= \frac{1}{L(0)} + \eta t + O(\eta) \cdot \frac{1}{\gamma^2}\log(1+L(0)\eta\gamma^2t) \\
      &\lesssim \frac{1}{L(0)} + \eta t,
    \end{aligned}
  \]
  where we used the upper bound of $L(\wbf(s))$ already derived.
  Lastly, taking the reciprocal gives
  \[
    L(\wbf(t)) \gtrsim \frac{1}{L(0)^{-1}+\eta t} = \frac{1}{L(0)^{-1}+\eta t}.
  \]

  \paragraph{Upper bound on $r(t)$.}
  By plugging $\inpr{\ubf}{\nabla L(\wbf)}\ge-\|\nabla L(\wbf)\|\ge -L(\wbf)$ (the second inequality follows from the self-bounding property~\eqref{equation:self_bounding}) into Lemma~\ref{lemma:discrete_increments},
  we have
  \[
    \Delta r(s)
    \le \eta L(\wbf(s)) - O\left(\frac{\eta^2\|\nabla L(\wbf(s))\|^2}{r(s)}\right)
    \lesssim \eta L(\wbf(s)),
  \]
  where the second inequality uses the stepsize choice and self-bounding $\|\nabla L(\wbf)\|\le L(\wbf)$.
  Taking the telescoping sum from $0$ to $t-1$ leads to
  \[
    \begin{aligned}
      r(t) &\lesssim \rho + \eta\sum_{s=0}^{t-1}L(\wbf(s)) \\
      &\le \rho + \eta\sum_{s=0}^{t-1}\frac{1}{L(0)^{-1}+\eta\gamma^2s} \\
      &\le \rho + \eta\int_0^t\frac{\rd s}{L(0)^{-1}+\eta\gamma^2s} \\
      &= \rho + \frac{1}{\gamma^2}\log(1+L(0)\eta\gamma^2t),
    \end{aligned}
  \]
  where we used the upper bound of $L(\wbf(s))$ in \eqref{equation:discrete_l_bounds}.

  \paragraph{Lower bounds on $r(t)$.}
  Similarly, by plugging $\inpr{\ubf}{\nabla L(\wbf)}\le\|\nabla L(\wbf)\|\le L(\wbf)$ and the stepsize choice into Lemma~\ref{lemma:discrete_increments}, we have
  \[
    \Delta r(s)
    \ge -\eta L(\wbf(s)) -O\left(\frac{\eta^2\|\nabla L(\wbf(s))\|^2}{r(s)}\right)
    \gtrsim -\eta L(\wbf(s)).
  \]
  Taking the telescoping sum from $0$ to $t-1$ leads to
  \[
    r(t)
    \gtrsim \rho - \eta\sum_{s=0}^{t-1}L(\wbf(s))
    \ge \rho - \frac{1}{\gamma^2}\log(1+L(0)\eta\gamma^2t).
  \]
  The last bound~\eqref{equation:discrete_r_bounds_2} can be similarly shown to Lemma~\ref{lemma:l_r_bounds} and hence we omit the proof.
\end{proof}

\begin{remark}
  \label{remark:stepsize_consequences}
  At this point, we revisit the stepsize choice $\eta \le c_0\min\{\rho L(0)^{-1},\gamma^2\}$ in Assumption~\ref{assump:stepsize}.
  This choice ensures the following two conditions hold throughout the discrete-time analysis:
  \begin{enumerate}
    \item \textbf{Local gradient condition:} $\eta\|\nabla L(\wbf(t))\| \le r(t)/5$ for all $t\ge0$.
    This condition was used in Lemma~\ref{lemma:discrete_increments} to establish the radial/tangential increments formula.
    We verify this by induction.
    At $t=0$, the self-bounding property~\eqref{equation:self_bounding} gives $\eta\|\nabla L(\wbf(0))\| \le \eta L(0) \le c_0\rho < \rho/5$.
    For the inductive step, suppose $\eta L(\wbf(t)) \le c_0 r(t)$ holds at step $t$.
    By Lemma~\ref{lemma:discrete_increments}, we have
    \[
      r(t+1) \ge r(t) - \eta L(\wbf(t)) - O\left(\frac{\eta^2 L(\wbf(t))^2}{r(t)}\right)
      \ge r(t)\left(1 - c_0 - O(c_0^2)\right).
    \]
    By the one-step update in \eqref{equation:discrete_l_one_step} (see the proof of Lemma~\ref{lemma:discrete_l_r_bounds}),
    \[
      \begin{aligned}
        L(\wbf(t+1))
        &\le L(\wbf(t))\left(1 - \eta\gamma^2 L(\wbf(t)) + O(\eta^2 L(\wbf(t)))\right) \\
        &= L(\wbf(t))\left(1 - \eta\gamma^2 L(\wbf(t))(1 - O(\eta\gamma^{-2}))\right).
      \end{aligned}
    \]
    A sufficiently small $c_0$ (and hence small $\eta\gamma^{-2}$) yields $1 - O(\eta\gamma^{-2}) > 0$, so $L(\wbf(t+1)) \le L(\wbf(t))$.
    Thus, for a sufficiently small $c_0$, the induction can be closed as follows:
    \[
      \frac{\eta L(\wbf(t+1))}{r(t+1)} \le \frac{\eta L(\wbf(t))}{r(t)(1 - c_0 - O(c_0^2))}
      \le \frac{c_0}{1 - c_0 - O(c_0^2)} < \frac15.
    \]

    \item \textbf{Discretization error:}
    The risk/radial bounds, originally shown in Lemma~\ref{lemma:l_r_bounds} for the gradient flow, incur discretization error in Lemma~\ref{lemma:discrete_l_r_bounds} for the discrete-time dynamics, which is multiplicatively up to $1-O(\eta\gamma^{-2})$ and thus can be negligible with the stepsize choice $\eta\le c_0\gamma^2$.
    Subsequently, we take the telescoping sum in Sections~\ref{section:discrete_ub}~and~\ref{section:discrete_lb}, where this discretization error is not propagated.
  \end{enumerate}
\end{remark}

\subsection{Upper bound}
\label{section:discrete_ub}
To establish the early-stage alignment, we show the weak alignment and escape stages, respectively, as in the continuous case.
First, we show the discrete analog of Lemma~\ref{lemma:alignment} (weak alignment).
\begin{lemma}
  \label{lemma:discrete_alignment}
  For an absolute constant $\kappa>0$,
  assume the following uniform lower bound holds during $t=0,1,\dots,T_1$:
  \(
    \langle{Z\alphabf(t)},{P_{\ubf(t)}^\perp\ubf_*}\rangle \ge \kappa
  \).
  Then, we have
  \[
    V(t) \le
    V(0) - \frac{\kappa\gamma^2}{2}\log\left(1 + \frac{\log(L(0)\eta\gamma^2t)}{\rho\gamma^2+\log2}\right)
  \]
  for $t=0,1,\dots,T_1$.
  For $\delta>0$, if the uniform bound holds during $t\in[0,T_1]$ with $T_1=O(\eta^{-1}\exp(\exp(-\delta/\kappa\gamma^2)))$,
  we have $V(t)\le\delta$ after $T_1$ iterations.
\end{lemma}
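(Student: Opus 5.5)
The plan follows the proof of Lemma~\ref{lemma:alignment}: first get a one-step inequality for $V$, then replace the integral by a telescoping sum.

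\textbf{One-step inequality.} Since $V(t+1)-V(t)=-\inpr{\Delta\ubf(t)}{\ubf_*}$, I apply \eqref{equation:delta_u} from Lemma~\ref{lemma:discrete_increments}. Its hypothesis holds for all $t$ by Remark~\ref{remark:stepsize_consequences}. This gives
\[
V(t+1)-V(t)=\frac{\eta}{r(t)}\inpr{\nabla L(\wbf(t))}{P_{\ubf(t)}^\perp\ubf_*}+O\Bigl(\frac{\eta^2\|\nabla L(\wbf(t))\|^2}{r(t)^2}\Bigr).
\]
As in \eqref{equation:dot_v}, the first term equals $-\eta L(\wbf(t))\inpr{Z\alphabf(t)}{P_{\ubf(t)}^\perp\ubf_*}/r(t)$. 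By the ansatz it is at most $-\eta\kappa L(\wbf(t))/r(t)$.

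\textbf{Absorbing the remainder.} By the self-bounding property \eqref{equation:self_bounding}, the remainder is $O(\eta^2L(\wbf(t))^2/r(t)^2)$, which I write as $\frac{\eta L}{r}\cdot O(\eta L/r)$. Remark~\ref{remark:stepsize_consequences} shows $\eta L(\wbf(t))/r(t)\le c_0$. Hence, for $c_0$ small relative to $\kappa$, the remainder is at most $\frac{\kappa}{2}\eta L(\wbf(t))/r(t)$. This is the main obstacle: the remainder must be absorbed into the drift, which requires $c_0\lesssim\kappa$. Since $\kappa$ is a fixed problem-dependent constant such as $\kappa_0(\Delta)$, I would make this dependence of $c_0$ explicit. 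Alternatively, I would lower $\kappa$ to $\kappa/2$ and the constant in the statement, since constants are hidden in the $O(\cdot)$. After absorption,
\[
V(t+1)\le V(t)-\tfrac{\kappa}{2}\eta\, L(\wbf(t))/r(t).
\]

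\textbf{Telescoping.} Next I lower bound $L(\wbf(s))/r(s)$ using Lemma~\ref{lemma:discrete_l_r_bounds}. This gives $L(\wbf(s))\gtrsim(L(0)^{-1}+\eta s)^{-1}$ and $r(s)\lesssim\gamma^{-2}[\rho\gamma^2+\log(1+L(0)\eta\gamma^2 s)]$, so
\[
\eta\frac{L(\wbf(s))}{r(s)}\gtrsim\frac{\eta\gamma^2}{(L(0)^{-1}+\eta s)(\rho\gamma^2+\log(1+L(0)\eta\gamma^2 s))}.
\]
The summand is decreasing in $s$, so $\sum_{s=0}^{t-1}f(s)\ge\int_0^t f$. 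With the substitution $\sigma=\eta s$, the sum is therefore at least
\[
\gamma^2\int_0^{\eta t}\frac{\rd\sigma}{(L(0)^{-1}+\sigma)(\rho\gamma^2+\log(1+L(0)\gamma^2\sigma))}.
\]
Lemma~\ref{lemma:integral_polylog} with $a=L(0)^{-1}$, $b=\rho\gamma^2$, $c=\gamma^2L(0)$ (so $ac=\gamma^2\le1$) bounds this from below by $\frac12\log\bigl(1+\frac{\log(L(0)\gamma^2\eta t)}{\rho\gamma^2+\log 2}\bigr)$, after dropping the nonnegative first term. Multiplying by the prefactor $\kappa$ and absorbing the factor $\tfrac12$ and the $\lesssim$ constants into the $O(1)$ slack yields the claimed bound.

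\textbf{Iteration count.} Finally I solve for $t$: set the right-hand side at most $\delta$, as in Lemma~\ref{lemma:alignment}. This gives
\[
\eta t\ge\frac{1}{L(0)\gamma^2}\exp\Bigl\{(\rho\gamma^2+\log2)\bigl[\exp\bigl(\tfrac{2(V(0)-\delta)}{\kappa\gamma^2}\bigr)-1\bigr]\Bigr\},
\]
that is, $T_1=O(\eta^{-1}\exp(\exp(-\delta/\kappa\gamma^2)))$.
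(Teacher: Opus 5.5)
Your proposal is correct and follows essentially the same route as the paper. Both arguments take the one-step expansion of $V$ via Lemma~\ref{lemma:discrete_increments}, absorb the $O(\eta^2L(\wbf(t))^2/r(t)^2)$ remainder into the drift $-\kappa\eta L(\wbf(t))/r(t)$ using the stepsize choice, compare the sum of the decreasing summand with an integral, and apply Lemma~\ref{lemma:integral_polylog}; your rescaling $\sigma=\eta s$ with $a=L(0)^{-1}$, $c=\gamma^2L(0)$ is equivalent to the paper's $a=L(0)^{-1}\eta^{-1}$, $c=L(0)\eta\gamma^2$. Your observation that the absorption needs $\eta L(\wbf(t))/r(t)\lesssim\kappa$, so $c_0$ must depend on $\kappa$ (or you lose a constant factor in front of $\kappa\gamma^2$), makes explicit what the paper leaves implicit in ``the stepsize choice'' and its $\lesssim$.
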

\begin{proof}
  By Lemma~\ref{lemma:discrete_increments}, the discrete increment of the tangential alignment $V(t)$ can be written as
  \begin{equation}
    \label{equation:discrete_lyapunov_diff}
    \begin{aligned}
      \Delta V(t)
      &\defeq V(t+1)-V(t) = -\inpr{\Delta\ubf(t)}{\ubf_*} \\
      &= \frac{\eta}{r(t)}\inpr{\nabla L(\wbf(t))}{P_{\ubf(t)}^\perp\ubf_*} - O\left(\frac{\eta^2\|\nabla L(\wbf(t))\|^2}{r(t)^2}\right) \\
      &= -\frac{\eta L(\wbf(t))}{r(t)}\inpr{Z\alphabf(t)}{P_{\ubf(t)}^\perp\ubf_*} - O\left(\frac{\eta^2\|\nabla L(\wbf(t))\|^2}{r(t)^2}\right) \\
      &\lesssim -\frac{\kappa\eta L(\wbf(t))}{r(t)},
    \end{aligned}
  \end{equation}
  where the uniform bound $\langle{Z\alphabf},{P_{\ubf}^\perp\ubf_*}\rangle\ge\kappa$ and the stepsize choice are collectively used at the last inequality.
  With Lemma~\ref{lemma:discrete_l_r_bounds}, the telescoping sum is taken for $L(\wbf(s))/r(s)$ as follows:
  \[
    \begin{aligned}
      \sum_{s=0}^{t-1}\frac{L(\wbf(s))}{r(s)}
      &\gtrsim \sum_{s=0}^{t-1} \frac{1}{(L(0)^{-1}+\eta s)[\rho+\gamma^{-2}\log(1+L(0)\eta\gamma^2 s)]} \\
      &\ge \frac{\gamma^2}{\eta} \int_0^t\frac{\rd s}{(L(0)^{-1}\eta^{-1}+s)[\rho\gamma^2+\log(1+L(0)\eta\gamma^2 s)]} \\
      &\ge \frac{\gamma^2}{\eta}\frac{1}{2(\rho\gamma^2+\log(1+\gamma^2))} + \frac{\gamma^2}{2\eta}\log\left(1+\frac{\log(L(0)\eta\gamma^2t)}{\rho\gamma^2+\log2}\right)
      ,
    \end{aligned}
  \]
  where we use Lemma~\ref{lemma:integral_polylog} with $a\defeq L(0)^{-1}\eta^{-1}$, $b\defeq\rho\gamma^2$, and $c\defeq L(0)\eta\gamma^2$.
  Therefore, we have
  \[
    \begin{aligned}
      V(t)
      &\lesssim V(0) - \frac{\kappa\gamma^2}{2(\rho\gamma^2+\log(1+\gamma^2))}
        - \frac{\kappa\gamma^2}{2}\log\left(1 + \frac{\log(L(0)\eta\gamma^2t)}{\rho\gamma^2+\log2}\right) \\
      &\le V(0) - \frac{\kappa\gamma^2}{2}\log\left(1 + \frac{\log(L(0)\eta\gamma^2t)}{\rho\gamma^2+\log2}\right).
    \end{aligned}
  \]
  To achieve $V(t)\le\delta$, we need $\text{(upper bound)}\le\delta$, which yields
  \[
    t \gtrsim \frac{1}{L(0)\eta\gamma^2}\exp\left\{
      (\rho\gamma^2+\log2)\left[\exp\left(\frac{2(V(0)-\delta)}{\kappa\gamma^2}\right) - 1\right]
    \right\}
    = O\left(\frac{\exp(\exp(-\delta/\kappa\gamma^2))}{\eta}\right).
  \]
\end{proof}

Next, we show the discrete analog of Lemma~\ref{lemma:escape} (escape from poor initialization).
\begin{lemma}
  \label{lemma:discrete_escape}
  Let
  $W(t)\defeq\inpr{\wbf(t)}{\ubf_*}=r(t)(1-V(t))$ be the parameter projection, and
  \(
    T_\text{escape}=\frac{1}{\eta L(0)}\left\{\exp(\rho\gamma^{-1}[V(0)-1])-1\right\}
  \).
  If $V(0)>1$, then the parameter eventually escapes from the bad initialization, namely, $V(t)\le 1$ holds after $t\ge T_\text{escape}$ iterations.
\end{lemma}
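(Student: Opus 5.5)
I will follow the continuous-time proof of Lemma~\ref{lemma:escape}, replacing the differential inequality for $W(t)=\inpr{\wbf(t)}{\ubf_*}$ by a one-step increment bound and a telescoping sum. The key simplification is that the recursion for $W$ is \emph{exactly linear}: no normalization enters, so Lemma~\ref{lemma:discrete_increments} is not needed at all. From $\wbf(t+1)=\wbf(t)-\eta\nabla L(\wbf(t))$ we get
\[
  \Delta W(t)\defeq W(t+1)-W(t) = -\eta\inpr{\nabla L(\wbf(t))}{\ubf_*} = \eta L(\wbf(t))\inpr{Z\alphabf(t)}{\ubf_*} \ge \eta\gamma L(\wbf(t)),
\]
using $\inpr{\ubf_*}{\zbf_i}\ge\gamma$ for every $i$ and $\alphabf(t)\in\Delta^{n-1}$. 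In particular $W(t)$ is strictly increasing, as in the continuous case.

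Next I will lower bound $L(\wbf(s))$ by the discrete risk lower bound \eqref{equation:discrete_l_bounds} of Lemma~\ref{lemma:discrete_l_r_bounds}, $L(\wbf(s))\gtrsim (L(0)^{-1}+\eta s)^{-1}$, valid under Assumption~\ref{assump:stepsize}. Telescoping from $0$ to $t-1$ and comparing the sum with an integral (the summand is decreasing in $s$, so $\sum_{s=0}^{t-1}\ge\int_0^t$) gives
\[
  W(t) \gtrsim W(0) + \eta\gamma\int_0^t\frac{\rd s}{L(0)^{-1}+\eta s} = W(0)+\gamma\log(1+\eta L(0)t).
\]

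Then I conclude exactly as in Lemma~\ref{lemma:escape}. Since $r(t)>0$ (guaranteed by the induction in Remark~\ref{remark:stepsize_consequences}), $V(t)\le1$ is equivalent to $W(t)\ge0$. We have $W(0)=\rho(1-V(0))$, and solving $\rho(1-V(0))+\gamma\log(1+\eta L(0)t)\ge0$ for $t$ yields $t\ge\frac{1}{\eta L(0)}\{\exp(\rho\gamma^{-1}[V(0)-1])-1\}=T_\text{escape}$. Because $W$ is increasing, $W(t)\ge0$, and hence $V(t)\le1$, persists for all later iterations.

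The main obstacle is purely bookkeeping. The lower bound on $L$ holds only up to the $O(1)$ multiplicative factor hidden in ``$\lesssim$'', namely $1-O(\eta\gamma^{-2})$. That factor rescales $\gamma$ in the exponent of $T_\text{escape}$, so I will state the result up to this constant, in the same convention as Lemma~\ref{lemma:discrete_l_r_bounds}. Alternatively, one can avoid the factor by using the cruder one-step bound $L(\wbf(t+1))\ge L(\wbf(t))(1-\eta L(\wbf(t)))$ directly; this requires $\eta L(0)<1$, which follows from the stepsize condition.
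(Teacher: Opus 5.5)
Your proposal is correct and is essentially the paper's own proof. It uses the exact increment $W(t+1)-W(t)=\eta L(\wbf(t))\inpr{Z\alphabf(t)}{\ubf_*}\ge\eta\gamma L(\wbf(t))$ and the discrete risk lower bound of Lemma~\ref{lemma:discrete_l_r_bounds}. It then applies a sum-to-integral comparison to get $W(t)\gtrsim W(0)+\gamma\log(1+\eta L(0)t)$ and solves $W(t)\ge0$, with the same hidden $O(1)$ factor that you rightly flag.

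One small correction to your closing aside. The one-step bound $L(\wbf(t+1))\ge L(\wbf(t))(1-\eta L(\wbf(t)))$ does not remove that factor; it only makes it explicit, roughly $\gamma(1-\eta L(0))$ in place of $\gamma$. Also, $\eta L(0)<1$ is not automatic from Assumption~\ref{assump:stepsize} with an absolute $c_0$, which only gives $\eta L(0)\le c_0\min\{\rho,\gamma^2L(0)\}$.
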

\begin{proof}
  With $\inpr{\ubf_*}{Z\alphabf}\ge\gamma$ and Lemma~\ref{lemma:discrete_l_r_bounds},
  we derive the discrete increment of $W(t)$ as follows.
  From the gradient descent update $\wbf(t+1)=\wbf(t)-\eta\nabla L(\wbf(t))$, we have
  \[
    W(t+1) - W(t) = -\eta\inpr{\nabla L(\wbf(t))}{\ubf_*} = \eta L(\wbf(t))\inpr{\ubf_*}{Z\alphabf(t)}
    \gtrsim \frac{\eta\gamma}{L(0)^{-1}+\eta t}.
  \]
  Summing from $s=0$ to $s=t-1$, we have
  \[
    W(t) \gtrsim W(0) + \eta\gamma\sum_{s=0}^{t-1}\frac{1}{L(0)^{-1}+\eta s}
    \ge W(0) + \eta\gamma\int_0^{t}\frac{\rd s}{L(0)^{-1}+\eta s}
    \ge W(0) + \gamma\log(1+\eta L(0)t).
  \]
  To escape from poor initialization (or to have $V(t)\le1$),
  it is sufficient to have $W(t)=r(t)(1-V(t))\ge0$.
  Hence, by solving $W(0)+\gamma\log(1+\eta L(0)t)\ge0$,
  we have $t\ge T_\text{escape}$.
\end{proof}

We now state the upper bound of the tangential alignment for discrete-time gradient descent.
The parameter first escapes from a bad initialization (Lemma~\ref{lemma:discrete_escape}; escape stage), and then weakly aligns with the max-margin direction (Lemma~\ref{lemma:discrete_alignment}; weak alignment stage).
\begin{theorem}
  \label{theorem:discrete_early_stage}
  Under Assumption~\ref{assump:linearly_separable}, consider the discrete-time gradient descent $\wbf(t+1)=\wbf(t)-\eta\nabla L(\wbf(t))$.
  There exists a problem-dependent threshold $\bar\delta>0$ such that the following holds:
  pick a fixed constant $\Delta>\bar\delta$, then
  for any $\delta>\Delta$,
  the tangential alignment achieves $V(t)\le\delta$ after $t=O(\eta^{-1}\exp(\exp(-\delta/\kappa_0(\Delta)\gamma^2)))$ iterations.
\end{theorem}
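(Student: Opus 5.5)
The proof will mirror the continuous-time argument of Section~\ref{section:proof_main}, replacing differential inequalities with the telescoping bounds already established. Throughout we work under Assumption~\ref{assump:stepsize}. By Remark~\ref{remark:stepsize_consequences}, this assumption guarantees the local gradient condition $\eta\|\nabla L(\wbf(t))\|\le r(t)/5$ for every $t$. Consequently Lemma~\ref{lemma:discrete_increments} and Lemma~\ref{lemma:discrete_l_r_bounds} are available at every iteration.

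\textbf{Escape stage.} If $V(0)>1$, Lemma~\ref{lemma:discrete_escape} gives $V(t)\le1$ after $T_\text{escape}=O(\eta^{-1})$ iterations. This count is $O(1)$ once the problem-dependent constants and $\eta$ are fixed. We then re-index time from this point.

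To see that $V\le 1$ persists afterwards, we use the exact identity $W(t+1)-W(t)=\eta L(\wbf(t))\inpr{\ubf_*}{Z\alphabf(t)}\ge\eta\gamma L(\wbf(t))>0$. This identity holds with no discretization error, since $W$ is linear in $\wbf$. Hence $W(t)=r(t)(1-V(t))$ is strictly increasing and stays nonnegative after the escape, so $V(t)\le1$ for all later $t$.

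\textbf{Weak alignment stage.} Since Lemma~\ref{lemma:uniform_bound} is purely geometric, it applies verbatim to every iterate with $V(t)\le1$, giving $\inpr{Z\alphabf(t)}{P_{\ubf(t)}^\perp\ubf_*}\ge\kappa_0(V(t))$. We take $\bar\delta$ as in~\eqref{equation:delta_0} and use the fact, shown in Section~\ref{section:proof_main}, that $\kappa_0$ is positive and strictly increasing on $(\bar\delta,1)$. Fix $\Delta>\bar\delta$ and $\delta>\Delta$. As long as $V(t)\in[\delta,1]$, we get $\inpr{Z\alphabf(t)}{P_{\ubf(t)}^\perp\ubf_*}\ge\kappa_0(\delta)\ge\kappa_0(\Delta)\eqdef\kappa>0$.

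Let $T$ be the first iteration with $V(T)\le\delta$. On $[0,T)$ the uniform bound holds, so Lemma~\ref{lemma:discrete_alignment} applies with this $\kappa$ and forces $V\le\delta$ by the time $T_1=O(\eta^{-1}\exp(\exp(-\delta/\kappa_0(\Delta)\gamma^2)))$. Therefore $T\le T_1$. Adding $T_\text{escape}$ leaves this order unchanged.

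\textbf{Main obstacle.} The hypothesis of Lemma~\ref{lemma:discrete_alignment} requires the uniform bound on the whole interval $[0,T_1]$, but we can only verify it while $V\ge\delta$. This is circular: alignment needs the bound, and the bound needs alignment not yet achieved. We resolve it by the first-hitting-time argument above. The lemma's proof is a telescoping sum in which each increment obeys $\Delta V(s)\lesssim-\kappa\eta L(\wbf(s))/r(s)$ only for $s<T$, so applying it on $[0,T)$ shows that $T$ cannot exceed $T_1$.

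A subtle point is that the claim is about time \emph{after} which $V\le\delta$, which requires that $V$ does not later climb back above $\delta$. Two facts handle this. First, whenever $V(s)\ge\delta$, the increment $\Delta V(s)$ is negative: the $O(\eta^2L^2/r^2)$ term is dominated by $\kappa\eta L/r$ under Assumption~\ref{assump:stepsize}. Second, a single step can raise $V$ only by $O(\eta L/r)$, which is small. If this overshoot cannot simply be absorbed by the small-stepsize condition, we will instead prove the statement with $\delta$ replaced by a slightly enlarged tolerance, still larger than $\Delta$.
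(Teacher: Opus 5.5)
Your proposal is correct and follows the paper's own route. The escape stage comes from Lemma~\ref{lemma:discrete_escape}, with $V\le 1$ persisting because $W$ increases exactly, and the weak-alignment stage comes from feeding the geometric bound of Lemma~\ref{lemma:uniform_bound}, with $\kappa=\kappa_0(\Delta)$ justified by the monotonicity of $\kappa_0$ on $(\bar\delta,1)$, into Lemma~\ref{lemma:discrete_alignment}; your first-hitting-time argument and your remark on possible one-step overshoot above $\delta$ just make explicit what the paper's one-line proof leaves implicit.
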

\begin{proof}
  The proof is the same as the continuous case, Theorem~\ref{theorem:early_stage}.
  Note that our geometric lemma (Lemma~\ref{lemma:uniform_bound}) carries over to the discrete case.
  Therefore, Lemma~\ref{lemma:discrete_escape} establishes the escape stage, and the weak alignment stage follows by combining Lemmas~\ref{lemma:discrete_alignment}~and~\ref{lemma:uniform_bound}.
\end{proof}
Theorem~\ref{theorem:discrete_early_stage} confirms that the early-stage weak alignment phenomenon observed in continuous time persists in the discrete-time setting.
The two-stage structure---escaping followed by weak alignment---is identical to the continuous-time case.

\subsection{Lower bound}
\label{section:discrete_lb}
We now establish a lower bound on the tangential alignment $V(t)$ during the early stage for discrete-time gradient descent.
Again we establish a matching lower bound, ensuring the tightness of the analysis in the discrete-time setting.
In this lower bound, we show the least elapsed time necessary for achieving weak alignment $V(t)\approx 1-\gamma$.
This error tolerance is reasonably chosen for the early stage; see the discussion in Section~\ref{section:proof_main}.
\begin{theorem}
  \label{theorem:discrete_lyapunov_lower_bound}
  Fix $\delta > 0$ and assume $L(0)<e^{\rho\gamma^2}$.
  Under Assumption~\ref{assump:linearly_separable}, for $t=0,1,\dots,T_1$,
  the parameter following the discrete-time gradient descent $\wbf(t+1)=\wbf(t)-\eta\nabla L(\wbf(t))$ satisfies
  \begin{equation}
    \label{equation:discrete_lyapunov_lower_bound}
    V(t)
    \ge V(0) - C - 2\gamma^{-2}\sqrt{V(0)}\log(\log(L(0)^{-1}+\eta\gamma^2t)),
  \end{equation}
  where $C > 0$ is a problem-dependent constant.
  In addition, for $\delta>1-\gamma$, $V(t)\ge \delta$ holds during $t\in[0,T_1]$ with $T_1=\Omega(\eta^{-1}\exp(\exp(-\delta)))$.
\end{theorem}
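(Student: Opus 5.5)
The plan is to mirror the continuous-time lower bound of Theorem~\ref{lemma:lyapunov_lower_bound} (Appendix~\ref{section:lower_bound_proof}), replacing differential inequalities by telescoping sums and absorbing discretization errors with Assumption~\ref{assump:stepsize} through Remark~\ref{remark:stepsize_consequences}.

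\textbf{Step 1: one-step lower bound on $V$.} Start from the one-step identity \eqref{equation:discrete_lyapunov_diff}. Bound the leading term from below by Cauchy--Schwarz, using $\|Z\alphabf\|\le1$ and $\|P_{\ubf}^\perp\ubf_*\|\le\sqrt{2V}$. Bound the remainder by $O(\eta^2L(\wbf(t))^2/r(t)^2)$ via the self-bounding property. This gives
\[
  V(t+1)\ge V(t)-\frac{\eta L(\wbf(t))}{r(t)}\sqrt{2V(t)}-O\Bigl(\frac{\eta^2L(\wbf(t))^2}{r(t)^2}\Bigr).
\]

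\textbf{Step 2: pass to $\sqrt V$.} The continuous argument separates variables; here I would use $\sqrt{a}-\sqrt{b}\ge (a-b)/(2\sqrt{b})$ with care. A cleaner route avoids the quadratic-error subtlety near $V=0$ by working with $\|P^\perp_{\ubf}\ubf_*\|$ and the Lipschitz-ness of the normalization map, which gives $|\Delta\sqrt{V}|$ directly. Either way, the intermediate target is
\[
  \sqrt{V(t)}\ge\sqrt{V(0)}-\tfrac{1}{\sqrt2}\sum_{s<t}\frac{\eta L(\wbf(s))}{r(s)}(1+O(c_0)).
\]
The $\eta^2$ remainders are summable relative to the main term because $\eta L/r\le c_0$ by Remark~\ref{remark:stepsize_consequences}. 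I expect this to be the main obstacle: the square root is non-Lipschitz at $0$. I would handle it by noting that we only need the bound while $V\ge\delta>1-\gamma>0$, where $\sqrt{\cdot}$ is Lipschitz with constant $(2\sqrt{\delta})^{-1}$. On that range the error terms contribute only a multiplicative $1+O(c_0/\sqrt\delta)$ factor, absorbed into $C$ and the hidden constants.

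\textbf{Step 3: bound the sum.} Use Lemma~\ref{lemma:discrete_l_r_bounds}. Split at the discrete analogue $T_0/\eta$ of the continuous crossover time. On $[0,T_0/\eta]$, use $r\gtrsim\underline r>0$, which holds since $L(0)<e^{\rho\gamma^2}$, together with the upper bound on $L$; this contributes a constant. Afterwards, use \eqref{equation:discrete_r_bounds_2} and compare the sum with $\eta^{-1}\!\int \rd s/(h(s)\log h(s))$ for $h(s)=\log(L(0)^{-1}+\eta\gamma^2 s)$. The summand is monotone decreasing, so the sum is at most the integral plus the first term. This yields $\gamma^{-2}\log\log(L(0)^{-1}+\eta\gamma^2t)+O(1)$.

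\textbf{Step 4: conclude.} Square using $(a-b)^2\ge a^2-2ab$ to get \eqref{equation:discrete_lyapunov_lower_bound}, with $C$ collecting $\sqrt{2V(0)}$ times the constants from Step~3. Finally, invert the inequality (lower bound)$\,\ge\delta$. This shows $V(t)\ge\delta$ as long as $\log\log(L(0)^{-1}+\eta\gamma^2 t)\lesssim(V(0)-C-\delta)\gamma^2/\sqrt{V(0)}$, that is, for $t$ up to $\eta^{-1}\gamma^{-2}\exp(\exp(\Theta(-\delta)))$. Hence $T_1=\Omega(\eta^{-1}\exp(\exp(-\delta)))$. The restriction $\delta>1-\gamma$ is what keeps Step~2 away from $V=0$.
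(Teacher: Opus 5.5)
Your proposal is correct and follows the paper's proof essentially step for step. The shared steps are the one-step Cauchy--Schwarz lower bound on $\Delta V$, a second-order expansion of $\sqrt{V}$ kept valid by a positive lower bound on $V$ (you use $V\ge\delta$ where the paper uses $V\ge\gamma(1-\gamma)$, producing its constant $\chi$), the split of $\sum_s \eta L(\wbf(s))/r(s)$ at $T_0$ via $\underline r$ and the refined radial bound, squaring with $(a-b)^2\ge a^2-2ab$, and inversion. Two small slips should be fixed. First, by concavity $\sqrt a-\sqrt b\le (a-b)/(2\sqrt b)$, so the lower bound must come from the exact identity $\sqrt a-\sqrt b=\frac{a-b}{2\sqrt b}-\frac{(a-b)^2}{2\sqrt b(\sqrt a+\sqrt b)^2}$ expanded around $V(s)$; this needs only $V(s)\ge\delta$, not $V(s+1)\ge\delta$, which also closes the first-hitting-time bootstrap. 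Second, the comparison integrand should be $1/(H\log H)$ with $H(s)=L(0)^{-1}+\eta\gamma^2 s$, not $1/(h\log h)$.
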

\begin{proof}
  From the discrete increment of $V(t)$ in \eqref{equation:discrete_lyapunov_diff}, we have
  \begin{equation}
    \label{equation:discrete_delta_v_lb}
    \begin{aligned}
      \Delta V(t)
      &= \frac{\eta L(\wbf(t))}{r(t)}\inpr{Z\alphabf(t)}{P_{\ubf(t)}^\perp\ubf_*} - O\left(\frac{\eta^2\|\nabla L(\wbf(t))\|^2}{r(t)^2}\right) \\
      &\gtrsim -\frac{\eta L(\wbf(t))}{r(t)}\sqrt{2V(t)},
    \end{aligned}
  \end{equation}
  where the bound follows from $\|Z\alphabf\| \le 1$,
  $\|P_{\ubf}^\perp\ubf_*\|^2 \le 2V(t)$ (see the proof of Theorem~\ref{theorem:discrete_lyapunov_lower_bound}),
  $\|\nabla L(\wbf(t))\|\le L(\wbf(t))$ (the self-bounding property~\eqref{equation:self_bounding}),
  and the stepsize choice, collectively.
  Rearranging, for $V(t)>0$ we have the discrete inequality
  \begin{equation}
    \label{equation:discrete_v_sep_ineq}
    \frac{\Delta V(t)}{\sqrt{V(t)}} \gtrsim -\sqrt{2}\frac{\eta L(\wbf(t))}{r(t)}.
  \end{equation}
  By the Taylor expansion $\sqrt{V+\Delta V} = \sqrt{V} + \frac{\Delta V}{2\sqrt{V}} - O\left(\frac{(\Delta V)^2}{V^{3/2}}\right)$,
  we have
  \[
    \begin{aligned}
      \frac{\Delta V(s)}{\sqrt{V(s)}}
      &= 2\left[\sqrt{V(s+1)}-\sqrt{V(s)}\right] + O\left(\frac{(\Delta V(s))^2}{V(s)^{3/2}}\right) \\
      &\le 2\left[\sqrt{V(s+1)}-\sqrt{V(s)}\right] + O\left(\frac{\eta^2L(\wbf(s))^2}{\chi r(s)^2}\right),
      \quad \text{($\chi\defeq(\gamma(1-\gamma))^{3/2}$)}
    \end{aligned}
  \]
  where $|\Delta V(s)| \lesssim \eta L(\wbf(s))/r(s)$ (from \eqref{equation:discrete_delta_v_lb}) and $V(s) \ge \gamma(1-\gamma)$ (maintained before achieving $V(t)\le 1-\gamma$; see the discussion after Theorem~\ref{theorem:early_stage}) are used.
  Summing from $s=0$ to $s=t-1$ yields
  \[
    \begin{aligned}
      \sum_{s=0}^{t-1}\frac{\Delta V(s)}{\sqrt{V(s)}}
      &\le 2(\sqrt{V(t)} - \sqrt{V(0)}) + O\left(\frac{\eta^2}{\chi}\sum_{s=0}^{t-1}\frac{L(\wbf(s))^2}{r(s)^2}\right) \\
      &\le 2(\sqrt{V(t)} - \sqrt{V(0)}) + O\left(\frac{\eta}{\chi}\sum_{s=0}^{t-1}\frac{L(\wbf(s))}{r(s)}\right),
    \end{aligned}
  \]
  where we used the stepsize choice in the last inequality.
  Combining this with \eqref{equation:discrete_v_sep_ineq}, we have
  \begin{equation}
    \label{equation:discrete_v_telescope}
    \sqrt{V(t)}-\sqrt{V(0)}
    \gtrsim -\frac{\eta(1+O(\chi))}{\sqrt2}\sum_{s=0}^{t-1}\frac{L(\wbf(s))}{r(s)}.
  \end{equation}
  We now bound the sum using Lemma~\ref{lemma:discrete_l_r_bounds}.
  Defining
  \(
    T_0\defeq\frac{1}{L(0)\eta\gamma^2}\bigr[\exp\bigr(\frac{\rho+\log L(0)}{1+\gamma^{-2}}\bigl)-1\bigl]
  \), we have
  \begin{equation}
    \label{equation:discrete_r_lower_bound}
    r(t)\ge\begin{cases}
      \rho-\gamma^{-2}\log(1+L(0)\eta\gamma^2t) & \text{if $t\in[0,T_0]$,} \\
      \log(L(0)^{-1}+\eta\gamma^2t) & \text{if $t\ge T_0$.}
    \end{cases}
  \end{equation}
  At $t=T_0$, we have $r(T_0)\ge\frac{\rho\gamma^2-\log L(0)}{1+\gamma^2} \eqdef \underline{r} > 0$,
  where the strict positivity holds from the assumption $L(0)<e^{\rho\gamma^2}$.
  We split the sum into two intervals $s\in[0,T_0]\cup[T_0+1,t-1]$.
  For the first interval $s\in[0,T_0]$, we combine $r(s)\ge\underline{r}$
  with the upper bound of $L(s)$ from Lemma~\ref{lemma:discrete_l_r_bounds} to have
  \[
    \sum_{s=0}^{T_0}\frac{L(\wbf(s))}{r(s)}
    \le \frac{1}{\underline{r}}\left[L(0) + \int_0^{T_0}\frac{\rd s}{L(0)^{-1}+\eta\gamma^2s}\right]
    = \frac{1}{\underline{r}}\left[L(0) + \frac{\rho+\log L(0)}{\eta(1+\gamma^2)}\right].
  \]
  For the second interval $s\in[T_0+1,t-1]$, we combine \eqref{equation:discrete_r_lower_bound}
  with the upper bound of $L(s)$ to have
  \[
    \sum_{s=T_0+1}^{t-1}\frac{L(\wbf(s))}{r(s)}
    \le \int_{T_0}^{t}\frac{\rd s}{h(s)\log h(s)}
    = \frac{1}{\eta\gamma^2}\int_{h(T_0)}^{h(t)}\frac{\rd\varsigma}{\varsigma}
    = \frac{1}{\eta\gamma^2}\log\left(\frac{\log(L(0)^{-1}+\eta\gamma^2t)}{\frac{\rho\gamma^2-\log L(0)}{1+\gamma^{-2}}}\right),
  \]
  where we use the change of variable $\varsigma\defeq h(s)\defeq\log(L(0)^{-1}+\eta\gamma^2s)$.
  Thus, we combine these two intervals to have
  \[
    \sum_{s=0}^{t-1}\frac{L(\wbf(s))}{r(s)}
    \le \frac{(1+\gamma^2)L(0)}{\rho\gamma^2-\log L(0)} + \frac{\rho+\log L(0)}{\eta(\rho\gamma^2-\log L(0))} + \frac{1}{\eta\gamma^2}\log\left(\frac{\log(L(0)^{-1}+\eta\gamma^2t)}{\frac{\rho\gamma^2-\log L(0)}{1+\gamma^{-2}}}\right).
  \]
  By plugging this back into \eqref{equation:discrete_v_telescope}, we have
  \[
    \begin{aligned}
      &\sqrt{V(t)} \gtrsim \sqrt{V(0)} \\
      &- \frac{\eta(1+O(\chi))}{\sqrt2}\!\left[
        \frac{(1+\gamma^2)L(0)}{\rho\gamma^2-\log L(0)} + \frac{\rho+\log L(0)}{\eta(\rho\gamma^2-\log L(0))} + \frac{1}{\eta\gamma^2}\log\left(\frac{\log(L(0)^{-1}+\eta\gamma^2t)}{\frac{\rho\gamma^2-\log L(0)}{1+\gamma^{-2}}}\right)
      \right]\!.
    \end{aligned}
  \]
  By squaring both sides and relaxing the lower bound by $(a-b)^2\ge a^2-2ab$, we have
  \[
    V(t)
    \gtrsim V(0) - C_0 - \underbrace{\frac{\sqrt{2V(0)}(1+O(\chi))}{\gamma^2}}_{\eqdef C_1}\log\left(\log(L(0)^{-1}+\eta\gamma^2t)\right),
  \]
  where we can take a problem-dependent constant $C_0$.
  Lastly, to ensure $V(t)\ge\delta$, we need
  $\text{(lower bound)}\ge\delta$,
  which yields
  \[
    t <
      \frac{1}{\eta\gamma^2}\exp\left[\exp\left(\frac{V(0)-C_0-\delta}{C_1}\right)\right] - \frac{1}{L(0)\eta\gamma^2}.
  \]
\end{proof}

\begin{figure}[p]
  \centering
  \includegraphics[width=0.9\textwidth]{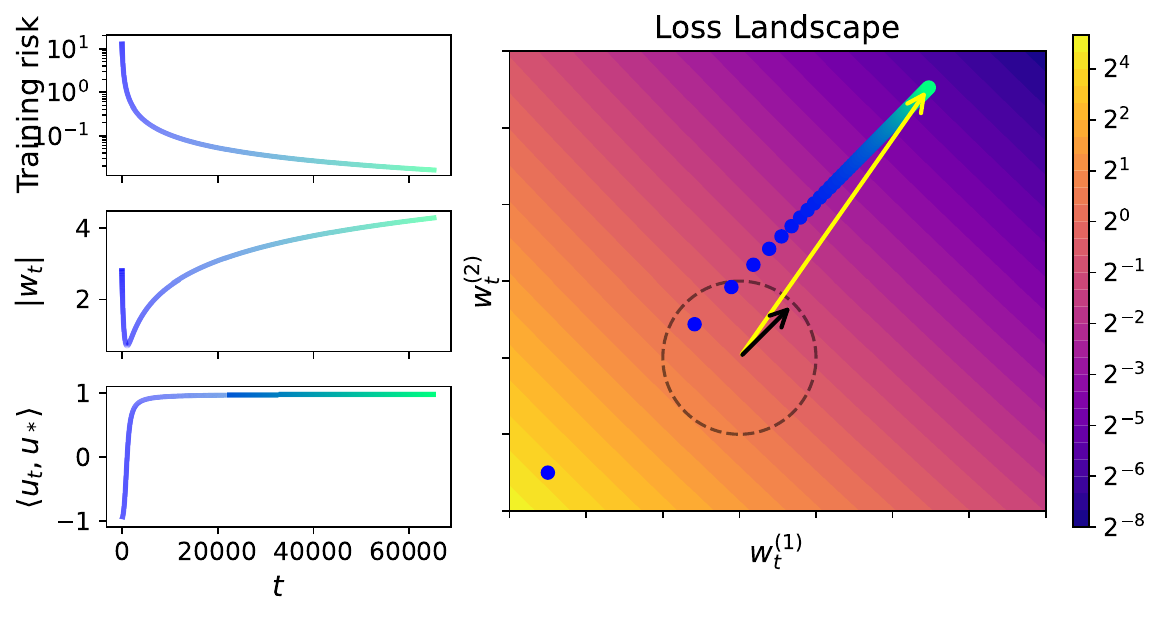} \\
  \includegraphics[width=0.6\textwidth]{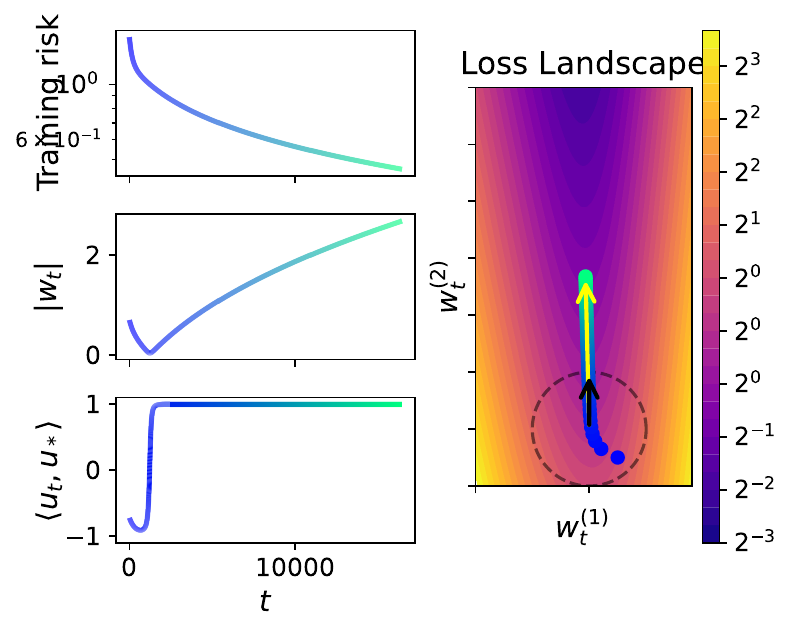}
  \caption{
    Simulation results with the unnormalized parameter trajectory $\wbf(t)$ depicted in the loss landscapes, unlike the tangential trajectory $\ubf(t)$ depicted in Figure~\ref{fig:simulation}.
    The simulation setup remains the same.
    We call the \emph{top} and \emph{bottom} setups, respectively.
  }
  \label{fig:simulation_full}
\end{figure}

\section{Simulation}
\label{section:simulation}

\paragraph{Expanded simulation results.}
We show simulation results under the same setup as in Figure~\ref{fig:simulation} but illustrate the unnormalized parameter trajectory $\wbf(t)$, instead of the tangential trajectory $\ubf(t)$.
The results are shown in Figure~\ref{fig:simulation_full}.
Again, the darker blue color in the (left) risk/radial/alignment curves and the (right) parameter trajectories indicates the earlier iteration $t$, and the lighter blue colors indicate the later iteration $t$.
In the loss landscapes, the parameter $\wbf(t)$ is plotted every $70$ and $50$ epochs for each dataset, respectively.
Evidently, the parameter trajectories quickly get away from initialization (corresponding to the ``scattered'' trajectory), and spend significantly more time when approaching the max-margin direction (corresponding to the smooth cyan trajectory).
The simulation details are as follows:
\begin{itemize}
  \item Top setup:
  \begin{itemize}
    \item Initialization $\wbf(0)=[-2.5, -1.5]^\top$
    \item Iterations $T=2^{16}$; stepsize $\eta=2^{-10}$
    \item Max-margin direction $\ubf_*=[1,1]^\top/\sqrt2$
    \item Dataset: generate $n=50$ points by sampling $\zbf\sim\Ncal(0, (0.3)^2I_2)$ and perturbing $\zbf$ with $+(2.5-\inpr{\ubf_*}{\zbf})\ubf_*$; margin $\gamma\sim0.95$
  \end{itemize}
  \item Bottom setup:
  \begin{itemize}
    \item Initialization $\wbf(0)=[-1.5, -0.5]^\top$
    \item Iterations $T=2^{14}$; stepsize $\eta=2^{-10}$
    \item Max-margin direction $\ubf_*=[0,1]^\top$
    \item Dataset: $\zbf_1=[1, 0.2]^\top$, $\zbf_2=[-2, 0.2]^\top$, $\zbf_3=[0.2, 1]^\top$, $\zbf_4=[2, 0.4]^\top$; margin $\gamma=0.2$
  \end{itemize}
\end{itemize}

\paragraph{Alignment time.}
We sweep margin $\gamma \in \{0.1, 0.2, 0.3, 0.4, 0.5\}$ over 2D datasets ($\wbf_* = [1,0]^\top$, $n=50$ points).
Two support vectors are placed at exactly margin $\gamma$, at $[\gamma, \varepsilon]^\top$ and $[-\gamma,-\varepsilon]^\top$ with $\varepsilon=0.01$, and the remaining points have margin strictly greater than $\gamma$; all points are normalized to $\|\xbf_i\| \le 1$.
The weights are initialized as $\wbf(0) \sim 0.01 \cdot \mathcal{N}(\mathbf{0}, \Ibf)$, independent of the data seed.
We use the exponential loss, stepsize $\eta = 2^{-10}$, and $T = 2^{14}$ iterations, and record the first iteration at which the alignment $\langle \wbf(t)/\|\wbf(t)\|, \wbf_* \rangle$ reaches each of the thresholds $\gamma$ (the standard margin threshold), $\sqrt{\gamma}$ (weak yet non-trivial margin threshold), and $0.999$ (the asymptotic convergence threshold).
Results are shown in Table~\ref{tab:alignment_time}, telling that (i) the weak alignment time (up to $\gamma$ and $\sqrt\gamma$) is much faster than the near-perfect alignment up to $0.999$;
(ii) the alignment up to $\sqrt\gamma$ does not significantly slow down with smaller $\gamma$ (hence more difficult datasets).

\begin{table}[p]
  \centering
  \caption{Alignment time (iterations) to reach each threshold, as a function of the dataset margin $\gamma$.}
  \vspace{2pt}
  \begin{tabular}{cc|ccc}
    \toprule
    Margin $\gamma$ & $\sqrt{\gamma}$ & Time to $\gamma$ & Time to $\sqrt{\gamma}$ & Time to $0.999$ \\
    \midrule
    $0.1$ & $0.316$ & $36$ & $39$ & $195$ \\
    $0.2$ & $0.447$ & $27$ & $30$ & $165$ \\
    $0.3$ & $0.548$ & $22$ & $25$ & $144$ \\
    $0.4$ & $0.632$ & $19$ & $22$ & $128$ \\
    $0.5$ & $0.707$ & $17$ & $20$ & $115$ \\
    \bottomrule
  \end{tabular}
  \label{tab:alignment_time}
\end{table}

\end{document}

%% file: preamble.tex
\usepackage{amsmath}
\usepackage{amssymb}
\usepackage{amsthm}
\usepackage{bm}
\usepackage{booktabs}
\usepackage[font=footnotesize,labelfont=bf]{caption}
\usepackage{dsfont}
\usepackage{enumitem}
\usepackage[T1]{fontenc}
\usepackage{mathtools}
\usepackage{mathrsfs}
\usepackage{multirow}
\usepackage{pifont}
\usepackage{rotating}
\usepackage[group-separator={,},group-minimum-digits={3}]{siunitx}
\usepackage{subcaption}
\usepackage[skins,breakable]{tcolorbox}
\tcbset{
  highlightbox/.style={
    colback=blue!5,colframe=blue!5,boxrule=0pt,breakable,
    left=4pt,right=4pt,top=4pt,bottom=4pt,before skip=4pt,after skip=4pt
  }
}
\usepackage{thmtools}
\usepackage{tikz}
\usetikzlibrary{arrows.meta,calc,external}
\usepackage{pgfplots}
\pgfplotsset{compat=1.18}
\usepackage{thm-restate}
\usepackage{wrapfig}

\setitemize{itemsep=-1pt,topsep=3pt}
\renewcommand{\tilde}{\widetilde}

\newcommand{\Ncal}{\mathcal{N}}

\newcommand{\Scal}{\mathcal{S}}

\newcommand{\Rbb}{\mathbb{R}}
\newcommand{\Sbb}{\mathbb{S}}

\newcommand{\Zbb}{\mathbb{Z}}

\newcommand{\Ibf}{\mathbf{I}}

\newcommand{\mbf}{\mathbf{m}}

\newcommand{\pbf}{\mathbf{p}}

\newcommand{\ubf}{\mathbf{u}}
\newcommand{\vbf}{\mathbf{v}}
\newcommand{\wbf}{\mathbf{w}}
\newcommand{\xbf}{\mathbf{x}}

\newcommand{\zbf}{\mathbf{z}}

\newcommand{\alphabf}{\bm{\alpha}}

\newcommand{\xibf}{\bm{\xi}}

\newcommand{\onebf}{\mathbf{1}}

\newcommand{\defeq}{\coloneqq}
\newcommand{\eqdef}{\eqqcolon}
\renewcommand{\subset}{\subseteq}

\DeclareMathOperator{\Span}{\mathrm{span}}

\newcommand{\inpr}[2]{\left\langle{#1},{#2}\right\rangle}

\newcommand{\rd}{\mathrm{d}}

\renewcommand{\epsilon}{\varepsilon}

\newtheorem{theorem}{Theorem}

\newtheorem{assumption}[theorem]{Assumption}
\newtheorem{lemma}[theorem]{Lemma}

\newtheorem{corollary}[theorem]{Corollary}
\newtheorem{remark}[theorem]{Remark}

\theoremstyle{remark}

%% file: figures/geometry_diagram.tex
\includegraphics{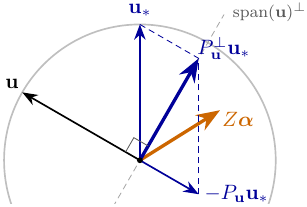}

%% file: figures/geometry_diagram2.tex
\includegraphics{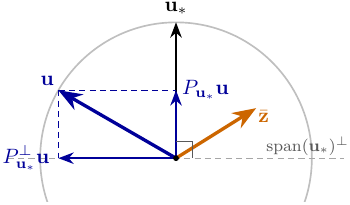}

%% file: figures/stages_diagram.tex
\includegraphics{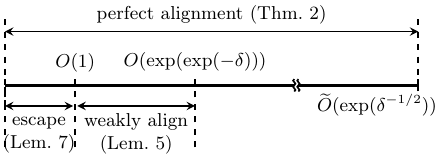}

%% file: figures/kappa_plot.tex
\includegraphics{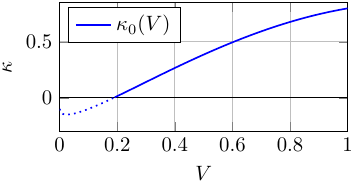}